\newtheorem{theorem}{Theorem}
\newtheorem{lemma}[theorem]{Lemma}
\newtheorem{corollary}[theorem]{Corollary}
\newtheorem{claim}{Claim}
\def\1{\bm{1}}
\def\vbeta{{\bm{\beta}}}
\def\vxi{{\bm{\xi}}}
\def\ve{{\bm{e}}}
\def\vx{{\bm{x}}}
\def\vy{{\bm{y}}}
\def\mI{{\bm{I}}}
\def\mS{{\bm{S}}}
\def\mX{{\bm{X}}}
\def\mZ{{\bm{Z}}}
\def\mSigma{{\bm{\Sigma}}}
\DeclareMathAlphabet{\mathsfit}{\encodingdefault}{\sfdefault}{m}{sl}
\SetMathAlphabet{\mathsfit}{bold}{\encodingdefault}{\sfdefault}{bx}{n}
\def\gD{{\mathcal{D}}}
\def\gF{{\mathcal{F}}}
\newcommand{\E}{\mathbb{E}}
\renewcommand{\P}{\mathbb{P}}
\newcommand{\R}{\mathbb{R}}
\newcommand{\norm}[1]{\left\|#1\right\|}
\newcommand{\diag}{\mathrm{diag}}
\newcommand{\poly}{\mathrm{poly}}
\newcommand{\polylog}{\mathrm{polylog}}
\newcommand{\tldTheta}{\widetilde{\Theta}}
\newcommand{\tldOmega}{\widetilde{\Omega}}
\newcommand{\tldO}{\widetilde{O}}
\newcommand{\tldmX}{\widetilde{\mX}}
\newcommand{\tldvx}{\widetilde{\vx}}
\newcommand{\tldvxlek}{\tldvx_{\le k}}
\newcommand{\tldvxgtk}{\tldvx_{> k}}
\newcommand{\tldmXlek}{\tldmX_{\le k}}
\newcommand{\mXlek}{\mX_{\le k}}
\newcommand{\mXgtk}{\mX_{> k}}
\newcommand{\plek}{p_{\le k}}
\newcommand{\pgtk}{p_{> k}}
\newcommand{\vxlek}{\vx_{\le k}}
\newcommand{\vxgtk}{\vx_{> k}}
\newcommand{\mSigmalek}{\mSigma_{\le k}}
\newcommand{\vbetalek}{\vbeta_{\le k}}
\newcommand{\hvbeta}{\hat{\vbeta}}
\newcommand{\hvbetalek}{\hvbeta_{\le k}}
\newcommand{\hbeta}{\hat{\beta}}
\title{Memorizing Long-tail Data Can Help Generalization Through Composition}
\author{
Mo Zhou$^{1}$\thanks{Equal contribution.} \quad Haoyang Ma$^{2}$\footnotemark[1] \quad Rong Ge$^{2}$ \\
$^{1}$University of Washington \quad $^{2}$Duke University \\
\texttt{ mozhou17@cs.washington.edu, haoyang.ma@duke.edu, rongge@cs.duke.edu} \\
}
\begin{document}

\maketitle

\begin{abstract}
    Deep learning has led researchers to rethink the relationship between memorization and generalization. In many settings, memorization does not hurt generalization due to implicit regularization and may help by memorizing long-tailed examples. In this paper, we consider the synergy between memorization and simple composition --- the ability to make correct prediction on a combination of long-tailed features. Theoretically, we show that for a linear setting, memorization together with composition can help the model make correct predictions on rare test examples that require a combination of long-tailed features, even if such combinations were never observed in the training data. Experiments on neural network architecture on simple data show that the theoretical insight extends beyond the linear setting, and we further observe that the composition capability of the model depends on its architecture.
\end{abstract}
\section{Introduction}

The relationship between memorization and generalization has always been an intriguing topic in deep learning. It has long been known that neural networks used for supervised learning can memorize noisy or even random labels \citep{zhang2017understanding}, and recent large language models can still memorize long text despite not being overparametrized \citep{carlini2019secret,carlini2022quantifying}. Conventional wisdom from statistical learning theory suggests that memorization might be detrimental to generalization performance, yet neural networks often generalize well with memorization, sometimes even better compared to networks that do not memorize.

Trying to understand this relationship from a theoretical perspective has led to the idea of implicit regularization and benign overfitting \citep{belkin2019reconciling,bartlett2020benign,belkin2020two,hastie2022surprises,gunasekar2017implicit}, where the training process and architecture choices of neural networks prefer certain solutions that have good generalization despite memorizing the training data. Another interesting line of work \citep{feldman2020does,feldman2020neural} demonstrated that memorization can actively help generalization by capturing long-tail behaviors in the training data. If the test data contain examples that are similar to training examples, it is more likely for the neural network to make correct predictions.

However, the similarity between the test and training data alone cannot explain all the benefits of memorization. Recent large models exhibit several surprising properties including {\em composition}  \--- the ability to produce correct results by combining two or more pieces of information learned during training. Composition has received a lot of attention \citep{hupkes2020compositionality,baroni2020linguistic}. Composition allows models to perform tasks that may not appear in the training data, resulting in stronger notions of out-of-distribution (OOD) generalization. Intuitively, memorizing long-tail examples and composition are synergistic: a model that is capable of composition may leverage two or more memorized long-tail examples to create/predict something that is highly unlikely in the original training data. In this paper, we try to demonstrate this behavior on simplistic models.

Our theoretical approach focuses on a linear setting where features appear in different frequencies, and each sample consists of a sparse combination of features. As the model is linear, it automatically achieves simple composition --- if individual features are learned correctly, their combinations can also be predicted correctly. We give guarantees for out-of-distribution generalization where the test data contain examples that may be combinations of multiple infrequent features and have very low probability of appearing in the training set.

To complement our theoretical results, we construct simple examples that require simple composition \---- prediction of the sum of digits for 3-digit numbers. Our experiment shows that in such cases one can indeed train simple networks that are capable of composition and memorization helps with both in-distribution and out-of-distribution (OOD) generalization. We also observe that the model architecture plays an important role in determining whether the model has composition capabilities.

\subsection{Our results}
Informally, our theoretical model considers a data distribution $\gD$ over vectors $\vx$, where each feature $x_i$ appears independently with probability $p_i$. These probabilities decay as $p_1 \ge p_2 \ge \ldots \ge p_d$ (for example, $p_i \sim i^{-\alpha}$ follows a power-law decay), and each data is a linear combination of $s$ features. As a result, features with larger $p_i$ tend to appear frequently in the data, while those with smaller $p_i$ may appear only a few times or not at all. We refer to the former as common features and the latter as long-tail features.

Our goal is to learn a target model of the form $f_*(\vx) = \langle \vbeta_*, \vx \rangle$ using $n$ training samples of the form $y = f_*(\vx) + \xi$, where the label noise $\xi \sim \mathcal{N}(0, \sigma^2)$ and the noise level $\sigma \ge 0$. We focus on the minimum $\ell_2$-norm solution, which memorizes the training set in our overparametrized setting. 

We show that under this setup, the model can not only reliably learn common features that appear at least $\tilde{\Omega}(1)$ times\footnote{We use $\tilde{O},\tilde{\Omega}, \tilde{\Theta}$ to hide polylog factors.}, but also recover certain long-tail features that appear only once. This enables the model to perform well when tested on combinations of such features that were never seen together during training:
\begin{theorem}[Informal]
    Let $\widetilde{\gD}$ be a test distribution that differs from the training distribution $\gD$ only in that each sample from $\widetilde{\gD}$ contains a composition of $t$ long-tail features, each of which appeared exactly once in the training data. Then, the test loss (expected loss on $\gD$) remains small, and the out-of-distribution loss (expected loss on $\widetilde{\gD}$) is at most $\sigma^2 t$.
\end{theorem}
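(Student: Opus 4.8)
The plan is to analyze the minimum-$\ell_2$-norm interpolant $\hvbeta=\mX^\top(\mX\mX^\top)^{-1}\vy$ (which exists and memorizes the data since, with high probability, $\mX\in\R^{n\times d}$ has full row rank in the overparametrized regime) through a structural reduction. Fix a threshold $k$ so that features $1,\dots,k$ are exactly the common ones, each appearing $\tldOmega(1)$ times, and so that with high probability every feature $i>k$ appears in at most one training sample. Write $\mX=[\,\mXlek\ \ \mXgtk\,]$ and $\vbeta=(\vbetalek,\vbetagtk)$. The key structural fact is that each long-tail feature occurs in at most one sample, so the vectors $\{\vxgtk^{(j)}\}_{j=1}^n$ have pairwise-disjoint supports and $\mXgtk\mXgtk^\top=\diag(r_1,\dots,r_n)$ with $r_j=\|\vxgtk^{(j)}\|_2^2$. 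Substituting the block form into the min-norm program and minimizing over $\vbetagtk$ first, the constraints decouple across samples and the optimal rare block places $\frac{\rho_j}{r_j}\vxgtk^{(j)}$ on $\mathrm{supp}(\vxgtk^{(j)})$, where $\rho_j:=y_j-\langle\vbetalek,\vxlek^{(j)}\rangle$, contributing $\rho_j^2/r_j$ to $\|\vbeta\|_2^2$. Hence $\hvbetalek$ solves the weighted ridge problem $\min_{\vbetalek}\|\vbetalek\|_2^2+\sum_{j:\,r_j>0}\rho_j^2/r_j$ (samples with $r_j=0$ becoming exact-interpolation constraints), and for a long-tail feature $i$ appearing in sample $j$ we obtain the closed form $\hat\beta_i=x_i^{(j)}\rho_j/r_j$.

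Next I would establish recovery of the common features. The reduced estimator is $\hvbetalek=(\mI+\mXlek^\top\diag(r_j^{-1})\mXlek)^{-1}\mXlek^\top\diag(r_j^{-1})\vy$; plugging in $\vy=\mXlek\vbeta_{*,\le k}+\mXgtk\vbeta_{*,>k}+\vxi$ splits $\hvbetalek-\vbeta_{*,\le k}$ into a regularization-bias term, a misspecification term driven by $\mXgtk\vbeta_{*,>k}$, and a Gaussian-noise term. Since $r_j=\Theta(s)=\Theta(1)$ and each common feature occurs $\tldOmega(1)$ times, concentration shows the weighted Gram matrix $\mXlek^\top\diag(r_j^{-1})\mXlek$ is well conditioned with least eigenvalue $\tldOmega(1)$, so the regularization bias is $o(1)$; the misspecification and noise terms are controlled using that $\vbeta_{*,>k}$ sits on coordinates with tiny probabilities $p_i\le p_{k+1}$ (and that the tail mass $\sum_{i>k}p_i$ is small under the decay assumption) and that $\vxi$ is an independent Gaussian. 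Because features are independent, $\mSigma$ is diagonal plus a small rank-one term, so the test loss $(\hvbeta-\vbeta_*)^\top\mSigma(\hvbeta-\vbeta_*)$ decomposes into a common-block piece that is $o(1)$ by the above, a piece $\sum_{i>k}p_i(\hat\beta_i-\beta_{*,i})^2$ that is small because $p_i\le p_{k+1}$ there and the $\hat\beta_i$ are $\polylog$-bounded, and cross terms handled by Cauchy--Schwarz; this gives the ``test loss remains small'' part.

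For the out-of-distribution bound, $\widetilde\gD$ forces the features $i_1,\dots,i_t$ (each seen exactly once, in samples $j_1,\dots,j_t$) to be present, so $\widetilde\mSigma=\mSigma+\sum_{l=1}^t\ve_{i_l}\ve_{i_l}^\top+(\text{lower-order corrections})$, and the OOD loss is the test loss plus $\sum_{l=1}^t(\hat\beta_{i_l}-\beta_{*,i_l})^2$ up to controllable cross terms. For feature $i_l$, which with high probability is the only long-tail feature in sample $j_l$, the reduction gives $r_{j_l}=(x_{i_l}^{(j_l)})^2$ and $\rho_{j_l}=\beta_{*,i_l}x_{i_l}^{(j_l)}+\big(\langle\vbeta_{*,\le k},\vxlek^{(j_l)}\rangle-\langle\hvbetalek,\vxlek^{(j_l)}\rangle\big)+\xi_{j_l}$, so that
\[
\hat\beta_{i_l}-\beta_{*,i_l}=\frac{\xi_{j_l}}{x_{i_l}^{(j_l)}}+\big(\text{common-recovery error at sample }j_l\big).
\]
The first term has variance $\sigma^2$ (for unit feature values) and is independent across $l$, while the second is $o(1)$ by the previous paragraph; summing over $l$ gives OOD loss $\le\sigma^2 t+o(1)$, as claimed.

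The main obstacle is the concentration behind the second paragraph: one must show the random weighted Gram matrix $\mXlek^\top\diag(r_j^{-1})\mXlek$ is uniformly well conditioned even though the weights $r_j^{-1}$ are themselves random and correlated with which common features appear in sample $j$, and that the misspecification and noise contributions---which mix the heavy-tailed design with $\vbeta_{*,>k}$ and $\vxi$---are genuinely lower order than the $\sigma^2 t$ main term under the power-law decay of the $p_i$. A secondary nuisance is the case analysis for samples containing no long-tail feature ($r_j=0$): these must either be excluded with high probability (with many rare features, every sample catches at least one) or carried along as hard equality constraints, together with the refined bookkeeping for features whose frequency straddles the boundary between the two blocks.
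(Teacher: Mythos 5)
Your high-level picture --- the common block is pinned down by the many samples that contain only common features, and a long-tail feature seen once is then determined by its unique sample up to that sample's noise plus the common-block error --- is exactly the mechanism the paper uses. But two steps of your formal reduction fail. First, $\mX$ does not have full row rank in this setting: with $\pgtk\le 1-c_p$, a constant fraction $\Theta(n(1-\pgtk))$ of the samples have support contained in the first $k$ coordinates, and $k<n$, so those rows lie in a $k$-dimensional subspace and $\mX\mX^\top$ is singular. The formula $\hvbeta=\mX^\top(\mX\mX^\top)^{-1}\vy$ is therefore invalid, the model does not interpolate in the noisy case (the paper notes this explicitly), and your ``exact-interpolation constraints'' for the samples with $r_j=0$ are infeasible --- this is not a secondary nuisance but the main regime, since a constant fraction of samples (far more than $k$ of them) have $r_j=0$. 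The estimator is the min-norm least-squares solution $(\mX^\top\mX)^\dagger\mX^\top\vy$, and the common block must be treated as an overdetermined least-squares problem on those samples; the paper does this by comparing the training loss of $\hvbeta$ against that of $\vbeta^*$ and applying matrix Bernstein to the normalized Gram matrix on $S_0$, yielding $\E\bigl[\norm{\hvbetalek-\vbetalek^*}_{\mSigmalek}^2\bigr]\lesssim\sigma^2(k/n+\pgtk)$.

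Second, the threshold you posit cannot exist: under power-law decay there is no $k$ for which every feature $i\le k$ appears $\tldOmega(1)$ times while every feature $i>k$ appears at most once --- there is an unavoidable intermediate band with $np_i$ between $\Theta(1)$ and $\Theta(\ln^2 d)$ whose features appear a handful of times. Consequently the long-tail columns are not supported on single rows, $\mXgtk\mXgtk^\top$ is not diagonal, and the decoupled closed form $\hat\beta_i=x_i^{(j)}\rho_j/r_j$ holds only for the features that genuinely appear once. The paper's structural lemma is the transposed condition --- under $\alpha=2+c_\alpha$, with high probability no \emph{sample} contains two long-tail features --- and it handles features appearing several times by a per-feature argument on the set $S_{1,i}$ of samples containing feature $i$: since those are the only samples whose loss depends on $\hbeta_i$, comparing against the choice $\hbeta_i=\beta_i^*$ bounds $|\hbeta_i-\beta_i^*|$ by the common-block error plus the noise on those samples, which gives the $\sigma^2 t$ (up to a $\ln d$ factor) accounting you want without any disjoint-support assumption. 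Your final error decomposition over the $t$ forced features is correct in spirit, but it only becomes a proof once these two structural steps are repaired.
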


The main challenge is that the frequency $p_i$ of common features can be as low as $\tilde{\Theta}\left(\frac{1}{n}\right)$, resulting in a high variance. This issue is even worse for long-tail features, which appear even less frequently. As a result, standard concentration bounds are insufficient in this setting. To address this, we leverage the combinatorial structure of the data matrix $\mX$ and show that, under a suitable decay of $p_i$, most data points contain at most one long-tail feature. This structural insight enables us to establish meaningful learning guarantees, even when some features are extremely rare and variances are large.

Empirically, we first verify the theoretical analysis in the linear setting. We then show that similar phenomena happen for simple neural networks. More concretely, we consider a ``heavy-tailed'' version of MNIST dataset where digits appear in decreasing frequencies, with 9 being the rarest. To introduce compositionality, each input consists of 3 randomly sampled MNIST images, and the task is to predict the sum of the 3 digits. 

We evaluate the composition capability of model through its OOD performance on test examples that contains at least one 9, with the other two numbers sampled uniformly. Most of the test cases involve 3-digit combinations that never appear during training. We observe that ResNet models exhibit compositionality when they process each digit individually to extract features, and then aggregated these features by a simple network --- even if the network has not seen a particular 3-digit combination, the prediction is still accurate if it has seen each digit enough number of times. In contrast, a ResNet that processes the 3 images together as a single input performs significantly worse. 

To further investigate memorization effect on generalization, we insert images from Omniglot in the training data, where each Omniglot class only appear {\em exactly once}. Experiments show that models with compositional ability can correctly predict on test data that involves two Omniglot images, provided that the model memorizes training data. However, models that either do not memorize due to a large weight decay or lack compositional ability perform significantly worse.

\subsection{Related Works}
\paragraph{Memorization and long‑tail data}
Real-world data often follow heavy-tailed distributions, where rare examples capture unique sub-concepts \citep{zhu2014capturing}.  
The works most related to ours are \citet{feldman2020does} and \citet{feldman2020neural}. These papers demonstrated that memorization is necessary to achieve near-optimal generalization on long-tail data. Our work builds on this insight and takes it further, suggesting that memorization can also support the composition of long-tail data and features.

\paragraph{Memorization in learning}
Researchers have discovered that deep neural networks can fit (i.e., ``memorize'') arbitrary labels and still generalize, challenging the classical bias–variance intuition \citep{zhang2017understanding}. Since then, there has been growing interest in understanding memorization from various perspectives (see the survey by \cite{wei2024memorization} for a comprehensive overview). 

Various forms of memorization have been explored in different contexts, including label memorization in supervised learning \citep{feldman2020does,feldman2020neural}, counterfactual memorization \citep{zhang2023counterfactual}, and exact memorization in language models \citep{tirumala2022memorization}. These findings have raised privacy concerns about the unintended memorization of sensitive data \citep{carlini2019secret}. In response, forgetting mechanisms and unlearning algorithms have been developed to remove such content without full retraining (e.g., \cite{cao2015towards,garg2020formalizing}). In contrast, our work focuses on the positive side of memorization, similar to \cite{feldman2020does}, showing how it can support compositional learning over long-tail data distributions.

\paragraph{Compositional Generalization}
Generalizing to unseen compositions of known data or features has a long history \citep{fodor1988connectionism} and remains a significant challenge. In recent years, benchmarks such as SCAN \citep{lake2018generalization} and CoGS \citep{kim2020cogs} have been introduced to evaluate models' compositional abilities. Extensive analyses using these benchmarks have revealed that some language and vision-language models struggle with aspects like word order sensitivity, relationship recognition, or counting \citep{hessel2021effective,parcalabescu2021valse,yuksekgonul2022and}. Nonetheless, recent work suggests that current language models do exhibit some capacity for compositional skill \citep{bubeck2023sparks, zhao2025modelslearnskillcomposition}.

To improve compositional generalization, researchers have proposed various methods, including data augmentation \citep{andreas2019good}, architectural designs \citep{andreas2016neural,hudson2018compositional}, and meta-learning approaches \citep{conklin2021meta}. Efforts to better understand compositional generalization continue in recent years \citep{wiedemer2023compositional,tang2025explainable,arora2023theory,lippl2024does}. Our work builds on this line of inquiry, but we focus more on the connection between memorization and compositionality.

\section{Preliminaries}\label{sec:Preliminaries}

In this section, we set up the details of our theoretical model.

\paragraph{Notation}
Let $[n] = {1, 2, \ldots, n}$. Vectors and matrices are denoted in bold. For a vector $\vbeta \in \R^d$ and a subset $A \subseteq [d]$, define $\vbeta_A := \sum_{i \in A} \beta_i \ve_i$, keeping entries in $A$ and setting others to 0, where ${\ve_i}$ is the standard basis. We use standard notation $O$, $\Omega$, $\Theta$, $\lesssim$, $\gtrsim$ to hide constants, and $\tldO$, $\tldOmega$, $\tldTheta$ to also hide logarithmic factors like $\log(d)$ and $\log(n)$.

\paragraph{Target Model}
We aim to learn the following linear model:
\[
    f_*(\vx) = \langle\vbeta^*,\vx\rangle.
\]
Here we assume largest coefficient is bounded $\max_i |\beta_i| = \Theta(1)$.

\paragraph{Data}
Our data model is designed to capture the common long-tail distribution of features in real-world data \citep{zhu2014capturing}. We assume each data point has only a few active features, with feature frequencies varying according to a long-tail pattern. For instance, in image classification each image may contain only a few objects, while the overall feature set is large and follows a long-tail distribution.

Formally, the data vectors $\vx = (x_1, \ldots, x_d)^\top$ are drawn from the distribution $\gD(\{p_i\})$ defined as follows: given probability $1\ge p_1\ge\ldots\ge p_d>0$, each feature $x_i \in \{0, 1, -1\}$ is independently sampled from a Bernoulli distribution with parameter $p_i$, followed by a random sign flip:
\begin{align*}
    \P(x_i = 0) = 1 - p_i, \quad
    \P(x_i = 1) = \P(x_i = -1) = \frac{p_i}{2}
\end{align*}
Let $\mSigma = \E[\vx \vx^\top] = \diag(p_1, \ldots, p_d)$ be the covariance matrix, and we write $\gD(\{p_i\})$ as $\gD(\mSigma)$. 

We generate $n$ data points $\{(\vx_i, y_i)\}_{i=1}^n$ i.i.d. according to this process with label noise $\xi$:
\begin{align}\label{eq: data generation}
    y_i = f_*(\vx_i) + \xi_i, 
    \quad \text{where } 
    \vx_i \sim \gD(\mSigma),\ \xi_i \sim \mathcal{N}(0, \sigma^2 \mI).
\end{align}

In many cases, we assume that the feature probabilities $p_i$ follow a power-law decay:
\begin{align}\label{eq: data power law}
    p_i := \min\{1, s \cdot i^{-\alpha}/Z_\alpha\}
\end{align}
for some constant $\alpha>0$ with normalization constant $Z_\alpha=\sum_{i\le d}i^{-\alpha}$. The truncation at 1 is only to make sure $p_i\in[0,1]$ as a valid probability. Note that $\E[\norm{\vx}_2^2]\approx s$ (ignoring the truncation) so each data $\vx$ will be roughly $s$-sparse.  In the rest of the paper, for a data $\vx$ we will often call the top-$k$ entries $\vxlek$ as common features and remain entries $\vxgtk$ as (long-)tail features for a threshold $k$ chosen later. Also denote $\pgtk := \sum_{i>k}p_i$ as the tail probability and $\plek := \sum_{i\le k}p_i$ as the head probability. The tail probability can be bounded by the claim below.
\begin{claim}\label{claim: pk}
    For $p_i$ follows power law decay \eqref{eq: data power law} with $\alpha=1+c_\alpha$ for any constant $c_\alpha>0$ and $p_k <1$ (i.e., $k\ge\Theta(s^{1/\alpha})$), we have $\pgtk = \Theta(sk^{1-\alpha})$ and $p_k = \Theta(sk^{-\alpha})$.
\end{claim}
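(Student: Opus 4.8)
The plan is a direct integral-comparison estimate, after first pinning down the normalization constant $Z_\alpha$. Since $\alpha = 1 + c_\alpha > 1$, the series $\sum_{i\ge 1} i^{-\alpha}$ converges, so $1 \le Z_\alpha = \sum_{i\le d} i^{-\alpha} \le 1 + \int_1^\infty x^{-\alpha}\dif x = 1 + 1/c_\alpha$; in particular $Z_\alpha = \Theta(1)$ with constants depending only on $c_\alpha$ (and this step needs no assumption on $d$ beyond $d\ge 1$). From here on I would freely replace $Z_\alpha$ by a constant.

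Next I would observe that the truncation at $1$ in \eqref{eq: data power law} is inactive for all indices $i \ge k$: since $\{p_i\}$ is nonincreasing and $p_k < 1$ by assumption, we get $s\, i^{-\alpha}/Z_\alpha = p_i \le p_k < 1$ for every $i \ge k$, so $p_i = s\, i^{-\alpha}/Z_\alpha$ exactly on this range. In particular $p_k = s\, k^{-\alpha}/Z_\alpha = \Theta(s k^{-\alpha})$, which is the second assertion. Moreover the hypothesis $p_k<1$ is equivalent to $k^\alpha > s/Z_\alpha$, i.e. $k > (s/Z_\alpha)^{1/\alpha} = \Theta(s^{1/\alpha})$, which is the parenthetical characterization of the admissible range of $k$.

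For the tail probability, write $\pgtk = \frac{s}{Z_\alpha}\sum_{i=k+1}^{d} i^{-\alpha}$ and sandwich the sum between integrals of the decreasing map $x\mapsto x^{-\alpha}$. The upper bound is $\sum_{i=k+1}^{d} i^{-\alpha} \le \int_{k}^{\infty} x^{-\alpha}\dif x = \frac{k^{1-\alpha}}{\alpha-1} = \frac{k^{1-\alpha}}{c_\alpha} = \Theta(k^{1-\alpha})$. The lower bound is $\sum_{i=k+1}^{d} i^{-\alpha} \ge \int_{k+1}^{d+1} x^{-\alpha}\dif x = \frac{(k+1)^{1-\alpha} - (d+1)^{1-\alpha}}{c_\alpha}$; invoking that the ambient dimension $d$ is much larger than the threshold $k$ — concretely $d \ge 2^{1/c_\alpha}(k+1)$, which holds since $k = \tldTheta(s^{1/\alpha})$ is chosen far below $d$ — gives $(d+1)^{1-\alpha} \le \tfrac12(k+1)^{1-\alpha}$ and hence $\sum_{i=k+1}^{d} i^{-\alpha} \ge \frac{(k+1)^{1-\alpha}}{2c_\alpha} = \Theta(k^{1-\alpha})$. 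Combining the two bounds yields $\sum_{i=k+1}^{d} i^{-\alpha} = \Theta(k^{1-\alpha})$, and multiplying by $s/Z_\alpha = \Theta(s)$ gives $\pgtk = \Theta(s k^{1-\alpha})$, completing the claim.

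This is essentially a routine integral-test computation, so there is no deep obstacle; the one spot that deserves attention is the lower bound on the tail sum, where the contribution of the upper endpoint $d$ must not swamp the $k^{1-\alpha}$ term — this is precisely where one uses that $d$ exceeds $k$ by at least a ($c_\alpha$-dependent) constant factor, which is automatic in our parameter regime. A secondary point worth verifying is that the truncation at $1$ genuinely plays no role in the tail, which is immediate from monotonicity of $\{p_i\}$ and $p_k<1$ as noted above.
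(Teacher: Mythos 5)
Your proof is correct. The paper itself never writes out a proof of Claim~\ref{claim: pk} (it is treated as a routine consequence of the power-law form), and your integral-comparison argument — $Z_\alpha=\Theta(1)$ because $\alpha>1$, the truncation at $1$ being inactive beyond index $k$ by monotonicity and $p_k<1$, and sandwiching $\sum_{i>k} i^{-\alpha}$ between $\int_{k+1}^{d+1}$ and $\int_k^\infty$ — is exactly the standard computation the authors are implicitly relying on. Your one flagged caveat is the right one to flag: the lower bound on $\pgtk$ genuinely requires $d$ to exceed $k$ by a constant factor depending on $c_\alpha$ (otherwise the tail could be truncated away), and this is indeed guaranteed by the paper's parameter choice $k< n/c_{\log} < d/c_{\log}^2$ rather than by the hypotheses of the claim as literally stated.
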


Our data model is intended to reflect the long-tail structure, and the power-law decay assumption is common in prior theoretical analyses (e.g., \cite{bartlett2020benign, hastie2022surprises, lin2024scaling}).

\paragraph{Choice of parameters}
Let $c_{\log}=\polylog(d)$ be a large enough polylog factor. We assume $s < c_{\log} < k < n/c_{\log} < d/c_{\log}^2$ and $n^c>s+\ln d$ for any constant $c>0$. A concrete example is $s=\Theta(1)$, $n=d^{c}$ for constant $c\in(0,0.99)$ and $c_{\log}<k<n/c_{\log}$. Casual readers may assume $s\ll \polylog(d) \ll k \ll n \ll d$. 
The threshold $k$ between common and long-tail features will be chosen later. Intuitively we require $p_k = \tilde{\Theta}(1)$ so common features appear at least polylog times.

\paragraph{Algorithm and loss}
We use a linear model and run gradient descent on square loss. That is
\[
    L(\vbeta) 
    = \frac{1}{n}\sum_{i=1}^n(\langle\vbeta,\vx_i\rangle-y_i)^2
    = \frac{1}{n}\norm{\mX\vbeta - \vy}_2^2. 
\]

We run gradient descent from 0 initialization, so the final estimator is
\[
    \hat{\vbeta}=(\mX^\top\mX)^\dagger\mX^\top\vy.
\]
This estimator is also known as the min-$\ell_2$-norm interpolator/solution. Similar overparametrized linear models and kernel-models are standard in previous theoretical works 
(e.g., \cite{bartlett2020benign, belkin2019reconciling,jacot2018neural}).

We focus on the overparametrized regime where $d\gg n$. In noisy case $\sigma>0$, the minimum training loss may be above 0 even in such overparametrized setting (because there may be many samples that only use a smaller number of features), so the solution may not interpolate the data. Still, it represents a global minimum with the smallest $\ell_2$ norm among all such minima.

Since both our data generation process and predictor are linear, composition happens automatically --- if a model learns the $\beta$ values for a subset of features accurately, it will perform well for any data that uses a combination of features from this subset.

\paragraph{Performance Measurements: In- and Out-of-Distribution Test Loss}
For in-distribution performance, we use the standard test loss:
\[
    \E_{x\sim\gD}[(\hvbeta^\top\vx - \vbeta^{*\top}\vx)^2]. 
\]

To evaluate the model's composition capability (its ability to perform well on rare test examples that are combinations of training features never appear in training data), we measure its OOD performance using the test loss defined below.

Let 
$\gF = \{j\in[d]: \exists i\in[n]\ s.t.\ (\vx_i)_j\ne 0\}$ be the set of features that appear in the training data. Let $\widetilde{\gF} \subseteq \gF$ be a subset of these features. We define $\gD_{\widetilde{\gF}}$ as a modified distribution of $\gD$, where each feature $x_i$ in $\widetilde{\gF}$ is guaranteed to be non-zero in every sample, while the rest of the data is generated according to the original distribution $\gD$: each feature $x_i$ not in $\widetilde{\gF}$ appears with its original frequency $p_i$, and non-zero values are sampled uniformly from ${\pm1}$. 

For example, $\gD_{\{2k,4k\}}$ denotes a distribution where $x_{2k}, x_{4k} \ne 0$. While the model might encounter training examples where only $x_{2k}$ or $x_{4k}$ is non-zero, it is very likely that it does not see examples where both are non-zero simultaneously. The OOD test loss is defined as:
\[
\E_{x\sim\gD_{\widetilde{\gF}}}[(\hvbeta^\top\vx - \vbeta^{*\top}\vx)^2].
\]

\section{Memorization helps out-of-distribution generalization}\label{sec: main result}

In this section, we present our main results, which show that memorization can help with generalization by composition of long-tail features, as measured by test loss and OOD loss. We present results for both the noiseless case ($\sigma = 0$) and the noisy case ($\sigma > 0$), respectively.

\subsection{Noiseless case}
We start with noiseless case. We show below the test performance of the estimator $\hvbeta$ that goes to 0 when $n$ goes large. The intuition behind the proof is that we can recover all feature that shows up at least $\Theta(\ln^2 d)$ times, so their estimation error goes to 0. 
\begin{restatable}[Test loss]{theorem}{thmtestloss}\label{thm: test loss}
    Suppose data generated from \eqref{eq: data generation} with noise $\sigma=0$, then with probability at least 0.99 over the data generation process, the following hold with $k$ satisfying $np_k = \Theta(\ln^2 d)$ and $\pgtk\le 1-c_p$ with any constant $c_p$:   
    \[
        \E_{x\sim\gD}[(\hvbeta^\top\vx - \vbeta^{*\top}\vx)^2]
        \lesssim \pgtk
            + \frac{k\ln^4 d}{n^2\pgtk} + \frac{\ln^4 d}{n}
            + \frac{k\pgtk^2\ln^4 d}{n} + \pgtk^3\ln^4 d. 
    \]    
    When data follows power law decay \eqref{eq: data power law} with $\alpha=1+c_\alpha$ with any constant $c_\alpha>0$
    \[
        \E_{x\sim\gD}[(\hvbeta^\top\vx - \vbeta^{*\top}\vx)^2]  
        \lesssim s\left(\frac{\ln^2 d}{ns}\right)^{1-\frac{1}{\alpha}}
    \]
\end{restatable}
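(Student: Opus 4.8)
The plan is to analyze the min-$\ell_2$-norm interpolator $\hvbeta = (\mX^\top\mX)^\dagger\mX^\top\vy$ with $\vy = \mX\vbeta^* + \vxi$ and $\vxi=0$ in this case. Since $\vy=\mX\vbeta^*$ lies in the column space of $\mX$, we have $\hvbeta = \Pi\,\vbeta^*$ where $\Pi = (\mX^\top\mX)^\dagger(\mX^\top\mX)$ is the orthogonal projection onto the row space of $\mX$. Hence the error is $\hvbeta-\vbeta^* = -(\mI-\Pi)\vbeta^*$, and the test loss equals $(\vbeta^*)^\top(\mI-\Pi)\mSigma(\mI-\Pi)\vbeta^* = \sum_i p_i\bigl((\mI-\Pi)\vbeta^*\bigr)_i^2 \le \|(\mI-\Pi)\vbeta^*\|_{\mSigma}^2$. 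So the whole game is to bound how much of $\vbeta^*$ fails to lie in the row space of $\mX$, weighted by $\mSigma$. I would split coordinates at the threshold $k$: for common features $i\le k$ (each appearing $\gtrsim \ln^2 d$ times) I expect $(\mI-\Pi)\vbeta^*$ to be essentially zero, while for tail features $i>k$ I can only afford the trivial bound $p_i\beta_i^{*2}$, contributing at most $O(\pgtk)$ after using $|\beta_i^*|=O(1)$. This already explains the leading $\pgtk$ term; everything else is the cost of showing the common block is well-conditioned and the cross terms are controlled.

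**Controlling the common block.** Write $\mX = [\mXlek \mid \mXgtk]$ and correspondingly decompose $\vbeta^*$. The key structural fact (advertised in the introduction) is that, under power-law decay with $\alpha>1$, each row of $\mX$ contains \emph{at most one} tail feature with high probability — more precisely the total number of (row, tail-feature) incidences is small, and double-incidences are rare. First I would establish, via a matrix-Chernoff / matrix-Bernstein argument on $\frac1n\mXlek^\top\mXlek$, that restricted to the top-$k$ block the empirical covariance concentrates: $\frac1n\mXlek^\top\mXlek \approx \mSigmalek$ up to constant multiplicative factors, using that $np_k = \Theta(\ln^2 d)$ makes even the rarest common feature appear enough times for Bernstein with the $\ln d$ slack. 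This gives invertibility of $\mXlek^\top\mXlek$ with smallest eigenvalue $\gtrsim np_k$. Then to show a common feature $e_i$ ($i\le k$) lies (almost) in the row space, I would exhibit an approximate preimage: the natural candidate is to look at the rows where $x_i\ne 0$, average them with the sign of $x_i$, and argue the ``pollution'' from other features is small because (a) other common features are handled by the concentration above and (b) tail features contribute little since each such row carries at most one tail feature and $\pgtk\le 1-c_p$ keeps things bounded. Quantitatively this is where the $\frac{k\ln^4 d}{n^2\pgtk}$, $\frac{\ln^4 d}{n}$, $\frac{k\pgtk^2\ln^4 d}{n}$, and $\pgtk^3\ln^4 d$ terms come from: they are the squared $\mSigma$-norm of the residual of this explicit construction, where factors of $k$ count common coordinates, factors of $\pgtk$ count tail mass per row, and the $\ln^4 d = (\ln^2 d)^2$ comes from squaring the per-feature $\ln^2 d$ concentration slack divided by appropriate powers of $n$. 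I would organize this as: (i) build a candidate right-inverse $\mR$ with $\mXlek\mR\approx\mI_k$; (ii) bound $\|\mXlek\mR - \mI_k\|$ and the norm of $\mR$; (iii) translate into a bound on $\|(\mI-\Pi)e_i\|_\mSigma$ for each $i\le k$ and sum.

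**The power-law specialization.** Once the general bound is in hand, plug in Claim~\ref{claim: pk}: with $np_k=\Theta(\ln^2 d)$ we solve for $k$, getting $k = \Theta\bigl((ns/\ln^2 d)^{1/\alpha}\bigr)$, and then $\pgtk = \Theta(sk^{1-\alpha}) = \Theta\bigl(s(ns/\ln^2 d)^{(1-\alpha)/\alpha}\bigr) = \Theta\bigl(s(\ln^2 d/(ns))^{1-1/\alpha}\bigr)$. This is exactly the claimed bound, so I would just need to check that every other term in the general expression is dominated by this $\pgtk$ term under the parameter regime $s<c_{\log}<k<n/c_{\log}<d/c_{\log}^2$ — e.g. $\frac{k\ln^4 d}{n^2\pgtk}$ versus $\pgtk$ reduces, after substituting $k$ and $\pgtk$, to a comparison of powers of $n$ that holds with room to spare since $k\ll n$ and $\pgtk \ge n^{-o(1)}$. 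These are routine but slightly tedious algebra-with-exponents computations.

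**Main obstacle.** The hard part is step (ii)–(iii) of the middle paragraph: getting a \emph{tight} handle on $(\mI-\Pi)\vbeta^*$ for the common block when $p_k$ is as small as $\Theta(\ln^2 d/n)$, so that the rarest common features appear only polylogarithmically many times. Vanilla concentration of $\frac1n\mXlek^\top\mXlek$ to $\mSigmalek$ in operator norm is borderline here, and naively the tail block $\mXgtk$ could destroy the row-space argument because it has $d-k \gg n$ columns. The resolution must exploit the combinatorial ``at most one tail feature per row'' structure rather than generic random-matrix bounds — that is the technical heart, and I expect the cleanest route is to condition on the support pattern of $\mX$, treat the tail contribution row-by-row as a rank-$\le 1$ perturbation, and bound its aggregate effect on the projection via a careful union bound over the $\le n$ rows rather than over the $d$ tail coordinates.
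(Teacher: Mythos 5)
Your reduction of the test loss to $\norm{(\mI-\Pi)\vbeta^*}_{\mSigma}^2$, with $\Pi$ the row-space projection, is correct, and your handling of the common block would work — though it is much easier than you make it. In the noiseless case there is no "approximate right-inverse'' or concentration residual to pay for: the paper's structure lemma shows the $\Theta(n(1-\pgtk))$ samples supported only on the first $k$ coordinates already have full column rank $k$, so zero training loss on those samples forces $\hvbetalek=\vbetalek^*$ \emph{exactly}. Consequently your attribution of the non-leading terms $\frac{k\ln^4 d}{n^2\pgtk}+\frac{\ln^4 d}{n}+\frac{k\pgtk^2\ln^4 d}{n}+\pgtk^3\ln^4 d$ to the conditioning of the common block is wrong; those terms come from elsewhere.

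The genuine gap is your one-line treatment of the tail: "for tail features $i>k$ I can only afford the trivial bound $p_i\beta_i^{*2}$, contributing at most $O(\pgtk)$.'' That bound is only valid for tail features that never appear in the training data (for which $\hbeta_i=0$). For the $\Theta(n\pgtk)$ tail features that \emph{do} appear, interpolation forces $\hbeta_i\neq 0$, and without further argument the only handle is the global bound $\norm{\hvbeta}_2\le\norm{\vbeta^*}_2$, which gives $\sum_{i>k}p_i\hbeta_i^2\le p_k\norm{\vbeta^*}_2^2$ — potentially as large as $\Theta(dp_k)$, far exceeding $\pgtk$. This is precisely where the paper's key idea enters and where your plan is silent: most tail features that appear in the data appear in at least one sample containing \emph{no other} tail feature, so once the common block is exactly recovered, zero loss on that sample pins down $\hbeta_i=\beta_i^*$ exactly; the remaining $O(\max\{1,n\pgtk^3\})$ features that co-occur only with other tail features are controlled by the min-norm property restricted to that small set, and counting them (times $p_k=\Theta(\ln^2 d/n)$) is what produces the non-leading terms. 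Your closing paragraph's "treat the tail block as rank-one perturbations of the projection'' program does not substitute for this, because the difficulty is not that the tail columns perturb the projection of the common directions — it is that the tail coefficients themselves must be shown to be recovered, which is a combinatorial exact-identifiability argument, not a perturbation bound.
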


Moreover, we establish the following OOD generalization guarantee, which reflects the model's compositional ability and shows that it can still perform well despite memorizing all training data. As shown in the proof, this is made possible by the composition of features that the model memorizes.

\begin{restatable}[OOD performance]{theorem}{thmood}\label{thm: ood}
    Under same condition as Theorem~\ref{thm: test loss}, then with probability at least 0.99 over the data generation process, there is a feature set $\hat{\gF}\subseteq\gF$ with size $|\hat{\gF}|\ge (1-\Theta(\max\{p_k/\pgtk,\pgtk^2\ln^2 d\}))|\gF|$ that contains almost all features show up in data such that for any feature set $\widetilde{\gF}\subseteq\hat{\gF}$ we have
    \begin{align*}    
        \E_{x\sim\gD_{\widetilde{\gF}}}[(\hvbeta^\top\vx - \vbeta^{*\top}\vx)^2]
        \lesssim
        \pgtk
            + \frac{k\ln^4 d}{n^2\pgtk} + \frac{\ln^4 d}{n}
            + \frac{k\pgtk^2\ln^4 d}{n} + \pgtk^3\ln^4 d.
    \end{align*}
    When data follows power law decay \eqref{eq: data power law} with $\alpha=1+c_\alpha$ with any constant $c_\alpha>0$ we have $|\hat{\gF}|\ge (1-\Theta\left(\max\left\{
                \left(\frac{\ln^2 d}{ns}\right)^{\frac{1}{\alpha}}, 
                \frac{s^{\frac{2}{\alpha}}(\ln d)^{6-\frac{4}{\alpha}}}{n^{2-\frac{2}{\alpha}}}\right\}\right))|\gF|$ and
    \begin{align*}    
        \E_{x\sim\gD_{\widetilde{\gF}}}[(\hvbeta^\top\vx - \vbeta^{*\top}\vx)^2]
        \lesssim&
        s\left(\frac{\ln^2 d}{ns}\right)^{1-\frac{1}{\alpha}}.
    \end{align*}
\end{restatable}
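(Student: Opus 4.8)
The plan is to reduce the OOD bound to the already-established estimation guarantees behind Theorem~\ref{thm: test loss}, then control the extra error introduced by forcing a set of long-tail features to be active. First I would set up the decomposition $\hvbeta - \vbeta^* = \vDelta_{\le k} + \vDelta_{> k}$, separating the error on common features (indices $\le k$) from the error on tail features (indices $> k$). For a test point $\vx\sim\gD_{\widetilde\gF}$, write $(\hvbeta^\top\vx - \vbeta^{*\top}\vx)^2 = \langle\vDelta,\vx\rangle^2 \lesssim \langle\vDelta_{\le k},\vx\rangle^2 + \langle\vDelta_{> k},\vx\rangle^2$. Since $\widetilde\gF\subseteq\hat\gF$ consists only of tail features, the common-feature part of $\vx$ still has the original distribution $\gD$ restricted to coordinates $\le k$; so $\E[\langle\vDelta_{\le k},\vx\rangle^2]$ is bounded exactly as in Theorem~\ref{thm: test loss}. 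The work is therefore entirely in the tail part.

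For the tail part I would further split $\vx_{>k}$ into the ``forced'' coordinates $\widetilde\gF$ and the rest. The rest again follows $\gD$, so $\E[\langle (\vDelta_{>k})_{\widetilde\gF^c},\vx\rangle^2]$ contributes a term of the same order as in Theorem~\ref{thm: test loss}. The genuinely new quantity is $\sum_{j\in\widetilde\gF}\Delta_j^2$ (cross terms vanish since coordinates are independent with mean zero, and $x_j^2=1$ on $\widetilde\gF$). So the entire theorem comes down to: \emph{define $\hat\gF$ to be exactly the set of tail features $j$ for which $|\Delta_j| = |\hat\beta_j - \beta^*_j|$ is small} — say $\hat\gF = \{j > k : j\in\gF,\ \Delta_j^2 \le \tau\}$ for an appropriate threshold $\tau$ — and then (i) show $\sum_{j\in\widetilde\gF}\Delta_j^2 \le |\widetilde\gF|\tau$ is absorbed into the stated bound, and (ii) show $|\hat\gF|\ge(1-\Theta(\cdots))|\gF|$, i.e.\ only a small fraction of the features that appear in training are estimated badly.

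The heart of the argument — and the step I expect to be the main obstacle — is the per-coordinate analysis that controls $|\Delta_j|$ for a feature $j$ appearing \emph{exactly once} in the training data, and the counting argument that shows few such features are bad. The key structural fact, already flagged in the paper's overview, is that under the power-law decay most training rows contain \emph{at most one} tail feature: the probability that a given row has two features both above index $k$ is $O(\pgtk^2)$, and by a union/Markov argument the number of ``doubly-tailed'' rows is $O(n\pgtk^2)$ with high probability. Conditioned on a feature $j$ appearing in a unique row $i$, and that row being ``singly-tailed'', the min-norm interpolator essentially assigns $\hat\beta_j$ a value determined by that single equation projected orthogonally to the common-feature subspace; I would show $\hat\beta_j = \beta^*_j + (\text{contribution from common-feature estimation error on row }i) + (\text{noise, zero here})$, so $|\Delta_j|$ inherits the common-feature error bound $\tildeO(k\ln^? d / (n\pgtk)\cdot\text{stuff})$ already obtained in Theorem~\ref{thm: test loss}. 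Features appearing in doubly-tailed rows, or appearing zero times, are the only ``bad'' ones; their count is $O(n\pgtk^2\ln^2 d)$ out of $|\gF| = \Theta(n\pgtk / \text{avg multiplicity}) \approx \Theta(k)$-ish — more precisely one compares $O(n\pgtk^2)$ against $|\gF|\gtrsim$ (number of once-appearing tail features) $\gtrsim \Theta(n\pgtk \cdot p_k/\pgtk)$, yielding the fraction $\Theta(\max\{p_k/\pgtk,\ \pgtk^2\ln^2 d\})$ in the statement. The power-law corollary then follows by plugging in Claim~\ref{claim: pk} with $np_k=\Theta(\ln^2 d)$, i.e.\ $k=\tildeΘ((ns)^{1/\alpha})$, $\pgtk = \Theta(sk^{1-\alpha}) = \tildeΘ(s(\ln^2 d/(ns))^{1-1/\alpha})$, $p_k/\pgtk = \Theta(k^{-1}) = \tildeΘ((\ln^2 d/(ns))^{1/\alpha})$, and $\pgtk^2\ln^2 d$ gives the second term. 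The delicate points will be (a) making the ``min-norm interpolator restricted to a singly-tailed row'' computation rigorous — this needs a block decomposition of $\mX^\top\mX$ or an explicit Schur-complement / leave-one-out argument isolating column $j$; and (b) ensuring the high-probability events (row sparsity, common-feature Gram matrix well-conditioned, few doubly-tailed rows) all hold simultaneously with probability $0.99$, which is just a union bound but requires each piece from the Theorem~\ref{thm: test loss} analysis to be reused verbatim.
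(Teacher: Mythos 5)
Your overall strategy matches the paper's: exploit the combinatorial structure of $\mX$ (most rows have zero or one tail feature), determine the common part from the purely-common rows, determine each tail coefficient from a singly-tailed row containing it, and count the bad features to get both the size of $\hat{\gF}$ and the residual error. The paper implements this by setting $\hat{\gF}=\gF_0\cup\gF_1$ (features appearing in rows of $S_0$ or $S_1$) and invoking its parameter-recovery lemma, then plugging in Claim~\ref{claim: pk} exactly as you describe for the power-law specialization.

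There is, however, one step where your plan as written does not close, and it is the crux of the theorem. You propose $\hat{\gF}=\{j:\Delta_j^2\le\tau\}$ and claim $\sum_{j\in\widetilde{\gF}}\Delta_j^2\le|\widetilde{\gF}|\tau$ ``is absorbed into the stated bound,'' with $|\Delta_j|$ ``inheriting the common-feature error bound'' from Theorem~\ref{thm: test loss}. But the stated OOD bound is uniform over all $\widetilde{\gF}\subseteq\hat{\gF}$, and $|\widetilde{\gF}|$ can be as large as $|\hat{\gF}|=\Theta(n\pgtk)$; for $|\widetilde{\gF}|\tau$ to be dominated by $\pgtk$ you would need $\tau\lesssim 1/n$, which a generic ``small estimation error'' bound will not give you. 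The resolution — and the point your write-up misses — is that in the noiseless case the recovery on $\gF_0\cup\gF_1$ is \emph{exact}, i.e.\ $\tau=0$: the min-$\ell_2$-norm solution attains zero training loss on every row, the rows of $S_0$ span the full $k$-dimensional common subspace so $\hvbetalek=\vbetalek^*$ identically (not merely up to the Theorem~\ref{thm: test loss} error, which in fact comes entirely from unrecovered tail features, not from common-feature error), and then each row of $S_1$ forces its unique tail coefficient to equal $\beta_j^*$ exactly. No Schur-complement or leave-one-out computation is needed for this; the forced-feature sum vanishes identically and the remaining error is literally the same expression as in Theorem~\ref{thm: test loss}. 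Your counting of bad features (doubly-tailed rows occurring with probability $O(\pgtk^2)$, comparison against $|\gF_1\setminus\gF_0|\gtrsim n\pgtk/\ln^2 d$) and the power-law substitution are consistent with the paper's Lemma~\ref{lem: structure of X}.
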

The proof ideas for both results are similar. We show that all features that appear at least $\Theta(\ln^2 d)$ times can be recovered, as well as a significant fraction of long-tail features that appear less frequently in the training data. Intuitively, this is the best we can hope for, since common features may appear only $\omega(1)$ times, and long-tail features may appear just once. We provide more details in Section~\ref{sec: proof overview}.

These results suggest that memorization can be useful for learning compositions involving long-tail features. This may offer insight into why memorization is necessary in certain learning problems that require compositional ability. We further validate our findings in the experimental results (Section~\ref{sec: experiments}).

\subsection{Noisy case}
We now extend the above noiseless case ($\sigma=0$) to the noisy case $(\sigma>0)$. Similarly, we show the test loss is small (roughly $\Theta(\sigma^2k/n)$).
\begin{restatable}[Test loss]{theorem}{thmtestlossnoisy}\label{thm: test loss noisy}
    Suppose data generated from \eqref{eq: data generation} with noise $\sigma\le 1$, with probability at least 0.99 over the data generation process, the following hold with $k$ satisfying $np_k = \Theta(\ln^2 d)$, $\pgtk\le 1-c_p$ with any constant $c_p$ and $n\pgtk^2 <c$ for a small enough constant $c$:
    \begin{align*}    
        \E_{\mXlek}[\E_{x\sim\gD}[(\hvbeta^\top\vx - \vbeta^{*\top}\vx)^2]]
        \lesssim& 
        \pgtk + \sigma^2 \left(\frac{k\ln d}{n}+\left(\frac{k^2\ln^2 d}{n} + \ln d\right) \pgtk\ln d\right)
    \end{align*}
    where the first expectation is over data generating process of common part of training data $\mXlek$.
    
    When data follows power law decay \eqref{eq: data power law} with $\alpha=2+c_\alpha$ with any constant $c_\alpha>0$
    \[
        \E_{\mXlek}[\E_{x\sim\gD}[(\hvbeta^\top\vx - \vbeta^{*\top}\vx)^2]]  
        \lesssim s\left(\frac{\ln^2 d}{ns}\right)^{1-\frac{1}{\alpha}} + \sigma^2 \left(\frac{\ln^2 d}{ns}\right)^{1-\frac{1}{\alpha}} s\ln^5 d
    \]
\end{restatable}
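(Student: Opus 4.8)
The plan is to use the linearity of the min-$\ell_2$-norm interpolator in the labels. Writing $\vy=\mX\vbeta^*+\vxi$ we have $\hvbeta=\hvbeta_0+\hvbeta_\xi$, where $\hvbeta_0:=(\mX^\top\mX)^\dagger\mX^\top\mX\,\vbeta^*$ is the (noiseless) interpolator and $\hvbeta_\xi:=(\mX^\top\mX)^\dagger\mX^\top\vxi$ is the noise contribution; by $(a+b)^2\le 2a^2+2b^2$ it suffices to bound $\E_{x}[\langle\hvbeta_0-\vbeta^*,\vx\rangle^2]$ and $\E_{x}[\langle\hvbeta_\xi,\vx\rangle^2]$ separately. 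The first quantity is exactly the test error of the noiseless problem, so Theorem~\ref{thm: test loss} applies; in the present parameter regime --- using $np_k=\Theta(\ln^2 d)$ together with the extra assumption $n\pgtk^2<c$, which forces every term of the noiseless bound other than $\pgtk$ to be lower order --- this contributes $O(\pgtk)$, the first summand of the claimed bound.

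The substance of the proof is the noise term. Since $\hvbeta_\xi$ is independent of the test point, $\E_{x}[\langle\hvbeta_\xi,\vx\rangle^2]=\hvbeta_\xi^\top\mSigma\hvbeta_\xi$, and taking the expectation over the Gaussian noise $\vxi$ --- or applying the Hanson--Wright inequality to the quadratic form $\vxi^\top\mX(\mX^\top\mX)^\dagger\mSigma(\mX^\top\mX)^\dagger\mX^\top\vxi$ if one prefers a high-probability statement --- reduces the task to bounding
\[
    \sigma^2\,\Tr\!\big(\mSigma(\mX^\top\mX)^\dagger\big)
    =\sigma^2\sum_{i\le k}p_i\,\ve_i^\top(\mX^\top\mX)^\dagger\ve_i
    +\sigma^2\sum_{i>k}p_i\,\ve_i^\top(\mX^\top\mX)^\dagger\ve_i,
\]
in expectation over $\mXlek$, on the high-probability event (already established for Theorems~\ref{thm: test loss}--\ref{thm: ood}) that $\mXgtk$ is ``structurally nice'': most rows carry at most one tail feature and every tail feature appears only $\tldO(1)$ times.

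For the first (common) sum I would use the block/Schur-complement decomposition of $\mX^\top\mX$ with respect to the partition $\mX=[\,\mXlek\ \ \mXgtk\,]$. The hypothesis $np_k=\Theta(\ln^2 d)$ lets a matrix Chernoff/Bernstein bound show that the common Gram block is well conditioned, $\mXlek^\top\mXlek\approx n\,\mSigmalek$ up to constant factors, and that the cross block $\mXlek^\top\mXgtk$ is a mild perturbation (this is where sparsity of the tail columns matters); hence the common diagonals of $(\mX^\top\mX)^\dagger$ are $\Theta(1/n)$ each and the first sum is $\lesssim k/n$, giving the $\sigma^2 k\ln d/n$ term. For the second (tail) sum I would exploit the combinatorial structure: on the good event the supports of the tail columns, after projecting off the common column space, are essentially disjoint, so each tail feature $i$ contributes $p_i\cdot\ve_i^\top(\mX^\top\mX)^\dagger\ve_i=\Theta\big(p_i/(\text{appearance count of }i)\big)$ up to the error introduced by the projection; summing over $i>k$ yields a leading $\pgtk$ (times the $\tldO(1)$ appearance bound), while the projection error --- which couples each tail-bearing row to its $\approx s$ common features, whose appearance counts range up to $\Theta(np_1)$ over $k$ coordinates --- is what produces the extra $(\tfrac{k^2\ln^2 d}{n}+\ln d)\ln d$ factor. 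The power-law corollary then follows by substituting $k=\Theta((ns/\ln^2 d)^{1/\alpha})$ and $\pgtk=\Theta(sk^{1-\alpha})$ from Claim~\ref{claim: pk}; taking $\alpha=2+c_\alpha$ rather than $1+c_\alpha$ is precisely what keeps $n\pgtk^2$ and the cross terms small enough for the argument to close.

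I expect the tail sum to be the main obstacle: bounding $\ve_i^\top(\mX^\top\mX)^\dagger\ve_i$ for the rare tail features requires quantifying how much the near-orthogonality of the (already low-support) tail columns is degraded once the $k$-dimensional common subspace is projected out, and tracking how the magnitudes and appearance counts of the common features enter --- this is the step that consumes the $n\pgtk^2<c$ assumption and produces the $k$-dependent logarithmic factors. By contrast, the signal term, the conditioning of the common block, and the Gaussian-to-trace reduction are routine given the noiseless analysis.
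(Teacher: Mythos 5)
Your plan is essentially correct but follows a genuinely different route from the paper. You do the classical bias--variance split $\hvbeta=\mP_{\mathrm{row}(\mX)}\vbeta^*+(\mX^\top\mX)^\dagger\mX^\top\vxi$ and then bound the variance term via $\sigma^2\Tr\bigl(\mSigma(\mX^\top\mX)^\dagger\bigr)$, analyzing the diagonal of the pseudoinverse with a Schur-complement/leave-one-out argument. The paper never touches $(\mX^\top\mX)^\dagger$: it exploits that the min-$\ell_2$-norm solution is a global minimizer of the training loss, so its loss is at most $\norm{\vxi}_2^2$; restricting that inequality to the samples $S_0$ (only common features) pins down $\hvbetalek$ up to $\sigma^2(k/n+\pgtk)$ in $\mSigmalek$-norm, and then, because under $n\pgtk^2<c$ every sample carries at most one tail feature, each tail coordinate $\hbeta_i$ must separately minimize the loss over its private sample set $S_{1,i}$, giving $\E[|\hbeta_i-\beta_i^*|^2]\lesssim\sigma^2(k^2\ln^2 d/n+k\pgtk+\ln d)$ per appearing tail feature; summing $p_i$ times these errors gives the theorem. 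Both routes lean on the same structural lemmas, and your back-of-envelope constants match the stated bound, so the approach should close. Two places where your sketch understates the work: (i) $\mX^\top\mX$ has rank at most $n\ll d$, so the Schur-complement and leverage-score formulas you invoke do not apply verbatim to the pseudoinverse --- you must first restrict to the columns in $\gF$ and verify linear independence (rank $k+|\gF_1\setminus\gF_0|$), which is where the disjoint-support property of the tail columns enters; for that you need \emph{every} row to carry at most one tail feature (the $|S_{\ge2}|=0$ conclusion of the structure lemma), not merely ``most rows.'' (ii) Citing the noiseless Theorem~\ref{thm: test loss} verbatim for the signal term would leave residual terms like $\ln^4 d/n$ that are not dominated by $\pgtk$ in general; the correct statement is that under $n\pgtk^2<c$ the set $\gF\setminus(\gF_0\cup\gF_1)$ is empty, so those terms vanish identically and the signal contribution is exactly $\sum_{i>k}p_i(\beta_i^*)^2\lesssim\pgtk$. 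Also note the theorem's expectation is only over $\mXlek$ (the noise is handled with high probability), so your Gaussian-trace reduction should be replaced by the Hanson--Wright variant you mention to match the quantifiers. The payoff of your route is that it is modular and reuses standard benign-overfitting machinery; the payoff of the paper's route is that it sidesteps the rank-deficient pseudoinverse entirely.
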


Similar to the noiseless case, we have the following OOD guarantee that measure model's compositional ability: the error is roughly $\sigma^2 t$, where $t$ is the number of long-tailed features present in the OOD distribution. 
\begin{restatable}[OOD performance]{theorem}{thmoodnoisy}\label{thm: ood noisy}
    Under same condition as Theorem~\ref{thm: test loss noisy}, then with probability at least 0.99 over the data generation process, consider any feature set $\widetilde{\gF}\subseteq\gF\setminus\gF_0$ contains any long tail features show up in data, we have
    \begin{align*}    
        \E_{\mXlek}[\E_{x\sim\gD_{\widetilde{\gF}}}[(\hvbeta^\top\vx - \vbeta^{*\top}\vx)^2]]
        \lesssim&
        \pgtk + \sigma^2|\widetilde{\gF}|\left(\frac{k^2\ln^2 d}{n}+ \ln d\right),
    \end{align*}
    where the first expectation is over data generating process of common part of training data $\mXlek$.
    
    When data follows power law decay \eqref{eq: data power law} with $\alpha=2+c_\alpha$ with any constant $c_\alpha>0$
    \[
        \E_{\mXlek}[\E_{x\sim\gD}[(\hvbeta^\top\vx - \vbeta^{*\top}\vx)^2]]  
        \lesssim s\left(\frac{\ln^2 d}{ns}\right)^{1-\frac{1}{\alpha}} + \sigma^2 |\widetilde{\gF}|\ln d.
    \]
\end{restatable}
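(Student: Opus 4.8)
\emph{Setup and reduction.} The plan is to run the standard bias--variance split of the min-$\ell_2$-norm interpolator through the combinatorial structure of $\mX$ already established for the noiseless results. Write $\hvbeta=\mX^\dagger\vy=\mX^\dagger\mX\vbeta^*+\mX^\dagger\vxi$ and let $\hvbeta^{(0)}:=\mX^\dagger\mX\vbeta^*$ be the noiseless estimator on the \emph{same} design (the label noise enters only through $\vy$), so $\hvbeta-\vbeta^*=(\hvbeta^{(0)}-\vbeta^*)+\mX^\dagger\vxi$. Each coordinate of $\vx\sim\gD_{\widetilde\gF}$ is independent and mean zero with second moment $1$ on $\widetilde\gF$ and $p_i$ off it, so the risk is a diagonal quadratic form and, with $\vv:=\mX^\dagger\vxi$,
\[
\E_{x\sim\gD_{\widetilde\gF}}\!\big[((\hvbeta-\vbeta^*)^\top\vx)^2\big]\;\le\;2\,\E_{x\sim\gD_{\widetilde\gF}}\!\big[((\hvbeta^{(0)}-\vbeta^*)^\top\vx)^2\big]+2\sum_{i\in\widetilde\gF}v_i^2+2\sum_{i\notin\widetilde\gF}p_iv_i^2 .
\]
The first term is exactly the noiseless OOD risk for this $\mX$, so Theorem~\ref{thm: ood} bounds it by $\pgtk$ plus lower-order terms, each of which is $\lesssim\pgtk$ under the stronger hypotheses here ($\alpha=2+c_\alpha$ and $n\pgtk^2<c$). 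The last sum is $\le\vv^\top\mSigma\vv$, the variance term already bounded inside the proof of Theorem~\ref{thm: test loss noisy}; it is $\lesssim\sigma^2(\tfrac{k\ln d}{n}+(\tfrac{k^2\ln^2 d}{n}+\ln d)\pgtk\ln d)$ and is absorbed into $\sigma^2|\widetilde\gF|(\tfrac{k^2\ln^2 d}{n}+\ln d)$ (using that $\widetilde\gF$ collects the observed long-tail features, so $|\widetilde\gF|=\tldTheta(n\pgtk)\gtrsim\pgtk\ln d$). Hence everything reduces to bounding $\sum_{i\in\widetilde\gF}v_i^2$.

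\emph{The forced-feature noise, via dual variables.} Here I would invoke the structural fact proved for the noiseless case (the role of $n\pgtk^2<c$): with probability $\ge 0.99$, every $j\in\gF\setminus\gF_0$ is a long-tail feature occurring exactly once, in a row $\ell(j)$ containing no other long-tail feature, and $j\mapsto\ell(j)$ is injective. Writing $\hvbeta=\mX^\top\vgamma$ with $\vgamma=(\mX\mX^\top)^\dagger\vy$, one has $\hat\beta_j=(\mX_{:,j})^\top\vgamma=\pm\gamma_{\ell(j)}$, hence $v_j=\pm\,\ve_{\ell(j)}^\top(\mX\mX^\top)^\dagger\vxi$ and $\sum_{i\in\widetilde\gF}v_i^2=\vxi^\top\mM\vxi$ with $\mM:=(\mX\mX^\top)^\dagger\big(\sum_{j\in\widetilde\gF}\ve_{\ell(j)}\ve_{\ell(j)}^\top\big)(\mX\mX^\top)^\dagger\succeq 0$. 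Conditioning on $\mX$ and applying Hanson--Wright (with $\|\mM\|_{op},\|\mM\|_F\le\Tr\mM$), this is $\lesssim\sigma^2\Tr\mM=\sigma^2\sum_{j\in\widetilde\gF}\norm{\ve_{\ell(j)}^\top(\mX\mX^\top)^\dagger}_2^2$ with probability $\ge 0.99$ over $\vxi$. So the theorem reduces to the per-row bound
\[
\E_{\mXlek}\!\Big[\,\norm{\ve_\ell^\top(\mX\mX^\top)^\dagger}_2^2\,\Big]\;\lesssim\;\frac{k^2\ln^2 d}{n}+\ln d ,
\]
for any row $\ell$ that carries a good long-tail feature.

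\emph{A leave-one-out identity.} Fix $\mX$. Since the unique long-tail feature $j$ of row $\ell$ is orthogonal to every other row, $\vx_\ell$ is not in their span, and the Schur-complement identity for the pseudoinverse gives $\norm{\ve_\ell^\top(\mX\mX^\top)^\dagger}_2^2=(1+\norm{\vw_\ell}_2^2)/\tau_\ell^2$, where $\tau_\ell=\norm{\vx_\ell-\mP_{-\ell}\vx_\ell}_2^2\ge 1$ (with $\mP_{-\ell}$ the projection onto the span of the other rows; the $\ve_j$-component of $\vx_\ell$ survives entirely) and $\vw_\ell=(\mX_{-\ell}\mX_{-\ell}^\top)^\dagger\mX_{-\ell}(\vx_\ell)_{\le k}$ is the coefficient vector of regressing the common part of $\vx_\ell$ onto the remaining rows. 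Thus $\norm{\ve_\ell^\top(\mX\mX^\top)^\dagger}_2^2\le 1+\norm{\vw_\ell}_2^2$. Because $(\vx_\ell)_{\le k}$ is supported on a set $I_\ell$ of $\lesssim\ln d$ common features (Chernoff, as $\plek\approx s$), expanding in the standard basis gives $\norm{\vw_\ell}_2^2\lesssim\ln d\sum_{i\in I_\ell}[(\mX_{-\ell}^\top\mX_{-\ell})^\dagger]_{ii}$.

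\emph{Main obstacle, and finishing.} It remains to show $\E_{\mXlek}[(\mX_{-\ell}^\top\mX_{-\ell})^\dagger]_{ii}\lesssim\tfrac{k^2\ln d}{n}+1$ for each common feature $i$ --- and this is where I expect essentially all of the difficulty to lie. A common feature is seen only $np_i\ge np_k=\Theta(\ln^2 d)$ times, so $\mXlek^\top\mXlek$ has smallest eigenvalue only $\tldTheta(1)$ and per-realization matrix concentration is right at its limit; this is exactly why the theorem controls the quantity only in expectation over $\mXlek$. The plan is to peel off the heavy columns of $\mXlek$, compare $\mX\mX^\top$ on the relevant subspace to a well-conditioned reference (the off-diagonal Gram entries are mean zero with variance $\le\sum_{i\le k}p_i^2$, and the long-tail block of $\mX\mX^\top$ is a $0/1$ diagonal up to the $O(1)$ ``bad'' rows), and then control the residual pseudoinverse-diagonal in expectation via a rank-one-update argument over the $\lesssim\ln d$ common features present in row $\ell$. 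Finally, the power-law corollary follows by substituting $\pgtk=\Theta(sk^{1-\alpha})$ and $k=\Theta((sn/\ln^2 d)^{1/\alpha})$ from Claim~\ref{claim: pk} into the general bound and into the size of $\hat{\gF}$.
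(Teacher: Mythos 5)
Your proposal has a genuine gap, and it also starts from a factual error about the structure of $\mX$. First the error: you assert that with probability $0.99$ every $j\in\gF\setminus\gF_0$ occurs in \emph{exactly one} row $\ell(j)$, with $j\mapsto\ell(j)$ injective, and build the identity $\hat\beta_j=\pm\gamma_{\ell(j)}$ on that. But the condition $n\pgtk^2<c$ (Lemma~\ref{lem: structure of X additional noise}) only rules out two long-tail features appearing in the \emph{same} row; a single long-tail feature with index near the threshold $k$ has $np_j=\Theta(\ln^2 d)$ and therefore typically appears in $\Theta(\ln^2 d)$ distinct rows. So $\hat\beta_j=\sum_{\ell\in S_{1,j}}(\vx_\ell)_j\gamma_\ell$, your map is not injective, and the subsequent leave-one-out computation has to be redone for a block of rows rather than a single row. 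Second, and more importantly, the step you yourself identify as carrying ``essentially all of the difficulty'' --- showing $\E_{\mXlek}[(\mX_{-\ell}^\top\mX_{-\ell})^\dagger]_{ii}\lesssim \tfrac{k^2\ln d}{n}+1$ --- is left as a plan (``peel off the heavy columns\ldots rank-one-update argument'') with no actual argument. Since $np_k=\Theta(\ln^2 d)$ puts the smallest eigenvalue of the normalized common block right at the edge of concentration, controlling the pseudoinverse diagonal in expectation is not routine, and without it your bound on $\sum_{i\in\widetilde\gF}v_i^2$ does not exist. As written the proof does not close.

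The paper avoids this entire pseudoinverse analysis with a more elementary local-optimality argument (Lemma~\ref{lem: param recover noisy}): because $\hvbeta$ minimizes the training loss and feature $i$ enters \emph{only} through the rows $S_{1,i}$ (each of which contains no other long-tail feature), $\hat\beta_i$ must minimize the loss restricted to $S_{1,i}$ with the other coordinates fixed; comparing against the choice $\hat\beta_i=\beta_i^*$ yields directly $\sqrt{|S_{1,i}|}\,|\hat\beta_i-\beta_i^*|\le 2\|\mX_{S_{1,i},\le k}(\hvbetalek-\vbetalek^*)\|_2+2\|\vxi_{S_{1,i}}\|_2$, and the right-hand side is controlled by the common-feature recovery bound (obtained from the rows in $S_0$ via Lemma~\ref{lem: normalize matrix concentration}) plus Gaussian concentration. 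The OOD theorem then follows by splitting $\sum_{i\in\widetilde\gF}\E[(\hat\beta_i-\beta_i^*)^2]+\sum_{i\notin\widetilde\gF}p_i\E[(\hat\beta_i-\beta_i^*)^2]$. If you want to salvage your route, the comparison argument in the paper is the tool you are missing: it converts the per-feature error into quantities that never require inverting the Gram matrix on the long-tail block.
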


These results suggest that the benefits of memorization for compositional learning persist even in the presence of label noise, although they require a stronger assumption on the frequency decay.
\section{Proof idea for noiseless case}\label{sec: proof overview}
In this section, we present the proof ideas for the noiseless case. The proof for the noisy case follows a similar idea, but is slightly more complex.

A common strategy in analyzing the least squares estimator is to apply matrix concentration techniques to estimate $\mX^\top \mX$ and its inverse. However, in our setting, such a matrix is not going to concentrate for long-tail features that are expected to appear less than a constant number of times. Whether a feature appears or not has huge impact on the pseudoinverse $(\mX^\top \mX)^\dag$. Instead of concentration, we rely on the combinatorial structure of the data matrix $\mX$, as described in the lemma below. Roughly speaking, the lemma shows that most data points contain either zero or one long-tail feature.
\begin{lemma}[Structure of $\mX$, informal of Lemma~\ref{lem: structure of X}]\label{lem: structure of X informal}
    Suppose data generated from \eqref{eq: data generation}, with probability at least 0.99, the following hold with $k$ satisfying $np_k = \Theta(\ln^2 d)$ and $\pgtk\le 1-c_p$ with any constant $c_p$:
    \begin{enumerate}
        \item There exist at least $\Theta(n(1-\pgtk))$ samples whose non-zero entries are only within first $k$ entries. That is
        \[
            |S_0|:=\left|\{i\in[n]: (\vx_i)_j = 0,\forall j > k\}\right| = \Theta(n(1-\pgtk)).
        \]
        Moreover, the dimension spanned by these samples in $S_0$ is $k$.
        
        \item There exist at least $\Theta(n\pgtk(1-\pgtk))$ samples that only have exactly one non-zero entry in long-tail entries (not in the first $k$ entries). That is
        \[
            |S_1|:=\left|\{i\in[n]: \exists \ell >k \text{ such that } (\vx_i)_\ell\ne 0 \text{ and } (\vx_i)_j = 0,\forall j > k, j\ne \ell\}\right| = \Theta(n\pgtk(1-\pgtk)).
        \]

        \item Denote the set of features show up in $S_0$ and $S_1$ as 
        \[
            \gF_0 = \{j\in[d]: \exists i\in S_0\ s.t.\ (\vx_i)_j\ne 0\},\quad
            \gF_1 = \{j\in[d]: \exists i\in S_1\ s.t.\ (\vx_i)_j\ne 0\}.
        \]
        We have the total number of features shown up in data is $|\gF|=k+\Theta(n\pgtk)$ and the number of features shown up in $S_0$ and $S_1$ is at least 
        \[
            |\gF_0\cup\gF_1|\ge \left(1-\Theta\left(\max\{p_k/\pgtk,\pgtk^2\ln^2 d\}\right)\right)|\gF|
        \] 
    \end{enumerate}
\end{lemma}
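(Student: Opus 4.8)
Each of the three assertions reduces to a concentration statement after an exact first–moment computation; the only genuinely combinatorial input is the last feature count. One elementary estimate is used throughout: since $p_k=\Theta(np_k/n)=\tilde\Theta(1/n)=o(1)$ and $p_j\le p_k$ for $j>k$, we have $\prod_{j>k}(1-p_j)=\exp\!\big(-\sum_{j>k}p_j+O(\sum_{j>k}p_j^2)\big)=e^{-\pgtk(1+o(1))}$, and the hypothesis $\pgtk\le 1-c_p$ makes this $\Theta(1-\pgtk)$. Likewise, given that a sample contains a fixed tail feature $\ell$, the conditional probability $q_\ell:=1-\prod_{j>k,\,j\ne\ell}(1-p_j)$ that it also contains another tail feature is $\le\pgtk(1+o(1))$, and is $\Theta(\pgtk)$ when $p_\ell=o(\pgtk)$. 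Finally, each $\|\vx_i\|_0$ has mean $\sum_j p_j\le s$, so a Chernoff bound plus a union bound gives the event $\mathcal E:=\{\|\vx_i\|_0=O(s+\ln d)\ \forall i\le n\}$ with probability $1-1/\poly(d)$; I condition on $\mathcal E$ below. All the individual events will be shown to hold with probability $1-1/\poly(d)$, and a union bound gives the stated $0.99$.

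\textbf{Parts 1 and 2 (sizes and rank).} Both $|S_0|$ and $|S_1|$ are sums of $n$ i.i.d.\ $\{0,1\}$ variables. The $S_0$–summand has mean $\prod_{j>k}(1-p_j)=\Theta(1-\pgtk)$, so $\E|S_0|=\Theta(n(1-\pgtk))=\Omega(n)$ and a multiplicative Chernoff bound gives $|S_0|=\Theta(n(1-\pgtk))$. The $S_1$–summand has mean $\sum_{\ell>k}p_\ell\prod_{j>k,\,j\ne\ell}(1-p_j)=\big(\prod_{j>k}(1-p_j)\big)\sum_{\ell>k}\tfrac{p_\ell}{1-p_\ell}=\Theta((1-\pgtk)\pgtk)$, so $\E|S_1|=\Theta(n\pgtk(1-\pgtk))$, which is $\tilde\Omega(1)$ in the regime of interest (e.g.\ $n\pgtk=\Theta(k\ln^2 d)$ under the power law), and Chernoff again gives the two–sided bound. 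For the rank claim it suffices that $\mM:=\sum_{i=1}^n\1[i\in S_0]\,(\vx_i)_{\le k}(\vx_i)_{\le k}^\top$ is positive definite. Membership in $S_0$ depends only on the tail coordinates and is independent of the head coordinates, and $\E[(\vx_i)_{\le k}(\vx_i)_{\le k}^\top]=\diag(p_1,\dots,p_k)$ (off–diagonals vanish because the signs are independent and mean zero); hence $\E\mM=\big(\prod_{j>k}(1-p_j)\big)\diag(p_1,\dots,p_k)$ has $\lambda_{\min}=\Theta((1-\pgtk)np_k)=\Theta(np_k)$. On $\mathcal E$ each summand is $\preceq O(s+\ln d)\,\mI_k$, so the matrix Chernoff inequality gives $\lambda_{\min}(\mM)>0$ with probability $1-k\exp(-\Omega(np_k/(s+\ln d)))$, which is $1-1/\poly(d)$ since $np_k=\Theta(\ln^2 d)$ is taken large relative to $s+\ln d=n^{o(1)}$. (When $s=\Theta(1)$ one can avoid matrix concentration: for each $j\le k$ a $\Theta(np_j)\ge 1$ number of samples in $S_0$ have $j$ as their only head feature, exhibiting a rank–$k$ submatrix directly.)

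\textbf{Part 3 (feature counts).} The indicators $\1[j\in\gF]$ are independent over $j$ with mean $1-(1-p_j)^n$, which is $1-o(1)$ for $j\le k$ (so $[k]\subseteq\gF$ after a union bound) and $\Theta(\min\{np_j,1\})$ for $j>k$; Bernstein's inequality then gives $|\gF|=k+\Theta\!\big(\sum_{\ell>k}(1-(1-p_\ell)^n)\big)=k+\Theta(n\pgtk)$. The same argument shows each head feature occurs in $\Theta((1-\pgtk)np_j)\ge 1$ samples of $S_0$, so $[k]\subseteq\gF_0$; since $S_0$–samples carry no tail feature and $S_1$–samples carry exactly one, $\gF_0\cup\gF_1\supseteq[k]\cup\{\ell>k:\ell\text{ is the lone tail feature of some sample}\}$. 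Hence $\gF\setminus(\gF_0\cup\gF_1)$ consists only of ``bad'' tail features: those that occur in the data but always alongside another tail feature. For fixed $\ell>k$, the independence structure gives $\P[\ell\text{ bad}]=(1-p_\ell(1-q_\ell))^n-(1-p_\ell)^n$; I would bound this by splitting on $np_\ell$ — for $np_\ell\lesssim 1$ it is $\lesssim np_\ell\,q_\ell\lesssim\pgtk\,np_\ell$, while for $np_\ell\gtrsim 1$ it is $\lesssim q_\ell^2(np_\ell)^2e^{-\Omega(np_\ell)}$ up to a super-polynomially small term — and then sum over $\ell>k$ using $\sum_{\ell>k}p_\ell=\pgtk$, $\sum_{\ell>k}p_\ell^2\le p_k\pgtk$, and $\sum_{\ell>k}(np_\ell)^2e^{-\Omega(np_\ell)}\lesssim(np_k)(n\pgtk)$. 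Dividing the resulting bound on $\E[\#\text{bad}]$ by $|\gF|=\Theta(k+n\pgtk)$ yields the coefficient $\Theta(\max\{p_k/\pgtk,\ \pgtk^2\ln^2 d\})$, and a final Bernstein bound on $\#\text{bad}=\sum_{\ell>k}\1[\ell\text{ bad}]$ (independent over $\ell$) turns this into a high–probability statement.

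\textbf{Main obstacle.} The delicate step is the last count. Getting the sharp coefficient $\max\{p_k/\pgtk,\ \pgtk^2\ln^2 d\}$ rather than a crude $O(\pgtk)$ forces one to separate long-tail features of \emph{intermediate} frequency (appearing a constant-to-polylog number of times, whose ``bad'' probability must be pushed down to $q_\ell^2\asymp\pgtk^2$ rather than $q_\ell$) from the genuinely rare ones, and to play the two contributions off against the size of $\gF$; this is precisely where the hypothesis $np_k=\Theta(\ln^2 d)$ — common features seen polylog-many times — enters. The full-rank claim of Part 1 is a secondary technical point, requiring the operator-norm bound $s+\ln d$ to be dominated by $np_k$.
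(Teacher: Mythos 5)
Your Parts 1 and 2 are correct and take essentially the same route as the paper: an exact first-moment computation (the paper's Lemma~\ref{lem: prod approx} bound $\prod_{i>k}(1-p_i)\ge 1-\pgtk$ plays the role of your exponential approximation) followed by a Chernoff bound, and a matrix concentration step for the rank claim (the paper's Lemma~\ref{lem: normalize matrix concentration} applies matrix Bernstein to $\mSigmalek^{-1/2}\mXlek^\top\mXlek\mSigmalek^{-1/2}$, which is interchangeable with your matrix Chernoff argument).

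The gap is in Part 3, and it is not a repairable detail of your per-feature route. First, your intermediate-regime bound $\P[\ell\text{ bad}]\lesssim q_\ell^2(np_\ell)^2e^{-\Omega(np_\ell)}$ is false: writing $N_\ell$ for the number of samples containing $\ell$, your own identity reads $\P[\ell\text{ bad}]=\sum_{m\ge1}\P[N_\ell=m]\,q_\ell^m\ge\P[N_\ell=1]\,q_\ell=np_\ell(1-p_\ell)^{n-1}q_\ell$, which is \emph{linear} in $q_\ell\asymp\pgtk$; a feature seen exactly once is bad precisely when its single host sample carries a second tail feature, an event of probability $q_\ell$, and no regime split removes this term. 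Second, even using only your correct bound $\P[\ell\text{ bad}]\lesssim np_\ell\pgtk$ on the rare features, the sum is genuinely of order $n\pgtk\cdot\sum_{\ell:\,np_\ell\lesssim 1}p_\ell$, which under the power law is $n\pgtk^2$ up to $\polylog(d)$ factors (and this is a matching lower bound, not just an upper bound); dividing by $|\gF|\le k+O(n\pgtk)$ gives a coefficient of order $\pgtk/\polylog(d)$, not $\max\{p_k/\pgtk,\ \pgtk^2\ln^2 d\}$, and these differ by a polynomial factor in the regime $\ln d/\sqrt{n}\ll\pgtk\ll 1/\ln^2 d$. So the concluding sentence "dividing the resulting bound by $|\gF|$ yields the coefficient $\Theta(\max\{p_k/\pgtk,\pgtk^2\ln^2 d\})$" is asserted rather than derived, and the arithmetic does not close. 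The paper reaches the stated coefficient by a sample-level rather than feature-level accounting: it lower-bounds $|\gF_0\cup\gF_1|\ge k+|S_1|/O(\ln^2 d)$ using that each tail feature occurs in at most $O(\ln^2 d)$ samples, and upper-bounds $|\gF\setminus(\gF_0\cup\gF_1)|$ through the tail support of the $O(\max\{1,n\pgtk^2\})$ samples lying outside $S_0\cup S_1$; any proof of the stated constants has to engage with that counting. (A smaller issue of the same flavor: $\sum_{\ell>k}(1-(1-p_\ell)^n)=\sum_{\ell>k}\Theta(\min\{1,np_\ell\})$ is only upper-bounded by $n\pgtk$, since tail features with $np_\ell\gg1$ are counted once rather than $np_\ell$ times, so the two-sided claim $|\gF|=k+\Theta(n\pgtk)$ does not follow from your Bernstein step alone.)
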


Note that in the noiseless case, we can achieve 0 training loss. Thus we know every sample must have 0 training loss. From the lemma above, we can see that there are $\Theta(n(1 - \pgtk))$ samples involving only the $k$ common features. Therefore, the common component $\hvbetalek$ is uniquely determined by these data points when $k < n$.

Moreover, once the common features $\hvbetalek$ are identified, many long-tail features $\hbeta_i$, those that appear only alongside common features and not with other long-tail features, can also be exactly recovered. This is formalized in the following lemma:
\begin{lemma}[Recovery of $\beta^*$]
    Suppose data generated from \eqref{eq: data generation} with noise $\sigma=0$, for $k$ that satisfies the condition of Lemma~\ref{lem: structure of X informal}, we have
    \begin{enumerate}
        \item Common features (Top-$k$ entries) are recovered: $\hvbetalek = \vbetalek^*$
        \item Most of long tail features shown up in data $X$(last-$d-k$ entries) are recovered:
        \[
            \left|\left\{i\in[d]: \hbeta_i = \beta^*_i\right\}\right|/\left|\gF\right|
            \ge 1-\Theta(\max\{p_k/\pgtk,\pgtk^2\ln^2 d\}).
        \]
        \item $\norm{\hvbeta_{\gF\setminus(\gF_0\cup\gF_1)}}_2\le \norm{\vbeta^*_{\gF\setminus(\gF_0\cup\gF_1)}}_2$ and $\hbeta_i=0$ for $i\not\in\gF$.
    \end{enumerate}
\end{lemma}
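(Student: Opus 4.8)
The plan is to work entirely on the high-probability event of Lemma~\ref{lem: structure of X informal} and exploit the combinatorial structure of $\mX$ it provides, together with the fact that in the noiseless case $\hvbeta$ must interpolate. First I would record the reduction: since $\sigma=0$, the target $\vbeta^*$ already attains $L(\vbeta^*)=0$, so the global minimum of $L$ is $0$ and the min-$\ell_2$-norm least-squares solution $\hvbeta=(\mX^\top\mX)^\dagger\mX^\top\vy$ satisfies $\mX\hvbeta=\vy=\mX\vbeta^*$, and moreover $\norm{\hvbeta}_2$ is the smallest among all interpolators. Two observations follow immediately. (i) For $i\notin\gF$ the $i$-th column of $\mX$ is identically zero, so $\ve_i$ is in the kernel of $\mX$ while $\hvbeta=\mX^\dagger\vy$ lies in the row space of $\mX$; hence $\hbeta_i=0$, which is the second half of item~3. (ii) Zeroing the coordinates of $\vbeta^*$ outside $\gF$ does not change $\mX\vbeta^*$, so that truncated vector is also an interpolator, giving the sharpened bound $\norm{\hvbeta}_2\le\norm{\vbeta^*_\gF}_2$, which I will use for the norm inequality in item~3.

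Next I would prove item~1 and the long-tail part of item~2 by reading off the interpolation identity on the special samples. For $i\in S_0$ the support of $\vx_i$ lies in $[k]$, so $\langle\vx_i,\hvbeta\rangle=\langle\vx_i,\vbeta^*\rangle$ becomes $\langle(\vx_i)_{\le k},\hvbetalek-\vbetalek^*\rangle=0$. Lemma~\ref{lem: structure of X informal} says the $S_0$ vectors span a $k$-dimensional space, and since they are supported on $[k]$ they span all of $\R^k$; hence $\hvbetalek-\vbetalek^*=0$, proving item~1 (and in particular $\gF_0=[k]$). For a long-tail index $\ell>k$ with $\ell\in\gF_1$, choose a witnessing sample $i\in S_1$ with $(\vx_i)_\ell\ne 0$; by definition of $S_1$, $\ell$ is the \emph{only} long-tail coordinate of $\vx_i$. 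Subtracting the common-part identity $\langle(\vx_i)_{\le k},\hvbetalek\rangle=\langle(\vx_i)_{\le k},\vbetalek^*\rangle$ from $\langle\vx_i,\hvbeta\rangle=\langle\vx_i,\vbeta^*\rangle$ leaves $\hbeta_\ell(\vx_i)_\ell=\beta^*_\ell(\vx_i)_\ell$, and since $(\vx_i)_\ell\in\{\pm1\}$ we get $\hbeta_\ell=\beta^*_\ell$. Therefore the recovered coordinate set contains $\gF_0\cup\gF_1$, and item~2 (together with its power-law specialization) follows directly from the lower bound on $|\gF_0\cup\gF_1|/|\gF|$ in Lemma~\ref{lem: structure of X informal}.

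Finally, for the norm inequality in item~3, write $R=\gF_0\cup\gF_1$ and $U=\gF\setminus R$. Items~1--2 give $\hvbeta_R=\vbeta^*_R$, and observation (i) gives that $\hvbeta$ is supported on $\gF$, so $\norm{\hvbeta}_2^2=\norm{\vbeta^*_R}_2^2+\norm{\hvbeta_U}_2^2$. By observation (ii), $\norm{\hvbeta}_2^2\le\norm{\vbeta^*_\gF}_2^2=\norm{\vbeta^*_R}_2^2+\norm{\vbeta^*_U}_2^2$. Cancelling the common term $\norm{\vbeta^*_R}_2^2$ yields $\norm{\hvbeta_U}_2\le\norm{\vbeta^*_U}_2$, completing item~3.

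\textbf{Main obstacle.} Essentially all of the difficulty is offloaded into Lemma~\ref{lem: structure of X informal}: the existence of $\Theta(n(1-\pgtk))$ samples supported on $[k]$ that span $\R^k$, the existence of $\Theta(n\pgtk(1-\pgtk))$ samples with exactly one long-tail coordinate, and the counting estimate $|\gF_0\cup\gF_1|\ge(1-\Theta(\max\{p_k/\pgtk,\pgtk^2\ln^2 d\}))|\gF|$ — all of which must be established without matrix concentration, since rare features do not concentrate. Granting that structural input, the present lemma is a short deterministic argument; the only place that needs a small idea rather than bookkeeping is observation (ii) in item~3, namely noticing that truncating $\vbeta^*$ to $\gF$ preserves interpolation, which is exactly what upgrades the generic bound $\norm{\hvbeta}_2\le\norm{\vbeta^*}_2$ to the sharper $\norm{\hvbeta}_2\le\norm{\vbeta^*_\gF}_2$ required to isolate the $U$-block.
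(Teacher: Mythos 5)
Your proposal is correct and follows essentially the same route as the paper: recover $\hvbetalek$ from the $S_0$ samples spanning $\R^k$, then peel off each long-tail coordinate from its witnessing $S_1$ sample, and invoke the min-$\ell_2$-norm property for item~3. Your explicit truncation trick ($\vbeta^*_\gF$ is also an interpolator, so the restricted bound $\norm{\hvbeta_{\gF\setminus(\gF_0\cup\gF_1)}}_2\le\norm{\vbeta^*_{\gF\setminus(\gF_0\cup\gF_1)}}_2$ follows by cancelling the recovered block) is a welcome elaboration of a step the paper dispatches in one line.
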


Using the parameter recovery result above, we can derive the performance guarantees stated in Theorem~\ref{thm: test loss} and Theorem~\ref{thm: ood}, by applying Claim~\ref{claim: pk}.

In the noisy case, we require stronger tail probability decay assumption of data $\mX$ to show that (with high probability) every data point contains at most one long-tail feature. Even with this assumption, the key challenge for the noisy case is that the minimum $\ell_2$ norm solution may not have 0 training loss, hence we can no longer say that the training loss is 0 on every individual training samples as in the noiseless case. However, we show that even in this setting, if a long-tail feature only appear in one sample, then the training loss for that particular sample must be 0 and the long-tail feature will be estimated correctly (up to noise level $\sigma$) with high probability. Similar argument can be generalized when long-tail feature appear in few training samples. We leave the details to Appendix~\ref{appendix: noisy}.

\section{Experiment Results}
\label{sec: experiments}
In this section, we first validate our theoretical results empirically, and then we show that similar ideas can be extended to simple neural network models.

\subsection{Experiments on Linear Model} 
We follow the setup in Section~\ref{sec:Preliminaries} where the parameters are chosen as: $n = 1000$, $d = 10000$, $s = 5$, $np_k = 10$ and $\vbeta^*=(\beta_1^*,\ldots,\beta_d^*)^\top$ with $\beta^*_i = i^{-0.1}$, and report results averaging over 50 runs.
In this setting the most frequent ``long-tail'' features would appear 10 times in expectation. As we can see in Figure~\ref{fig:norm_greater_k_vs_alpha}, for most values of $\alpha$ the error on the long-tail features is roughly proportional to the noise level $\sigma$. This is the same as the theoretical prediction as many of these long-tail examples appear only once and their error is slightly larger than the noise, while long-tail examples that appear more often can have a smaller error.
Figure~\ref{fig:norm_less_k_vs_alpha} shows that common features are learned well when $\alpha$ is not too small.
Figure~\ref{fig:iod_test_loss} and Figure~\ref{fig:ood_test_loss} show that the in-distribution and out-of-distribution test losses are small as in Theorem~\ref{thm: test loss},~\ref{thm: ood},~\ref{thm: test loss noisy} and~\ref{thm: ood noisy}. Since the test losses for $\sigma = 0$ and $\sigma = 0.05$ are very small, their curves overlap in the plot.

We also observe that when $\alpha$ is small ($\alpha =1$),  both the in-distribution and out-of-distribution test losses become higher, and the feature errors also increase. This is expected because in such cases the long-tail features appear more often in the training data, not often enough for concentration bounds but often enough to weaken the combinatorial structure our analysis relies on.

\begin{figure}[t]
    \centering
    \subcaptionbox{Average Error on Common Features \\ with Different $\alpha$ and $\sigma$\label{fig:norm_less_k_vs_alpha}}[0.24\textwidth]{%
        \includegraphics[width=\linewidth]{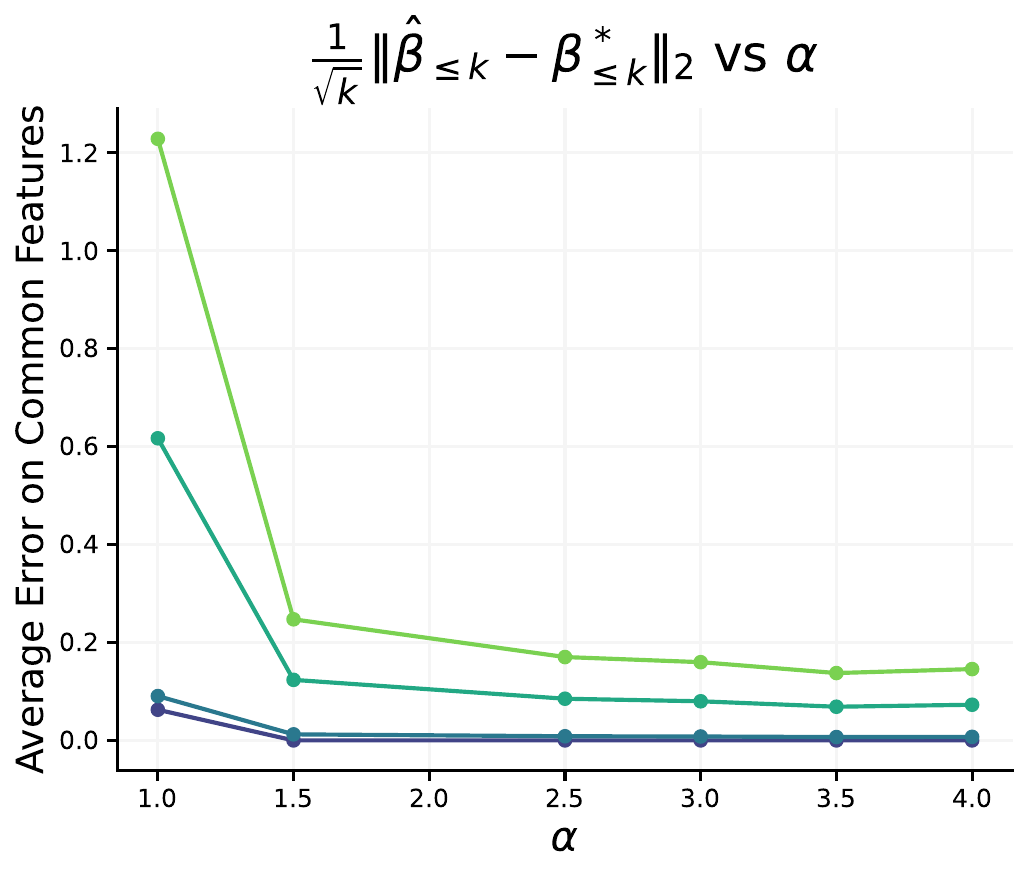}}
    \subcaptionbox{Average Error on Long-tail Features \\ with Different $\alpha$ and $\sigma$\label{fig:norm_greater_k_vs_alpha}}[0.24\textwidth]{%
        \includegraphics[width=\linewidth]{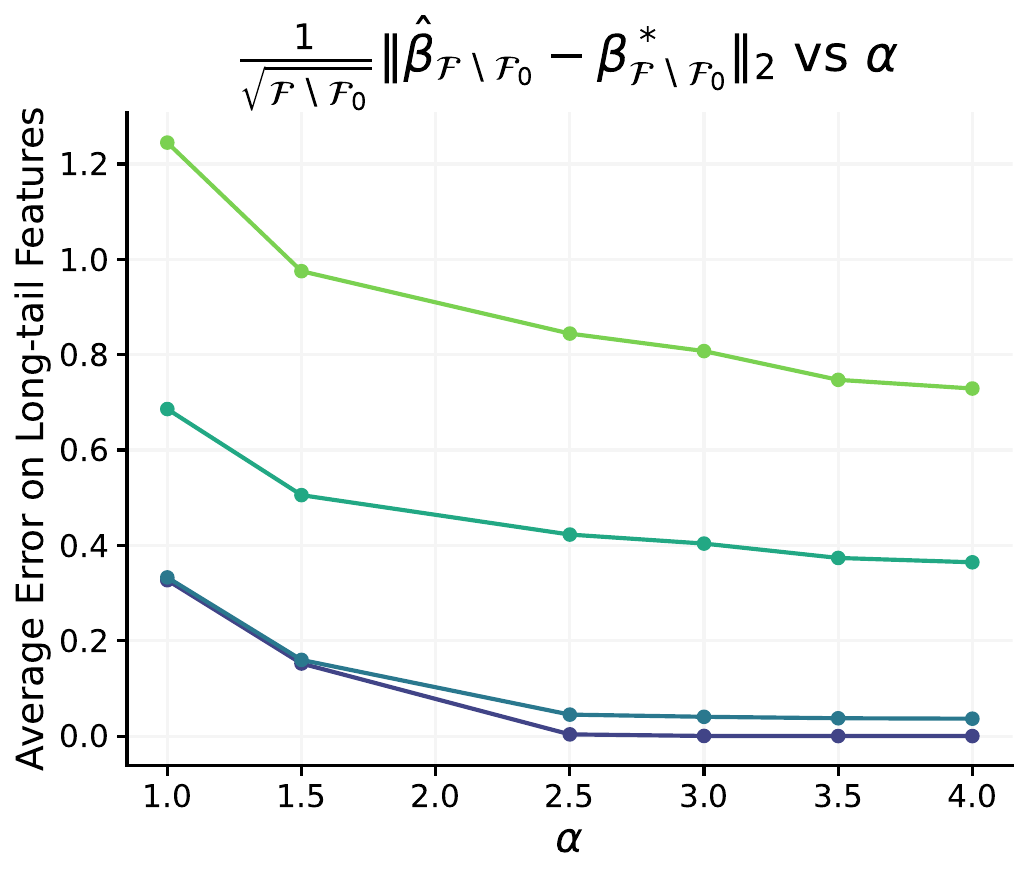}}
    \subcaptionbox{In-of-Distribution Test Loss\label{fig:iod_test_loss}}[0.24\textwidth]{%
        \includegraphics[width=\linewidth]{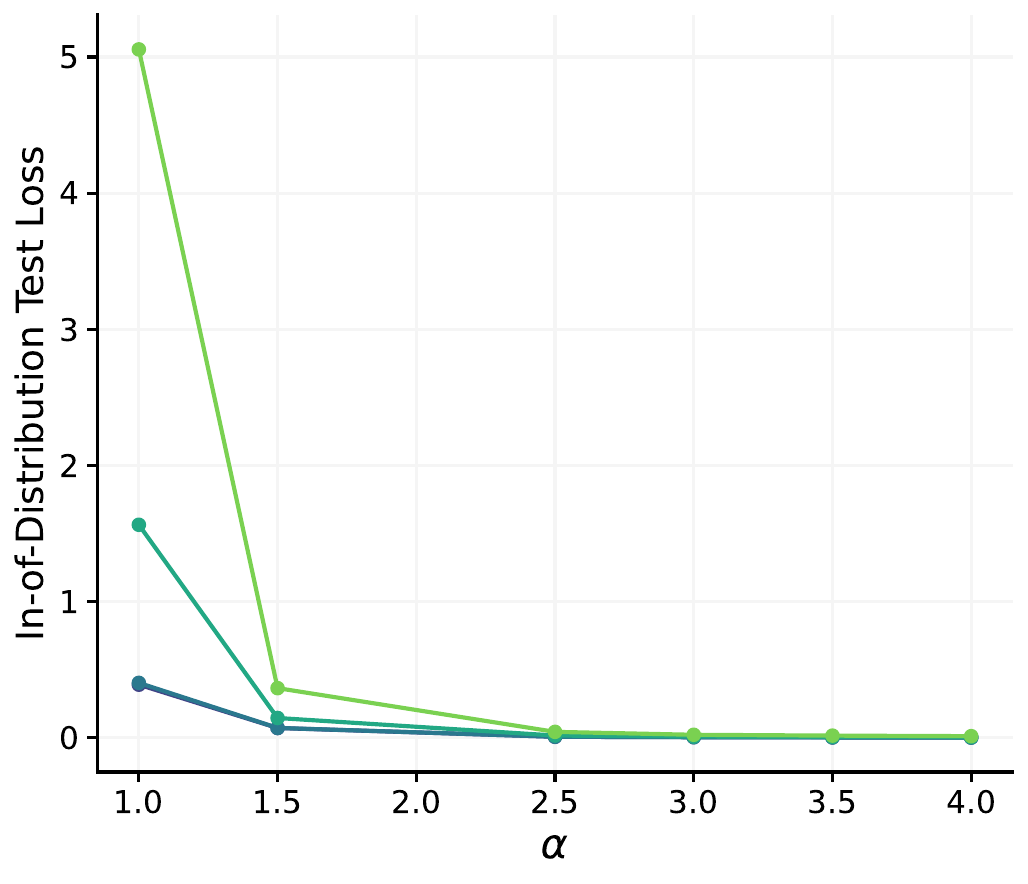}}
    \subcaptionbox{Out-of-Distribution Test Loss\label{fig:ood_test_loss}}[0.24\textwidth]{%
        \includegraphics[width=1.05\linewidth,trim={0 0 0 10},clip]{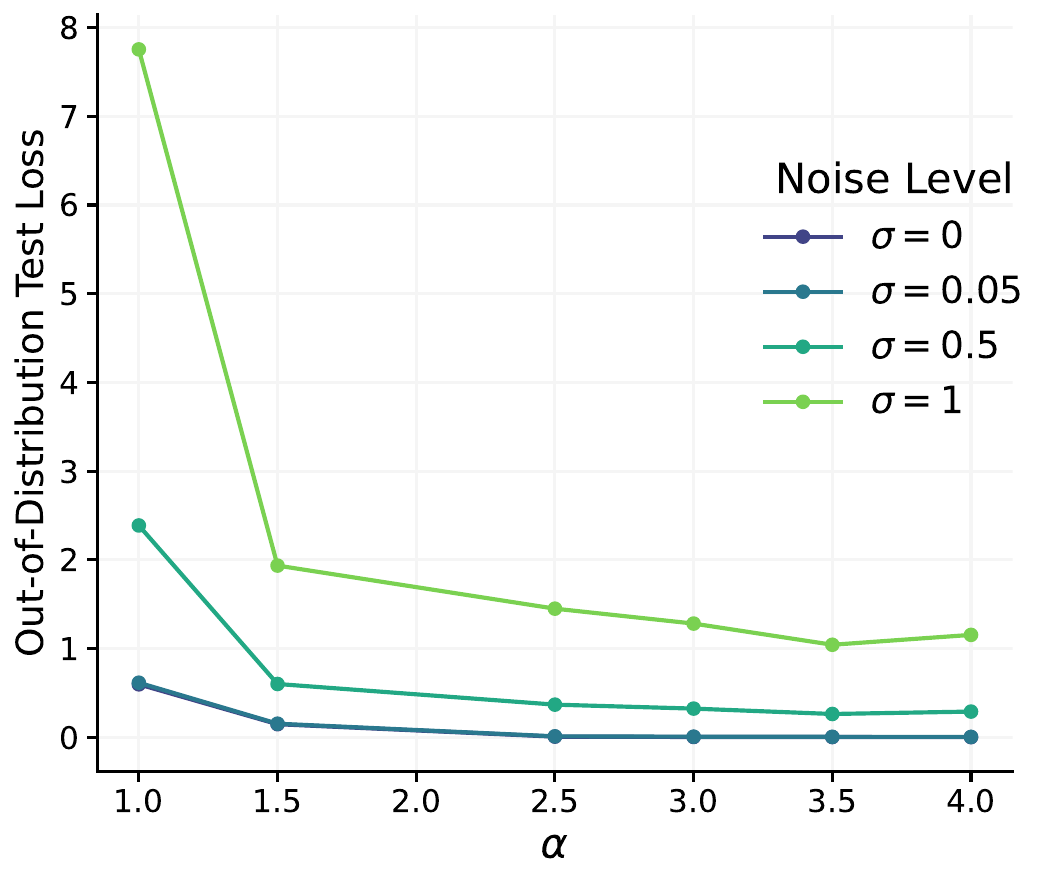}}
    \caption{(a)(b) Average Error on Common and Long-tail Features on Linear Model; (c)(d) In- and Out-of-Distribution Test Loss on Linear Model}
\end{figure}

\subsection{Task that Require Simple Composition}\label{sec: resnet mnist}
Next, we consider a simple task that would benefit from composition.  We constructed a new dataset based on MNIST. Each sample in this dataset consists of three randomly selected MNIST digit images in MNIST \textit{training} set, stacked along the channel dimension. The task is to predict the sum of 3 digits. Specifically, given that original MNIST images are of size $28 \times 28$, our new samples take the form of $3 \times 28 \times 28$, where each "channel" corresponds to a different digit. The digit selection also follows Zipf’s law distribution, meaning that lower digits (e.g., ‘0’) appear more frequently, while higher digits (e.g., ‘9’) appear less frequently.

To investigate the composition behavior and OOD generalization, every test sample is required to contain at least one digit ‘9’, introducing a distribution shift between the training and test sets. All test samples are drawn from the MNIST \textit{test} set. Importantly, this ensures that the digit ‘9’s used at test time have never been seen by the model during training.

\paragraph{Model Architecture}

We consider several models using ResNet-18 as our backbone. The first few models process each channel separately through a ResNet-18 network. That is, given a sample with three stacked digit images, we compute the feature representation $g_i$ (second to last layer) and label ($f_i$) for each channel independently using the standard ResNet-18 forward propagation. The final model output is obtained by some aggregation over the individual channel outputs. For {\bf Sum}, the final output is simply the sum of $f_i$'s; for {\bf Linear}, the final output is a trained linear layer on top of the second-to-last layer $g_i$'s (where the 3 features are concatenated); for {\bf 2-layer}, the final output is a trained 2-layer network on top of the second-to-last layer features $g_i$'s. Both the ResNet-18 part and the aggregation layers are trained simultaneously.

We also consider {\bf Cross-channel}, which is a ResNet-18 model that is directly applied to the stacked images as if the channels are color channels (therefore the information between different images are mixed up starting from layer 1). 

The training set has 32,000 samples, with number 0 appear 77,435 times and number 8 appear 955 times. We change the number of times 9 appears to study the relationship between that and the test performance. For the test set, we enforce that each sample has at least one position equal to number 9, where the two other positions are sampled uniformly from 0-4. Note that with the number of samples we have in the training data, with high probability most number combinations in the test set will not appear in the training set.

Our results Figure ~\ref{fig:comparison_resnet_variants} show that {\bf Sum, Linear, 2-layer} models all perform well on the test distribution. Since the digit ‘9’ is rare in the training data and therefore not learned well by the models, this good performance suggests that these models exhibit some form of compositional behavior by leveraging memorized instances of ‘9’. On the other hand, Figure~\ref{fig:resnet_normal_main} shows that {\bf Cross-channel} model performs significantly worse on the test distribution, indicating that compositional capability still depends on the model architecture.

\begin{figure}
    \centering
    \begin{subfigure}[t]{0.45\textwidth}
        \centering
        \includegraphics[width=0.9\linewidth]{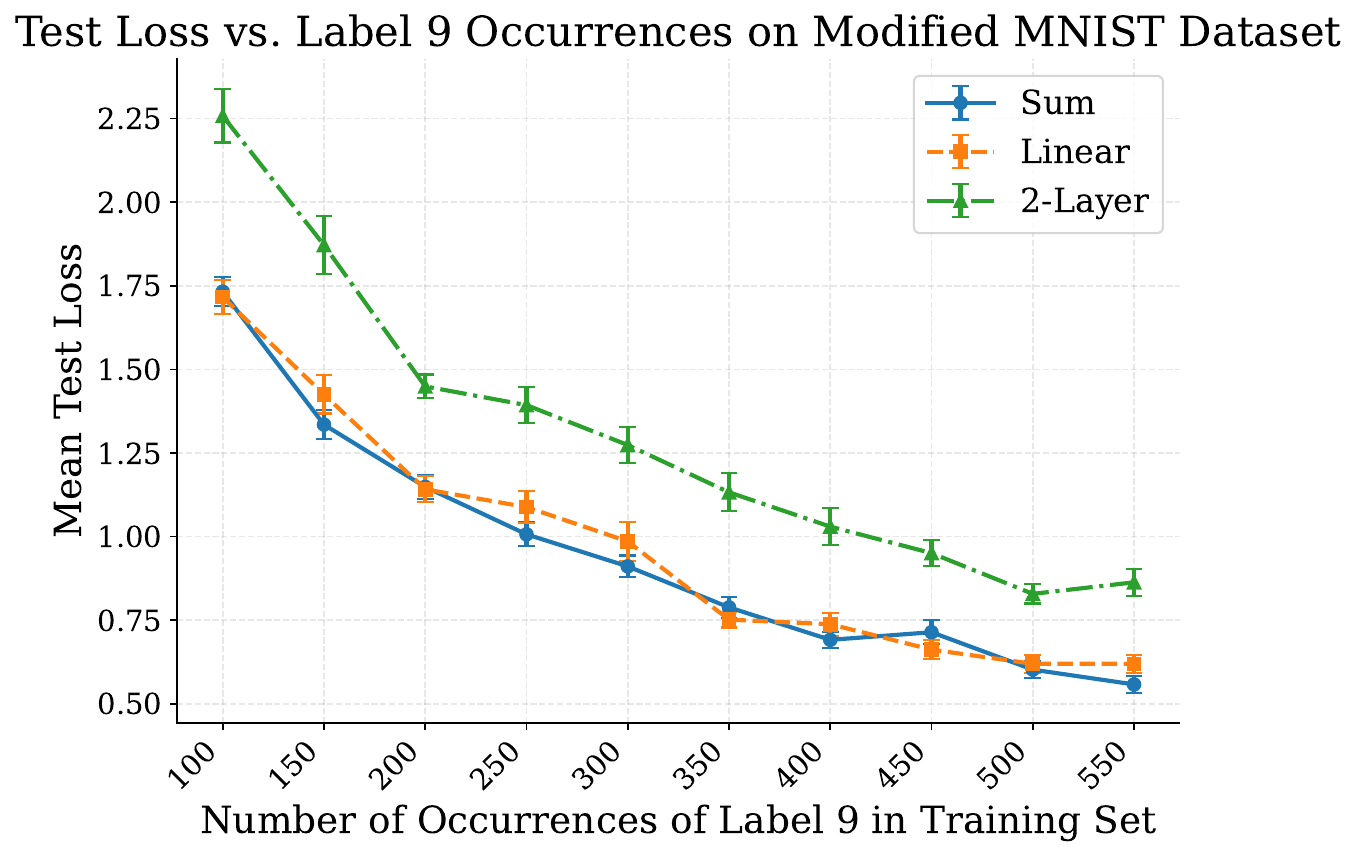}
        \caption{Test Loss of Sum, Linear, and 2-Layer Models}
        \label{fig:comparison_resnet_variants}
    \end{subfigure}
    \begin{subfigure}[t]{0.45\textwidth}
        \centering
        \includegraphics[width=0.9\linewidth]{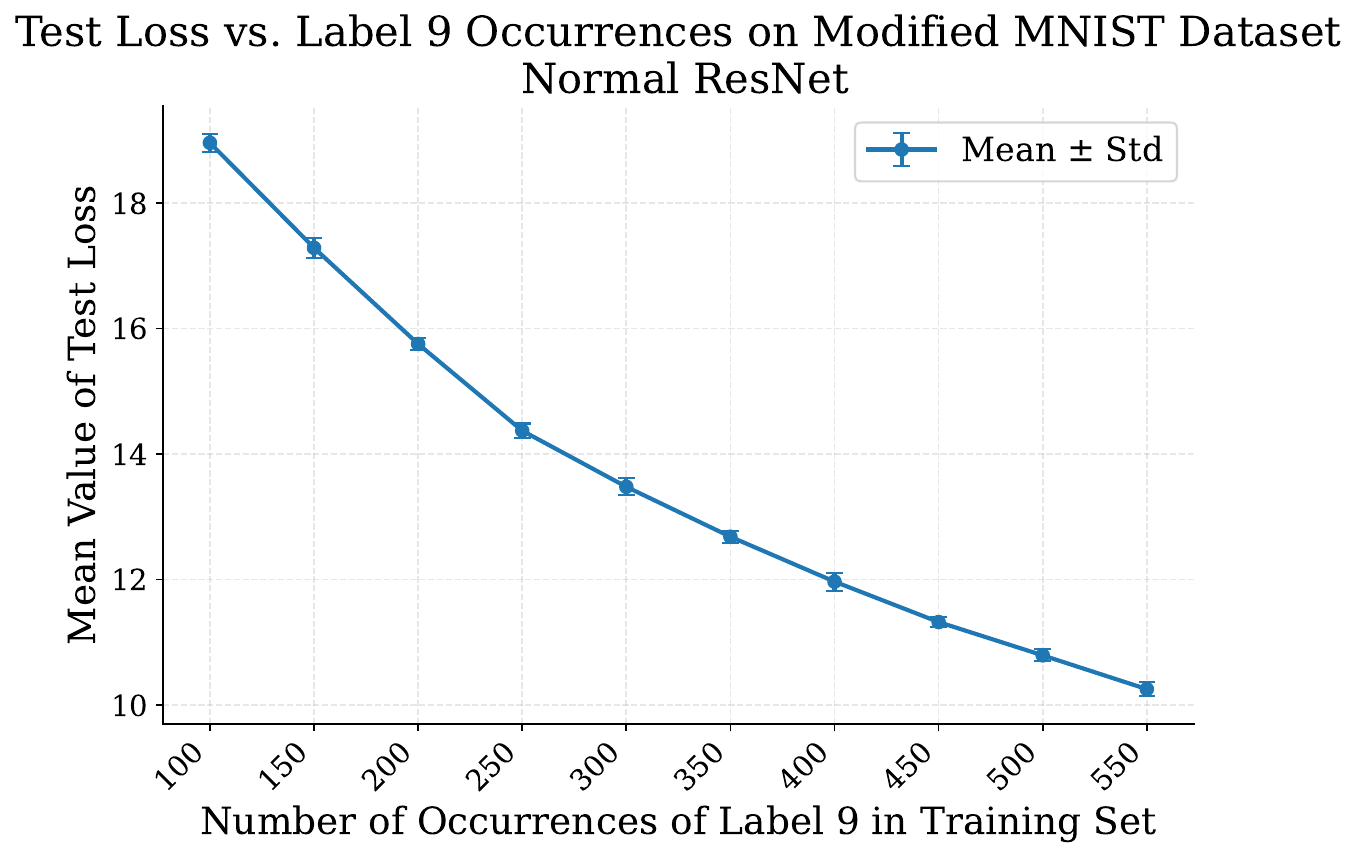}
        \caption{Test Loss of Cross-channel Model}
        \label{fig:resnet_normal_main}
    \end{subfigure}
    \caption{Test Performance Comparison Across {\bf Sum, Linear, 2-layer} and {\bf Cross-channel} Models}
    \label{fig:realworld_test_loss}
\end{figure}

\subsection{Effect of Memorization}\label{sec: omniglot}

To understand the effect of memorization, we further introduce even rarer examples. We consider a similar setting as the previous subsection. Except that in the training data, we choose 10 classes from Omniglot~\citep{Lake2015HumanlevelCL} and select 1 image from each class. That image is assigned a label of 0-9. Each Omniglot image only appears exactly once in the training data (as one of the 3 images in one of the training data). The task is still to predict the sum of labels for the 3 images. 

For the test data, we consider images which contain two of the images from Omniglot (for the images that have appeared in the training data). Such test examples never appear in the training data and would not be close to any of them in Euclidean or embedding distance.

As in Table~\ref{tab:memorization_omniglot}, despite only having seen the data once and the test data being significantly different, for {\bf Sum, Linear, 2-layer} models the performance on the test set is still reasonable when we don't use any regularization and make sure the model memorizes the training data. On the other hand, with weight decay, even though the training loss only increased slightly, the test performance become significantly worse. This suggests that memorization enables the model to develop a form of compositional ability that leverages the memorized Omniglot data, resulting in better generalization for memorizing models compared to those that do not memorize.
\textbf{Cross-channel} model does not perform well in this setting, further supporting our findings that the composition ability depends on architecture.

\begin{table}[ht]
\centering
\caption{Memorization behavior of ResNet variants under different weight decay (WD) settings}

\label{tab:memorization_omniglot}
\renewcommand{\arraystretch}{1.2}
\setlength{\tabcolsep}{12pt}
\begin{tabular}{lcc}
\toprule
\textbf{Model} & \textbf{WD = 0 (Memorization)} & \textbf{WD = 0.5 (No Memorization)} \\
\midrule
Sum          
    & \makecell{Test/Train Loss: 0.1760/0.0004} 
    & \makecell{Test/Train Loss: 2.8744/0.0245} \\
    \midrule
Linear    
    & \makecell{Test/Train Loss: 0.1662/0.0004} 
    & \makecell{Test/Train Loss: 1.5539/0.0505} \\
    \midrule
2-layer 
    & \makecell{Test/Train Loss: 0.1351/0.0010} 
    & \makecell{Test/Train Loss: 0.6447/0.0264} \\
    \midrule
Cross-channel
    & \makecell{Test/Train Loss:  22.9671/0.0023}
    & \makecell{Test/Train Loss:  23.6506/0.0375} \\
\bottomrule
\end{tabular}
\end{table}

\section{Conclusions}
In this paper, we showed that memorization can contribute more to generalization and out-of-distribution generalization when the model has composition capability. Our theory builds on a simple linear setting and introduces new techniques for handling long-tail features. Empirically, we show that similar intuition extends to nonlinear settings, and whether models have composition capability partly depend on their architecture. The main limitation of our work is that the theoretical result is restricted to linear setting, while the experiments are on small networks for supervised learning problems. We hope this work serves as a starting point for understanding memorization and composition in more complicated models that go beyond supervised learning.

\section*{Acknowledgement}
This work is supported by NSF Award DMS-2031849 and CCF-1845171 (CAREER).

\clearpage

\bibliographystyle{plainnat}
\bibliography{ref}

\begin{thebibliography}{38}
\providecommand{\natexlab}[1]{#1}
\providecommand{\url}[1]{\texttt{#1}}
\expandafter\ifx\csname urlstyle\endcsname\relax
  \providecommand{\doi}[1]{doi: #1}\else
  \providecommand{\doi}{doi: \begingroup \urlstyle{rm}\Url}\fi

\bibitem[Andreas(2019)]{andreas2019good}
Jacob Andreas.
\newblock Good-enough compositional data augmentation.
\newblock \emph{arXiv preprint arXiv:1904.09545}, 2019.

\bibitem[Andreas et~al.(2016)Andreas, Rohrbach, Darrell, and Klein]{andreas2016neural}
Jacob Andreas, Marcus Rohrbach, Trevor Darrell, and Dan Klein.
\newblock Neural module networks.
\newblock In \emph{Proceedings of the IEEE conference on computer vision and pattern recognition}, pages 39--48, 2016.

\bibitem[Arora and Goyal(2023)]{arora2023theory}
Sanjeev Arora and Anirudh Goyal.
\newblock A theory for emergence of complex skills in language models.
\newblock \emph{arXiv preprint arXiv:2307.15936}, 2023.

\bibitem[Baroni(2020)]{baroni2020linguistic}
Marco Baroni.
\newblock Linguistic generalization and compositionality in modern artificial neural networks.
\newblock \emph{Philosophical Transactions of the Royal Society B}, 375\penalty0 (1791):\penalty0 20190307, 2020.

\bibitem[Bartlett et~al.(2020)Bartlett, Long, Lugosi, and Tsigler]{bartlett2020benign}
Peter~L Bartlett, Philip~M Long, G{\'a}bor Lugosi, and Alexander Tsigler.
\newblock Benign overfitting in linear regression.
\newblock \emph{Proceedings of the National Academy of Sciences}, 117\penalty0 (48):\penalty0 30063--30070, 2020.

\bibitem[Belkin et~al.(2019)Belkin, Hsu, Ma, and Mandal]{belkin2019reconciling}
Mikhail Belkin, Daniel Hsu, Siyuan Ma, and Soumik Mandal.
\newblock Reconciling modern machine-learning practice and the classical bias--variance trade-off.
\newblock \emph{Proceedings of the National Academy of Sciences}, 116\penalty0 (32):\penalty0 15849--15854, 2019.

\bibitem[Belkin et~al.(2020)Belkin, Hsu, and Xu]{belkin2020two}
Mikhail Belkin, Daniel Hsu, and Ji~Xu.
\newblock Two models of double descent for weak features.
\newblock \emph{SIAM Journal on Mathematics of Data Science}, 2\penalty0 (4):\penalty0 1167--1180, 2020.

\bibitem[Bubeck et~al.(2023)Bubeck, Chadrasekaran, Eldan, Gehrke, Horvitz, Kamar, Lee, Lee, Li, Lundberg, et~al.]{bubeck2023sparks}
S{\'e}bastien Bubeck, Varun Chadrasekaran, Ronen Eldan, Johannes Gehrke, Eric Horvitz, Ece Kamar, Peter Lee, Yin~Tat Lee, Yuanzhi Li, Scott Lundberg, et~al.
\newblock Sparks of artificial general intelligence: Early experiments with gpt-4, 2023.

\bibitem[Cao and Yang(2015)]{cao2015towards}
Yinzhi Cao and Junfeng Yang.
\newblock Towards making systems forget with machine unlearning.
\newblock In \emph{2015 IEEE symposium on security and privacy}, pages 463--480. IEEE, 2015.

\bibitem[Carlini et~al.(2019)Carlini, Liu, Erlingsson, Kos, and Song]{carlini2019secret}
Nicholas Carlini, Chang Liu, {\'U}lfar Erlingsson, Jernej Kos, and Dawn Song.
\newblock The secret sharer: Evaluating and testing unintended memorization in neural networks.
\newblock In \emph{28th USENIX security symposium (USENIX security 19)}, pages 267--284, 2019.

\bibitem[Carlini et~al.(2022)Carlini, Ippolito, Jagielski, Lee, Tramer, and Zhang]{carlini2022quantifying}
Nicholas Carlini, Daphne Ippolito, Matthew Jagielski, Katherine Lee, Florian Tramer, and Chiyuan Zhang.
\newblock Quantifying memorization across neural language models.
\newblock In \emph{The Eleventh International Conference on Learning Representations}, 2022.

\bibitem[Conklin et~al.(2021)Conklin, Wang, Smith, and Titov]{conklin2021meta}
Henry Conklin, Bailin Wang, Kenny Smith, and Ivan Titov.
\newblock Meta-learning to compositionally generalize.
\newblock \emph{arXiv preprint arXiv:2106.04252}, 2021.

\bibitem[Feldman(2020)]{feldman2020does}
Vitaly Feldman.
\newblock Does learning require memorization? a short tale about a long tail.
\newblock In \emph{Proceedings of the 52nd Annual ACM SIGACT Symposium on Theory of Computing}, pages 954--959, 2020.

\bibitem[Feldman and Zhang(2020)]{feldman2020neural}
Vitaly Feldman and Chiyuan Zhang.
\newblock What neural networks memorize and why: Discovering the long tail via influence estimation.
\newblock \emph{Advances in Neural Information Processing Systems}, 33:\penalty0 2881--2891, 2020.

\bibitem[Fodor and Pylyshyn(1988)]{fodor1988connectionism}
Jerry~A Fodor and Zenon~W Pylyshyn.
\newblock Connectionism and cognitive architecture: A critical analysis.
\newblock \emph{Cognition}, 28\penalty0 (1-2):\penalty0 3--71, 1988.

\bibitem[Garg et~al.(2020)Garg, Goldwasser, and Vasudevan]{garg2020formalizing}
Sanjam Garg, Shafi Goldwasser, and Prashant~Nalini Vasudevan.
\newblock Formalizing data deletion in the context of the right to be forgotten.
\newblock In \emph{Annual International Conference on the Theory and Applications of Cryptographic Techniques}, pages 373--402. Springer, 2020.

\bibitem[Gunasekar et~al.(2017)Gunasekar, Woodworth, Bhojanapalli, Neyshabur, and Srebro]{gunasekar2017implicit}
Suriya Gunasekar, Blake~E Woodworth, Srinadh Bhojanapalli, Behnam Neyshabur, and Nati Srebro.
\newblock Implicit regularization in matrix factorization.
\newblock \emph{Advances in neural information processing systems}, 30, 2017.

\bibitem[Hastie et~al.(2022)Hastie, Montanari, Rosset, and Tibshirani]{hastie2022surprises}
Trevor Hastie, Andrea Montanari, Saharon Rosset, and Ryan~J Tibshirani.
\newblock Surprises in high-dimensional ridgeless least squares interpolation.
\newblock \emph{The Annals of Statistics}, 50\penalty0 (2):\penalty0 949--986, 2022.

\bibitem[Hessel and Schofield(2021)]{hessel2021effective}
Jack Hessel and Alexandra Schofield.
\newblock How effective is bert without word ordering? implications for language understanding and data privacy.
\newblock In \emph{Proceedings of the 59th Annual Meeting of the Association for Computational Linguistics and the 11th International Joint Conference on Natural Language Processing (Volume 2: Short Papers)}, pages 204--211, 2021.

\bibitem[Hudson and Manning(2018)]{hudson2018compositional}
Drew~A Hudson and Christopher~D Manning.
\newblock Compositional attention networks for machine reasoning.
\newblock \emph{arXiv preprint arXiv:1803.03067}, 2018.

\bibitem[Hupkes et~al.(2020)Hupkes, Dankers, Mul, and Bruni]{hupkes2020compositionality}
Dieuwke Hupkes, Verna Dankers, Mathijs Mul, and Elia Bruni.
\newblock Compositionality decomposed: How do neural networks generalise?
\newblock \emph{Journal of Artificial Intelligence Research}, 67:\penalty0 757--795, 2020.

\bibitem[Jacot et~al.(2018)Jacot, Gabriel, and Hongler]{jacot2018neural}
Arthur Jacot, Franck Gabriel, and Cl{\'e}ment Hongler.
\newblock Neural tangent kernel: Convergence and generalization in neural networks.
\newblock \emph{Advances in neural information processing systems}, 31, 2018.

\bibitem[Kim and Linzen(2020)]{kim2020cogs}
Najoung Kim and Tal Linzen.
\newblock Cogs: A compositional generalization challenge based on semantic interpretation.
\newblock \emph{arXiv preprint arXiv:2010.05465}, 2020.

\bibitem[Lake and Baroni(2018)]{lake2018generalization}
Brenden Lake and Marco Baroni.
\newblock Generalization without systematicity: On the compositional skills of sequence-to-sequence recurrent networks.
\newblock In \emph{International conference on machine learning}, pages 2873--2882. PMLR, 2018.

\bibitem[Lake et~al.(2015)Lake, Salakhutdinov, and Tenenbaum]{Lake2015HumanlevelCL}
Brenden~M. Lake, Ruslan Salakhutdinov, and Joshua~B. Tenenbaum.
\newblock Human-level concept learning through probabilistic program induction.
\newblock \emph{Science}, 350:\penalty0 1332 -- 1338, 2015.

\bibitem[Lin et~al.(2024)Lin, Wu, Kakade, Bartlett, and Lee]{lin2024scaling}
Licong Lin, Jingfeng Wu, Sham~M Kakade, Peter~L Bartlett, and Jason~D Lee.
\newblock Scaling laws in linear regression: Compute, parameters, and data.
\newblock \emph{Advances in Neural Information Processing Systems}, 37:\penalty0 60556--60606, 2024.

\bibitem[Lippl and Stachenfeld(2024)]{lippl2024does}
Samuel Lippl and Kim Stachenfeld.
\newblock When does compositional structure yield compositional generalization? a kernel theory.
\newblock \emph{arXiv preprint arXiv:2405.16391}, 2024.

\bibitem[Parcalabescu et~al.(2021)Parcalabescu, Cafagna, Muradjan, Frank, Calixto, and Gatt]{parcalabescu2021valse}
Letitia Parcalabescu, Michele Cafagna, Lilitta Muradjan, Anette Frank, Iacer Calixto, and Albert Gatt.
\newblock Valse: A task-independent benchmark for vision and language models centered on linguistic phenomena.
\newblock \emph{arXiv preprint arXiv:2112.07566}, 2021.

\bibitem[Tang et~al.(2025)Tang, Lake, and Jazayeri]{tang2025explainable}
Cheng Tang, Brenden Lake, and Mehrdad Jazayeri.
\newblock An explainable transformer circuit for compositional generalization.
\newblock \emph{arXiv preprint arXiv:2502.15801}, 2025.

\bibitem[Tirumala et~al.(2022)Tirumala, Markosyan, Zettlemoyer, and Aghajanyan]{tirumala2022memorization}
Kushal Tirumala, Aram Markosyan, Luke Zettlemoyer, and Armen Aghajanyan.
\newblock Memorization without overfitting: Analyzing the training dynamics of large language models.
\newblock \emph{Advances in Neural Information Processing Systems}, 35:\penalty0 38274--38290, 2022.

\bibitem[Tropp et~al.(2015)]{tropp2015introduction}
Joel~A Tropp et~al.
\newblock An introduction to matrix concentration inequalities.
\newblock \emph{Foundations and Trends{\textregistered} in Machine Learning}, 8\penalty0 (1-2):\penalty0 1--230, 2015.

\bibitem[Wei et~al.(2024)Wei, Zhang, Zhang, Ding, Chen, Ong, Zhang, and Xiang]{wei2024memorization}
Jiaheng Wei, Yanjun Zhang, Leo~Yu Zhang, Ming Ding, Chao Chen, Kok-Leong Ong, Jun Zhang, and Yang Xiang.
\newblock Memorization in deep learning: A survey.
\newblock \emph{arXiv preprint arXiv:2406.03880}, 2024.

\bibitem[Wiedemer et~al.(2023)Wiedemer, Mayilvahanan, Bethge, and Brendel]{wiedemer2023compositional}
Thadd{\"a}us Wiedemer, Prasanna Mayilvahanan, Matthias Bethge, and Wieland Brendel.
\newblock Compositional generalization from first principles.
\newblock \emph{Advances in Neural Information Processing Systems}, 36:\penalty0 6941--6960, 2023.

\bibitem[Yuksekgonul et~al.(2022)Yuksekgonul, Bianchi, Kalluri, Jurafsky, and Zou]{yuksekgonul2022and}
Mert Yuksekgonul, Federico Bianchi, Pratyusha Kalluri, Dan Jurafsky, and James Zou.
\newblock When and why vision-language models behave like bags-of-words, and what to do about it?
\newblock \emph{arXiv preprint arXiv:2210.01936}, 2022.

\bibitem[Zhang et~al.(2017)Zhang, Bengio, Hardt, Recht, and Vinyals]{zhang2017understanding}
Chiyuan Zhang, Samy Bengio, Moritz Hardt, Benjamin Recht, and Oriol Vinyals.
\newblock Understanding deep learning requires rethinking generalization.
\newblock In \emph{International Conference on Learning Representations}, 2017.

\bibitem[Zhang et~al.(2023)Zhang, Ippolito, Lee, Jagielski, Tram{\`e}r, and Carlini]{zhang2023counterfactual}
Chiyuan Zhang, Daphne Ippolito, Katherine Lee, Matthew Jagielski, Florian Tram{\`e}r, and Nicholas Carlini.
\newblock Counterfactual memorization in neural language models.
\newblock \emph{Advances in Neural Information Processing Systems}, 36:\penalty0 39321--39362, 2023.

\bibitem[Zhao et~al.(2025)Zhao, Kaur, Yu, Goyal, and Arora]{zhao2025modelslearnskillcomposition}
Haoyu Zhao, Simran Kaur, Dingli Yu, Anirudh Goyal, and Sanjeev Arora.
\newblock Can models learn skill composition from examples?, 2025.

\bibitem[Zhu et~al.(2014)Zhu, Anguelov, and Ramanan]{zhu2014capturing}
Xiangxin Zhu, Dragomir Anguelov, and Deva Ramanan.
\newblock Capturing long-tail distributions of object subcategories.
\newblock In \emph{Proceedings of the IEEE Conference on Computer Vision and Pattern Recognition}, pages 915--922, 2014.

\end{thebibliography}

\newpage
\appendix
\section{Omitted Proof for noiseless results}
In this section we give the detailed proof for the noiseless results Theorem~\ref{thm: test loss} and Theorem~\ref{thm: ood}.

\thmtestloss*
\thmood*

\subsection{Proof overview}\label{appendix: noiseless proof overview}
We first show the structure of data $\mX$ as below: most of data has only common features; for the data that contains long-tail features, most of them has only one long-tail feature. The proof is based on column/row-wise concentration to exploit the combinatorial structure of $\mX$ instead of the concentration of whole matrix, which is difficult to handle due to high variance.
\begin{restatable}[Structure of $\mX$]{lemma}{lemstructureofX}\label{lem: structure of X}
    Suppose data generated from \eqref{eq: data generation}, with probability at least 0.99, the following hold with $k$ satisfying $np_k = \Theta(\ln^2 d)$ and $\pgtk\le 1-c_p$ with any constant $c_p$:
    \begin{enumerate}
        \item There exist at least $\Theta(n(1-\pgtk))$ samples whose non-zero entries are only within first $k$ entries. That is
        \[
            |S_0|:=\left|\{i\in[n]: (\vx_i)_j = 0,\forall j > k\}\right| \ge n(1-\pgtk) - O(\sqrt{n(1-\pgtk)}).
        \]
        Moreover, the dimension spans by these samples in $S_0$ is $k$.
        
        \item There exist at least $\Theta(n\pgtk(1-\pgtk))$ samples that only have exactly one non-zero entry in long-tail entries (not in the first $k$ entries). That is
        \begin{align*}
            |S_1|:=&\left|\{i\in[n]: \exists \ell >k \text{ such that } (\vx_i)_\ell\ne 0 \text{ and } (\vx_i)_j = 0,\forall j > k, j\ne \ell\}\right|\\ 
            \ge& n\pgtk(1-\pgtk)-O(\sqrt{n\pgtk(1-\pgtk)})
        \end{align*}
        and $|S_1|\lesssim n\pgtk$.

        \item Denote the set of features show up in data as 
        \[
            \gF = \{j\in[d]: \exists i\in[n] s.t. (\vx_i)_j\ne 0\}.
        \]
        Similarly define $\gF_0$ and $\gF_1$ for $S_0$ and $S_1$ as the set of features show up in $S_0$ and $S_1$.
        
        We have total number of features shown up in data is $|\gF|=k+\Theta(n\pgtk)$ and number of long tail features is $|\gF_1\setminus\gF_0|\ge\Theta(\pgtk/(p_k\ln^2 d))$. Moreover, the number of features show up in $S_0$ and $S_1$ is at least 
        \[
            |\gF_0\cup\gF_1|\ge \left(1-\Theta\left(\max\{p_k/\pgtk,\pgtk^2\ln^2 d\}\right)\right)|\gF|
        \] 
    \end{enumerate}
\end{restatable}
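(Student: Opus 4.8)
The plan is to control each of the three parts by reducing the relevant event to a sum of independent (or nearly independent) indicator random variables over the $n$ rows, then applying Bernstein/Chernoff-type concentration together with the parameter constraints $s<c_{\log}<k<n/c_{\log}<d/c_{\log}^2$ and $np_k=\Theta(\ln^2 d)$.

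For \emph{Part 1}, note that a sample $\vx_i$ lies in $S_0$ exactly when all of its long-tail coordinates are zero, which happens independently across $i$ with probability $q_0 := \prod_{j>k}(1-p_j)$. Since $\sum_{j>k}p_j = \pgtk \le 1-c_p$ and $\sum_{j>k}p_j^2 \le p_k\pgtk = o(1)$ (using $p_k=\tilde{\Theta}(1/n)=o(1)$), we have $q_0 = \Theta(e^{-\pgtk}) = \Theta(1-\pgtk)$ up to constants, by the standard bound $e^{-p_j-p_j^2}\le 1-p_j\le e^{-p_j}$. Then $|S_0|$ is a sum of $n$ i.i.d.\ Bernoulli$(q_0)$ variables, and a Chernoff bound gives $|S_0|\ge nq_0 - O(\sqrt{nq_0\ln d}) = \Theta(n(1-\pgtk))$ with the failure probability as small as needed after a union bound. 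For the claim that $S_0$ spans dimension $k$: the rows of $\mX$ restricted to $S_0$ and to the first $k$ coordinates form i.i.d.\ draws from $\gD(\mSigmalek)$; since $p_k = \tilde{\Theta}(1/n)$ and $|S_0| = \Theta(n)$, each of the $k$ common features appears in $\Theta(\ln^2 d)\gg 1$ rows of $S_0$, and a short rank argument (e.g.\ the submatrix on such rows is, after reordering, close to an identity-plus-sparse-noise structure, or more cleanly: the Gram matrix $\mX_{S_0}^\top\mX_{S_0}$ restricted to the top-$k$ block concentrates around $|S_0|\,\mSigmalek$ since within the $k\le n/c_{\log}$ common coordinates ordinary matrix concentration applies) shows full column rank $k$.

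For \emph{Part 2}, a sample $\vx_i$ lies in $S_1$ when exactly one long-tail coordinate is nonzero; this happens with probability $q_1 := \sum_{\ell>k} p_\ell \prod_{j>k,j\ne\ell}(1-p_j) = q_0\sum_{\ell>k}\frac{p_\ell}{1-p_\ell} = \Theta(q_0\pgtk) = \Theta(\pgtk(1-\pgtk))$, again independent across rows, so Chernoff gives the stated lower bound; the upper bound $|S_1|\lesssim n\pgtk$ follows since $q_1\le \pgtk$. For \emph{Part 3}, $|\gF| = k + |\gF\cap\{j>k\}|$, and $|\gF\cap\{j>k\}|$ is a sum of independent indicators $\mathbf{1}[\text{feature }j\text{ appears}]$ with $\P = 1-(1-p_j)^n = \Theta(\min\{1,np_j\})$; summing and concentrating gives $|\gF|=k+\Theta(n\pgtk)$ (here one uses $\sum_{j>k}\min\{1,np_j\}=\Theta(n\pgtk)$, valid since $np_k = \tilde\Theta(1)$ so most long-tail features have $np_j\lesssim 1$). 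The more delicate bound is $|\gF_0\cup\gF_1|\ge (1-\Theta(\max\{p_k/\pgtk,\pgtk^2\ln^2 d\}))|\gF|$: I would bound $|\gF\setminus(\gF_0\cup\gF_1)|$ from above by counting long-tail features $j$ that appear \emph{only} in rows having $\ge 2$ long-tail features (call these rows $S_{\ge 2}$). First, $\E|S_{\ge 2}| = n(1-q_0-q_1) = \Theta(n\pgtk^2)$, so by Markov/Chernoff $|S_{\ge2}|\lesssim n\pgtk^2\ln d$ whp; the number of distinct long-tail features appearing in these rows is at most (a constant times) $|S_{\ge 2}|$ times the per-row long-tail count, which is $O(\pgtk + \ln d)=O(\ln d)$ whp, giving $O(n\pgtk^2\ln^2 d)$; comparing to $|\gF|=\Theta(n\pgtk)$ yields the $\pgtk^2\ln^2 d$ term. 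Separately, a long-tail feature $j$ with $p_j$ close to $p_k$ might appear only once and that single occurrence could be in a multi-long-tail row; a union bound over the $\lesssim n\pgtk$ present features, each conditionally landing in $S_{\ge2}$ with probability $O(\pgtk)$... actually the sharper route is: conditioned on feature $j$ appearing in $m_j\ge 1$ rows, each such row independently has another long-tail feature with probability $O(\pgtk)$, so $j\notin\gF_0\cup\gF_1$ with probability $O(\pgtk)^{m_j}$; summing $\sum_{j>k, m_j\ge1} O(\pgtk)$ over features with $m_j = 1$ (the $\Theta(\pgtk/(p_k\ln^2 d))$... rather, the ones with small $np_j$) produces the $p_k/\pgtk$ term after dividing by $|\gF|$. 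The bound $|\gF_1\setminus\gF_0|\ge \Theta(\pgtk/(p_k\ln^2 d))$ follows because the long-tail features with the largest $p_j$ (those with $np_j\approx np_k=\Theta(\ln^2 d)$) number $\Theta(\pgtk/(p_k))$... adjusting, $\Theta((\pgtk/p_k)/\ln^2 d)$ since each contributes probability mass $\Theta(p_k)$ and collectively $\pgtk$, and a constant fraction of their occurrences are in $S_1$ rather than $S_0$ or $S_{\ge2}$.

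The main obstacle is Part 3's bound on $|\gF_0\cup\gF_1|$: unlike $|S_0|,|S_1|,|\gF|$, which are clean sums of i.i.d.\ row-indicators, the event ``feature $j$ never appears with another long-tail feature'' couples the presence pattern of $j$ with the long-tail contents of the (random number of) rows where $j$ is active, and features are \emph{not} independent of each other conditionally on the column sums. I would handle this by conditioning on the set of active features and on which rows each is active in, reducing to independence across the \emph{other} long-tail coordinates within each row, and then carefully splitting the failure contribution into the ``rare feature lands in a bad row'' part (giving $p_k/\pgtk$) and the ``too many bad rows exist'' part (giving $\pgtk^2\ln^2 d$); getting the two error terms to come out with the right exponents, and verifying the per-row long-tail count concentrates at $O(\pgtk+\ln d)$, is where the bookkeeping is heaviest.
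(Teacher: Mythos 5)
Your treatment of items 1 and 2 matches the paper's proof essentially step for step: the same per-row probability computations ($\prod_{j>k}(1-p_j)\ge 1-\pgtk$, which the paper proves via an inclusion--exclusion lemma rather than $e^{-p-p^2}$ bounds, and $\sum_{\ell>k}p_\ell\prod_{j>k,j\ne\ell}(1-p_j)\ge\pgtk(1-\pgtk)$), Chernoff across the $n$ i.i.d.\ rows, and matrix Bernstein on the normalized top-$k$ Gram matrix for the rank claim. The overall architecture of item 3 is also the paper's: lower-bound $|\gF_0\cup\gF_1|$ by $|S_1|$ divided by the maximum multiplicity $O(\ln^2 d)$ of any single tail feature (which is also where $|\gF_1\setminus\gF_0|\ge\Theta(n\pgtk/\ln^2 d)$ comes from --- your detour through features with $p_j\approx p_k$ is unnecessary), and upper-bound $|\gF\setminus(\gF_0\cup\gF_1)|$ by counting tail entries in rows outside $S_0\cup S_1$.

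The step you yourself flag as the heaviest bookkeeping is, however, a genuine gap: neither of your proposed mechanisms produces the stated error terms. (i) Multiplying the two separate high-probability bounds $|S_{\ge 2}|\lesssim n\pgtk^2\ln d$ and per-row tail count $O(\ln d)$ gives $n\pgtk^2\ln^2 d$ bad features, and dividing by $|\gF|=\Theta(n\pgtk)$ yields $\pgtk\ln^2 d$, not $\pgtk^2\ln^2 d$ --- the arithmetic in that sentence drops a factor of $\pgtk$, and the resulting bound is genuinely weaker than the lemma whenever $\pgtk=o(1)$. The paper instead bounds the total number of nonzero tail entries over all rows outside $S_0\cup S_1$ in one shot by $\Theta(\max\{1,n\pgtk^3\})$ and divides by the lower bound $|\gF_0\cup\gF_1|\ge k+\Theta(n\pgtk/\ln^2 d)$; that is where the single $\ln^2 d$ enters. (ii) The $p_k/\pgtk$ term does not come from ``each present feature lands in a bad row with probability $O(\pgtk)$'' --- that sum evaluates to $O(n\pgtk^2)/|\gF|=O(\pgtk)$ --- nor from any union bound over features. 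In the paper it arises from the regime $n\pgtk^3\le 1$, where the count of bad features is only controlled by a constant floor $\Theta(1)$, and the ratio of that constant to the guaranteed $|\gF_0\cup\gF_1|\gtrsim n\pgtk/\ln^2 d=\Theta(\pgtk/p_k)$ (using $np_k=\Theta(\ln^2 d)$) is exactly $p_k/\pgtk$. Without the one-shot count of tail entries outside $S_0\cup S_1$ and without identifying this constant-floor mechanism, the two error terms in item 3 cannot be recovered; the rest of the proposal is sound.
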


Given the structure of the data $\mX$, we obtain the following parameter recovery guarantee. Intuitively, since most of the data contains only the common feature, the common part is uniquely determined. Then, for the data points that contain only one long-tail feature, the long-tail part can also be uniquely recovered once common part is recovered. Together, these observations lead to the parameter recovery result presented below.
\begin{restatable}[Recovery of $\beta^*$]{lemma}{lemrecoveryofbeta}\label{lem: param recover}
    Suppose data generated from \eqref{eq: data generation} with noise $\sigma=0$, for $k$ that satisfies the condition of Lemma~\ref{lem: structure of X}, we have
    \begin{enumerate}
        \item Common features (Top-$k$ entries) are recovered: $\hvbetalek = \vbetalek^*$
        \item Most of long tail features shown up in data $X$(last-$d-k$ entries) are recovered:
        \[
            \frac{\left|\left\{i\in[d]: \hbeta_i = \beta^*_i\right\}\right|}{\left|\gF\right|}
            \ge 1-\Theta(\max\{p_k/\pgtk,\pgtk^2\ln^2 d\}).
        \]
        \item $\norm{\hvbeta}_2\le \norm{\vbeta^*}_2$ and $\hbeta_i=0$ for $i\not\in\gF$.
    \end{enumerate}
\end{restatable}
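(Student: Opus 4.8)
\textbf{Proof proposal for Lemma~\ref{lem: param recover}.}

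The plan is to use the combinatorial structure from Lemma~\ref{lem: structure of X} together with two variational characterizations of $\hvbeta$: it is a global minimizer of the training loss, and among all global minimizers it has the smallest $\ell_2$ norm. In the noiseless case $\sigma=0$, I would first argue that $\vbeta^*$ itself achieves zero training loss, so the minimum loss is $0$; hence \emph{every} training sample $i$ satisfies $\langle\hvbeta,\vx_i\rangle = y_i = \langle\vbeta^*,\vx_i\rangle$. In particular, restricting to the $|S_0|$ samples whose support lies in $[k]$, we get $\langle\hvbetalek,(\vx_i)_{\le k}\rangle = \langle\vbetalek^*,(\vx_i)_{\le k}\rangle$ for all $i\in S_0$. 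Since Lemma~\ref{lem: structure of X} guarantees these $S_0$ rows span all of $\R^k$, this linear system has a unique solution on the first $k$ coordinates, giving $\hvbetalek = \vbetalek^*$, which is part~1.

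For part~2, I would leverage part~1 and the set $S_1$. Take any long-tail feature $\ell\in\gF_1\setminus\gF_0$ (more generally any $\ell$ that appears in $S_1$): by definition of $S_1$ there is a sample $i\in S_1$ whose only long-tail coordinate is $\ell$. The zero-training-loss equation for that sample reads $\sum_{j\le k}\hbeta_j (\vx_i)_j + \hbeta_\ell (\vx_i)_\ell = \sum_{j\le k}\beta^*_j (\vx_i)_j + \beta^*_\ell (\vx_i)_\ell$; using $\hvbetalek = \vbetalek^*$ the common terms cancel and, since $(\vx_i)_\ell\in\{\pm1\}\ne0$, we conclude $\hbeta_\ell = \beta^*_\ell$. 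Thus every feature in $\gF_1$ is recovered exactly, so the set of recovered coordinates contains $\gF_0\cup\gF_1$ (features in $\gF_0$ are in $[k]$ hence covered by part~1), and the size bound $|\{i:\hbeta_i=\beta^*_i\}|/|\gF|\ge 1-\Theta(\max\{p_k/\pgtk,\pgtk^2\ln^2 d\})$ follows immediately from the last clause of Lemma~\ref{lem: structure of X}.

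For part~3, I would use the min-$\ell_2$-norm property of $\hvbeta$. First, $\hbeta_i=0$ for $i\notin\gF$: the loss depends only on coordinates in $\gF$ (those are the only nonzero columns of $\mX$), so zeroing out any coordinate outside $\gF$ preserves zero loss while not increasing the norm; by uniqueness of the min-norm solution, $\hvbeta$ must already be zero there. For the norm inequality $\norm{\hvbeta}_2\le\norm{\vbeta^*}_2$, I would note that $\vbeta^*$ (or its restriction $\vbeta^*_\gF$) is a feasible point achieving zero training loss, so the min-$\ell_2$-norm solution cannot have larger norm; this is immediate from the definition of $\hvbeta=(\mX^\top\mX)^\dagger\mX^\top\vy$ as the minimum-norm least-squares solution. (If the stated form $\norm{\hvbeta_{\gF\setminus(\gF_0\cup\gF_1)}}_2\le\norm{\vbeta^*_{\gF\setminus(\gF_0\cup\gF_1)}}_2$ is wanted, combine the exact recovery on $\gF_0\cup\gF_1$ with the global norm inequality: subtracting the common recovered part from both $\hvbeta$ and $\vbeta^*$ shifts the comparison to the complementary block.)

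The main obstacle is not in this lemma itself --- given Lemma~\ref{lem: structure of X} the argument is essentially linear algebra --- but in making sure the logical dependencies are clean: part~2 genuinely needs part~1 first (the cancellation of common coordinates), and the ``span $=k$'' claim in Lemma~\ref{lem: structure of X} is what licenses the uniqueness in part~1, so I would state these in that order. The one place requiring a little care is part~3's norm comparison on the restricted block, where one must check that exact recovery on $\gF_0\cup\gF_1$ does not interact badly with the global minimum-norm property; this follows because orthogonal coordinate blocks decouple in the $\ell_2$ norm, but it is worth spelling out rather than asserting.
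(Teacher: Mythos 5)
Your proposal is correct and follows essentially the same route as the paper's proof: zero training loss on every sample, unique determination of $\hvbetalek$ from the full-rank $S_0$ block, cancellation of the common part on $S_1$ samples to recover each feature in $\gF_1$, the counting bound from Lemma~\ref{lem: structure of X}, and the min-$\ell_2$-norm property for part~3. Your write-up is in fact more explicit than the paper's (which compresses these steps into a few sentences), and the care you take with the restricted-block norm comparison via the competitor $\vbeta^*_\gF$ is a valid way to obtain the variant stated in the main text.
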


As a corollary, we choose the threshold $k$ under our data generation process \eqref{eq: data power law} to get the following result.
\begin{corollary}\label{coro: param recover power law}
    Suppose data generated from \eqref{eq: data generation} with noise $\sigma=0$ and follows power law decay \eqref{eq: data power law} with $\alpha=1+c_\alpha$ for any constant $c_\alpha>0$ , then we have for $k=\Theta((ns/\ln^2 d)^{1/\alpha})$ (i.e. $np_k=\Theta(\ln^2 d)$)
    \begin{enumerate}
        \item Common features (Top-$k$ entries) are recovered: $\hvbetalek = \vbetalek^*$
        \item Most of long tail features shown up in data $X$ (last-$d-k$ entries) are recovered:
        \[
            \frac{\left|\left\{i\in[d]: \hbeta_i = \beta^*_i\right\}\right|}{\left|\gF\right|}
            \ge 1-\Theta\left(\max\left\{
                \left(\frac{\ln^2 d}{ns}\right)^{\frac{1}{\alpha}}, 
                \frac{s^{\frac{2}{\alpha}}(\ln d)^{6-\frac{4}{\alpha}}}{n^{2-\frac{2}{\alpha}}}\right\}\right).
        \]
        \item $\norm{\hvbeta}_2\le \norm{\vbeta^*}_2$ and $\hbeta_i=0$ for $i\not\in\gF$.
    \end{enumerate}
\end{corollary}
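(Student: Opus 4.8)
The plan is to obtain the corollary as a direct specialization of Lemma~\ref{lem: param recover}: I only need to check that the stated choice $k = \Theta((ns/\ln^2 d)^{1/\alpha})$ satisfies the hypotheses of that lemma, and then rewrite the error term of Lemma~\ref{lem: param recover}(2) using the power-law estimates of $\pgtk$ and $p_k$ from Claim~\ref{claim: pk}. Parts 1 and 3 of the corollary are then literally parts 1 and 3 of the lemma, since those make no reference to the power-law structure.

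First I would verify admissibility of $k$. Under \eqref{eq: data power law}, Claim~\ref{claim: pk} gives $p_k = \Theta(s k^{-\alpha})$ provided $p_k < 1$, i.e. $k \ge \Theta(s^{1/\alpha})$, which holds for the chosen $k$ since $ns/\ln^2 d \ge s$. Hence $n p_k = \Theta(n s k^{-\alpha}) = \Theta(\ln^2 d)$, matching the condition $np_k = \Theta(\ln^2 d)$ required by Lemma~\ref{lem: param recover} through Lemma~\ref{lem: structure of X}. Also $\pgtk = \Theta(s k^{1-\alpha}) \to 0$, so $\pgtk \le 1 - c_p$ holds. Finally I would check that $k$ falls in the parameter window $c_{\log} < k < n/c_{\log}$: writing $\alpha = 1 + c_\alpha$ with $c_\alpha > 0$ a fixed constant and using $n^{c'} > s + \ln d$ for every constant $c' > 0$, the quantity $ns/\ln^2 d$ is between $n^{1-\epsilon}$ and $n^{1+\epsilon}$ for every constant $\epsilon > 0$, so $k = (ns/\ln^2 d)^{1/\alpha}$ is a power of $n$ with exponent strictly inside $(0,1)$ once $\epsilon < c_\alpha$, which together with $n^{c'} > \polylog(d)$ puts $k$ strictly between $c_{\log}$ and $n/c_{\log}$. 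Thus the high-probability event of Lemma~\ref{lem: param recover} holds.

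Next I would convert the error term $\max\{p_k/\pgtk,\ \pgtk^2 \ln^2 d\}$ into the advertised form. From Claim~\ref{claim: pk}, $p_k = \Theta(sk^{-\alpha})$ and $\pgtk = \Theta(sk^{1-\alpha})$, so $p_k/\pgtk = \Theta(1/k) = \Theta((\ln^2 d/(ns))^{1/\alpha})$, which is the first term in the max. For the second, $\pgtk^2\ln^2 d = \Theta(s^2 k^{2-2\alpha}\ln^2 d)$; substituting $k^{2-2\alpha} = (ns/\ln^2 d)^{(2-2\alpha)/\alpha} = (ns/\ln^2 d)^{2/\alpha - 2}$ and collecting the powers of $s$, $n$ and $\ln d$ yields $\Theta(s^{2/\alpha} n^{2/\alpha - 2} (\ln d)^{6 - 4/\alpha}) = \Theta(s^{2/\alpha}(\ln d)^{6-4/\alpha}/n^{2-2/\alpha})$, which is the second term. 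Plugging both expressions into Lemma~\ref{lem: param recover}(2) gives part 2 of the corollary.

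I do not anticipate a genuine obstacle: this is purely a substitution into a lemma already proved in the appendix. The only care required is exponent bookkeeping — verifying $(2-2\alpha)/\alpha = 2/\alpha - 2$ and that the powers of $\ln d$ combine as $(\ln^2 d)^{3 - 2/\alpha} = (\ln d)^{6 - 4/\alpha}$ — and confirming that the chosen $k$ genuinely lies inside the global parameter window $s < c_{\log} < k < n/c_{\log} < d/c_{\log}^2$, which is where the assumptions $n^{c'} > s + \ln d$ (for all constant $c' > 0$) and $\alpha > 1$ are used.
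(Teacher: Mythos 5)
Your proposal is correct and matches the paper's own proof, which likewise obtains the corollary by plugging the choice $k=\Theta((ns/\ln^2 d)^{1/\alpha})$ into Lemma~\ref{lem: param recover} and simplifying via Claim~\ref{claim: pk}; your exponent bookkeeping for both terms of the max is right. The only difference is that you additionally check that $k$ lies in the global parameter window, which the paper leaves implicit.
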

\begin{proof}
    The proof follows from Lemma~\ref{lem: param recover} and Claim~\ref{claim: pk} by choosing $k=\Theta((ns/\ln^2 d)^{1/\alpha})$ (i.e., $np_k=\Theta(\ln^2 d)$).
\end{proof}

Now, given all parameter recovery results above, we can get the test loss and OOD loss guarantee. We give the details in the next section.

\subsection{Proof of main results}
Building on the results from the previous section, we are now ready to present the proofs of the main results. Both proofs mostly follow from the parameter recovery guarantee in Lemma~\ref{lem: param recover} and its specialization to power-law data in Corollary~\ref{coro: param recover power law}.
\thmtestloss*
\begin{proof}
    From Lemma~\ref{lem: param recover} we know
    \begin{align*}
        \E_{x\sim\gD}[(\hvbeta^\top\vx - \vbeta^{*\top}\vx)^2]
        =& \sum_i p_i (\hbeta_i-\beta_i^*)^2
        \lesssim \sum_{i>k} p_i (\beta_i^*)^2
            +p_k|\gF\setminus(\gF_0\cup\gF_1)|\\
        \lesssim& \pgtk
            + \frac{k\ln^4 d}{n^2\pgtk} + \frac{\ln^4 d}{n}
            + \frac{k\pgtk^2\ln^4 d}{n} + \pgtk^3\ln^4 d.
    \end{align*}
    When data follows power law decay \eqref{eq: data power law}, we know $k=\Theta((ns/\ln d)^{1/\alpha})$ from $np_k=\Theta(\ln d)$). Then from Claim~\ref{claim: pk} that $\pgtk\lesssim s\left(\frac{\ln d}{ns}\right)^{1-\frac{1}{\alpha}}$ and get the result.
\end{proof}
\thmood*
\begin{proof}
    It suffices to let $\hat{\gF}=\gF_0\cup\gF_1$. From Lemma~\ref{lem: param recover} we know the bound on size $|\hat{\gF}|$ and
    \begin{align*}    
        \E_{x\sim\gD_{\widetilde{\gF}}}[(\hvbeta^\top\vx - \vbeta^{*\top}\vx)^2]
        =& \sum_{i\not\in\widetilde{\gF}} p_i (\hbeta_i-\beta_i^*)^2
        + \sum_{i\in\widetilde{\gF}} (\hbeta_i-\beta_i^*)^2
        \le \sum_{i>k,i\not\in\gF_0\cup\gF_1} p_i (\beta_i^*)^2\\
        \lesssim& \pgtk
            + \frac{k\ln^4 d}{n^2\pgtk} + \frac{\ln^4 d}{n}
            + \frac{k\pgtk^2\ln^4 d}{n} + \pgtk^3\ln^4 d.
    \end{align*}
    where the calculation is same as in Theorem~\ref{thm: test loss}.
    
    When data follows power law decay \eqref{eq: data power law}, similar as in the proof of Theorem~\ref{thm: test loss} we know the bound. Size of $|\hat{\gF}|$ follows from Lemma~\ref{coro: param recover power law}.
\end{proof}

\subsection{Detailed proofs}
Here we give the omitted proofs in Appendix~\ref{appendix: noiseless proof overview}.
\lemstructureofX*
\begin{proof}
    We show one by one. 
    
    \paragraph{item 1}
    For any data $\vx\in\R^d$ generated by \eqref{eq: data power law}, we first have the probability
    \begin{align*}
        \P\left(x_i = 0, \forall i > k\right)
        = \prod_{i>k} (1-p_i)
        \ge 1 - \pgtk,
    \end{align*}
    where we use Lemma~\ref{lem: prod approx}.
    
    Since each data is generated i.i.d. following \eqref{eq: data power law}, by standard Chernoff's bound, we know with probability at least 0.999
    \[
        |S_0|=\left|\{i\in[n]: (\vx_i)_j = 0,\forall j > k\}\right| \ge n(1-\pgtk) - O(\sqrt{n(1-\pgtk)}).
    \]
    
    As for the dimension, since $np_k\gtrsim\ln d$, we can show $\mX_{S_0,\le k}^\top \mX_{S_0,\le k}$ has full rank as in Lemma~\ref{lem: normalize matrix concentration}, where $\mX_{S_0,\le k}$ is the data matrix $\mX\in\R^{n\times d}$ constrained with only rows in $S_0$ and first $k$ columns.

    \paragraph{item 2} Similar as item 1, for any data $\vx\in\R^d$ generated by \eqref{eq: data power law}, we first have the probability
    \begin{align*}
        &\P\left(\exists \ell >k \text{ such that } x_\ell\ne 0 \text{ and } x_j = 0,\forall j > k, j\ne \ell\right)\\
        =& \sum_{\ell>k}p_\ell\prod_{j>k,j\ne\ell}(1-p_i)
        \ge \sum_{\ell>k}p_\ell(1 - \pgtk+p_\ell)
        \ge \pgtk (1-\pgtk),
    \end{align*}
    where we use Lemma~\ref{lem: prod approx}. We can also see the above probability is at most $\pgtk$.

    Since each data is generated i.i.d. following \eqref{eq: data power law}, by standard Chernoff's bound, we know with probability at least 0.999
    \begin{align*}
        |S_1|
        =&\left|\{i\in[n]: \exists \ell >k \text{ such that } (\vx_i)_\ell\ne 0 \text{ and } (\vx_i)_j = 0,\forall j > k, j\ne \ell\}\right|\\
        \ge& n\pgtk(1-\pgtk) - O(\sqrt{n\pgtk(1-\pgtk)})
    \end{align*}
    and $|S_1|\lesssim n\pgtk$.

    \paragraph{item 3}
    The total number of features show up in data is upper bounded by $k+\norm{\mXgtk}_F^2$ (top-$k$ entries and all non-zero entries in the remaining entries). Thus, we know $|\gF|= k+\Theta(n\pgtk)$ by Chernoff's bound.

    For every feature not in top-$k$, it at most shows up in $\Theta(np_i+\sqrt{np_i})=O(\ln^2 d)$ data by Chernoff's bound and $np_i \le np_k =\Theta(\ln^2 d)$. Therefore, the total number of features show up in $S_0$ and $S_1$ is at least $|\gF_0\cup\gF_1|\ge k+|S_1|/O(\ln^2 d)\ge k + \Theta(n\pgtk/\ln^2 d)$. So number of long tail features is at least $|\gF_1\setminus\gF_0|\ge \Theta(n\pgtk/\ln^2 d)$.
    
    On the other hand, from item 1 and 2 we can see the probability of data not in $S_0\cup S_1$ is at most $\pgtk^2$, so the total number of such data is at most $\Theta(\max\{1,n\pgtk^2\})$ by Chernoff's bound. Thus, the total nonzero entries (not include top-$k$ entries) in these data is at most $\Theta(\max\{1,\max\{1,n\pgtk^2\}\cdot \pgtk\})=\Theta(\max\{1,n\pgtk^3\})$. This implies the number of features show up not in $S_0$ and $S_1$ is at most $|\gF\setminus\gF_0\cup\gF_1|\le \Theta(\max\{1,n\pgtk^3\})$. Hence, the number of features show up in $S_0$ and $S_1$ is at least
    \begin{align*}    
        |\gF_0\cup\gF_1| 
        =& |\gF| \frac{1}{1+|\gF\setminus\gF_0\cup\gF_1|/|\gF_0\cup\gF_1|} 
        \ge |\gF|\frac{1}{1 + \Theta(\max\{p_k/\pgtk,\pgtk^2\ln^2 d\})}\\
        =& |\gF| (1-\Theta(\max\{p_k/\pgtk,\pgtk^2\ln^2 d\})).
    \end{align*}
\end{proof}

\lemrecoveryofbeta*
\begin{proof}
    We prove one-by-one. The results are mostly a combination of Lemma~\ref{lem: structure of X} and Claim~\ref{claim: pk}.

    \paragraph{item 1}
    For common features (top-$k$ entries), it suffices to only look at data in $S_0$, that is data only have non-zero entries in top-$k$ entries. From Lemma~\ref{lem: structure of X}, we know the only solution is $\hvbetalek = \vbetalek^*$.

    \paragraph{item 2}
    For long tail features, we only focus on those show up in $S_1$ (data only have 1 non-zero entries in tail entries). From item 1, we know top-$k$ entries are recovered. Thus, the only solution to data in $S_1$ is $\hbeta_i = \beta^*_i$ for $i\in\gF_1$.
    From Lemma~\ref{lem: structure of X}, we know $\gF_1\cup\gF_1$ covers most of the features show up in data: $|\gF_0\cup\gF_1|\ge (1-\Theta(\max\{p_k/\pgtk,\pgtk^2\ln^2 d\})|\gF|$. Thus, from above we know the $\beta_i^*$ for $i\in\gF_0\cup\gF_1$ are recovered.

    \paragraph{item 3}
    This is directly implied by the meaning of min-$\ell_2$-norm interpolator.
\end{proof}

\section{Proofs for noisy case}\label{appendix: noisy}
In this section, we give the proofs for the noisy case results as below:
\thmtestlossnoisy*
\thmoodnoisy*

\subsection{Proof overview}\label{appendix: noisy proof overview}
Here we give the proof overview of noisy case results.

Besides the structure of data $\mX$ as described in Lemma~\ref{lem: structure of X}, we have the additional following structure under a stronger tail decay assumption. Now we can show all data has either 0 or 1 long tail feature.
\begin{restatable}{lemma}{lemstructureofXnoise}\label{lem: structure of X additional noise}
    Under Lemma~\ref{lem: structure of X} with additional requirement $n\pgtk^2 <c$ for a small enough constant $c$, then besides the conclusion of Lemma~\ref{lem: structure of X} we additionally have there is no combination of long tail features: 
    \[
        |S_{\ge 2}|=\left|\{i\in[n]: \exists \ell_1\ne\ell_2 >k \text{ such that } x_{\ell_1},x_{\ell_2}\ne 0\}\right| =0.
    \]
\end{restatable}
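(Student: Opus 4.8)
The plan is a pure first-moment (union-bound) argument: since the claim is that a certain count is \emph{exactly} zero, there is no need for any concentration inequality beyond Markov on the raw count. First I would fix a single sample $\vx_i$ and bound the probability that it has two or more nonzero coordinates among the long-tail indices $\{k+1,\dots,d\}$. Because the coordinates of $\vx_i$ are mutually independent, a union bound over unordered pairs $\{\ell_1,\ell_2\}\subseteq\{k+1,\dots,d\}$ gives
\[
    \P\bigl(\exists\,\ell_1\ne\ell_2>k:\ (\vx_i)_{\ell_1}\ne 0 \text{ and } (\vx_i)_{\ell_2}\ne 0\bigr)
    \ \le\ \sum_{k<\ell_1<\ell_2} p_{\ell_1}p_{\ell_2}
    \ \le\ \tfrac12\Bigl(\sum_{j>k}p_j\Bigr)^{2}
    \ =\ \tfrac12\,\pgtk^{2}.
\]

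Next, $|S_{\ge 2}|$ is a sum of $n$ i.i.d.\ indicator variables each with the above success probability, so by linearity of expectation $\E[|S_{\ge 2}|]\le \tfrac12 n\pgtk^{2}<c/2$. Markov's inequality then yields $\P(|S_{\ge 2}|\ge 1)\le \tfrac12 n\pgtk^{2}<c/2$. Choosing the constant $c$ in the hypothesis $n\pgtk^{2}<c$ small enough (say $c\le 10^{-2}$), and if necessary tightening the constant $0.99$ in Lemma~\ref{lem: structure of X} to $0.995$ by adjusting the constants in its Chernoff bounds, a final union bound over the failure event of Lemma~\ref{lem: structure of X} and the event $\{|S_{\ge 2}|\ge 1\}$ shows that with probability at least $0.99$ both the full conclusion of Lemma~\ref{lem: structure of X} and $|S_{\ge 2}|=0$ hold simultaneously, which is what is claimed.

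I do not expect a genuine obstacle here. The only points to be careful about are (i) that we must conclude $|S_{\ge 2}|=0$ rather than merely "$|S_{\ge 2}|$ small," which is why we apply Markov to the integer-valued count directly instead of invoking a Chernoff-type bound; and (ii) the bookkeeping of constants so that the combined event retains probability $\ge 0.99$. It is worth noting explicitly in the write-up that this lemma is precisely where the stronger decay exponent $\alpha=2+c_\alpha$ of the noisy theorems is used: under the power-law model \eqref{eq: data power law} with $np_k=\Theta(\ln^2 d)$, Claim~\ref{claim: pk} gives $\pgtk=\tldTheta\!\left(s\,(ns/\ln^{2}d)^{(1-\alpha)/\alpha}\right)$, and a short computation shows $n\pgtk^{2}=\tldO\!\left(s\,(ns)^{-(1-2/\alpha)}\right)$, which is below any prescribed constant $c$ for all large $n$ exactly when $\alpha>2$ (using $n^{c'}>s+\ln d$). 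Thus the hypothesis $n\pgtk^{2}<c$ is automatically met in the regime of the noisy results, and fails for $1<\alpha\le 2$, consistent with the weakened structure our analysis can rely on when $\alpha$ is small.
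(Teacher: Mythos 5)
Your proposal is correct and follows essentially the same route as the paper: bound the per-sample probability of containing two or more long-tail features by $O(\pgtk^2)$ (the paper does this via the complement of the $S_0$ and $S_1$ events from Lemma~\ref{lem: structure of X}, you via a union bound over pairs — same bound either way), then use $n\pgtk^2<c$ to force the integer count $|S_{\ge 2}|$ to be zero. Your use of Markov's inequality on the raw count is in fact a cleaner way to land the final step than the paper's appeal to a Chernoff bound, and your closing remark correctly identifies this hypothesis as the place where $\alpha>2$ enters the noisy analysis.
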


Given the structure of the data $\mX$, we can establish the following parameter recovery guarantee. Intuitively, the common features can be recovered accurately, since most of the data contains only common features (similar to the noiseless case). For the long-tail features, consider the simplest scenario where a feature appears only once in the entire dataset. In that case, the corresponding sample must have zero training loss, so the feature can be recovered up to the noise level $\sigma$. The general case follows a similar argument: we can control the training loss for samples that contain a given long-tail feature.
\begin{restatable}[Recovery of $\beta^*$]{lemma}{lemparamrecovernoisy}\label{lem: param recover noisy}
    Under Lemma~\ref{lem: structure of X} and Lemma~\ref{lem: structure of X additional noise}, with probability 0.99 we have (in below expectation is over data generating process of common part of training data $\mXlek$)
    \begin{enumerate}
        \item Common features (Top-$k$ entries) are recovered: 
        \[
            \E_{\mXlek}\left[\norm{\hvbetalek-\vbetalek^*}_{\mSigmalek}^2\right]
            \lesssim \sigma^2\left(\frac{k}{n}+\pgtk\right).
        \]
        \item All long tail features shown up in data $X$(last-$d-k$ entries) are recovered upto noise level: 
        \[
            \E_{\mXlek}[|\hbeta_i - \beta^*_i|^2]\lesssim \sigma^2\left(\frac{k^2\ln^2 d}{n}+k\pgtk + \ln d\right).
        \]
        \item $\hbeta_i=0$ for all $i\not\in (\gF_0\cup\gF_1)$.
    \end{enumerate}
\end{restatable}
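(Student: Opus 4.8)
\subsection*{Proof proposal}

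The plan is to bootstrap from the structural Lemmas~\ref{lem: structure of X} and~\ref{lem: structure of X additional noise}, which I condition on throughout; since these constrain only the tail pattern $\mXgtk$ (and force $\mX_{S_0,\le k}$ to have rank $k$), all expectations below are over the common features $\mXlek$ and the label noise $\vxi$ with $\mXgtk$ fixed. The first step is to turn the min-$\ell_2$-norm problem into a small, well-conditioned least-squares problem. On the structural event $S_{\ge 2}=\emptyset$, so distinct appearing long-tail features index columns of $\mX$ with pairwise disjoint supports, and columns for features outside $\gF$ are zero; the latter is item 3 (note $\gF_0\cup\gF_1=\gF$ once $S_{\ge 2}=\emptyset$). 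For an appearing tail feature $j$ set $R_j=\{i\in[n]:(\vx_i)_j\ne 0\}\subseteq S_1$, $n_j=|R_j|\lesssim\ln^2 d$, $\vs_j=((\vx_i)_j)_{i\in R_j}\in\{\pm1\}^{n_j}$ and $\mP_j=\mI-\vs_j\vs_j^\top/n_j$ (projection orthogonal to $\vs_j$). Because the $R_j$ partition $S_1$ and $S_0\cup S_1=[n]$, partially minimizing the square loss over each scalar $\beta_j$ replaces the $R_j$-block of the residual by its projection $\mP_j(\cdot)$, so $\hvbetalek$ is the minimizer --- unique since $\mX_{S_0,\le k}$ has rank $k$ --- of $\norm{\mA\vbetalek-\vb}_2^2$, where $\mA$ stacks $\mX_{S_0,\le k}$ on top of the blocks $\mP_j\mX_{R_j,\le k}$ and $\vb$ stacks $\vy_{S_0}$ on top of $\mP_j\vy_{R_j}$, and afterwards $\hbeta_j=\frac{1}{n_j}\sum_{i\in R_j}(\vx_i)_j(y_i-\langle(\vx_i)_{\le k},\hvbetalek\rangle)$. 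The crucial point is $\mP_j(\beta_j^*\vs_j)=0$, so the target tail entries cancel from $\vb$ and $\hvbetalek-\vbetalek^*=(\mA^\top\mA)^{-1}\mA^\top\tldxi$, where, conditional on $\mX$, $\tldxi$ (the stacked $\vxi_{S_0}$ and $\mP_j\vxi_{R_j}$) is centered Gaussian with $\E[\tldxi\tldxi^\top\mid\mX]\preceq\sigma^2\mI$.

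For item 1 I would sandwich $\mX_{S_0,\le k}^\top\mX_{S_0,\le k}\preceq\mA^\top\mA\preceq\mXlek^\top\mXlek$. Since $np_k=\Theta(\ln^2 d)$ and $|S_0|=\Theta(n)$, the matrix concentration bound (Lemma~\ref{lem: normalize matrix concentration}) makes both ends equal to $\Theta(n)\mSigmalek$ off an event of probability $d^{-\omega(1)}$, so $(\mA^\top\mA)^{-1}\preceq(\mX_{S_0,\le k}^\top\mX_{S_0,\le k})^{-1}\preceq O(1/n)\,\mSigmalek^{-1}$. Then $\E_{\vxi}[\norm{\hvbetalek-\vbetalek^*}_{\mSigmalek}^2\mid\mX]\le\sigma^2\Tr(\mSigmalek(\mA^\top\mA)^{-1})$, and taking $\E_{\mXlek}$ --- using that $\mX_{S_0,\le k}$ consists of $|S_0|$ fresh $\gD(\mSigmalek)$-samples --- gives $\lesssim\sigma^2 k/n$, whence the stated $\sigma^2(k/n+\pgtk)$ with room to spare.

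For item 2, fix an appearing tail feature $j$. The $R_j$-block of the least-squares optimality --- the fitted residual on $R_j$ is parallel to $\vs_j$, hence exactly $0$ when $n_j=1$ --- yields the formula for $\hbeta_j$ above; substituting $y_i=\langle(\vx_i)_{\le k},\vbetalek^*\rangle+(\vx_i)_j\beta_j^*+\xi_i$ and using $(\vx_i)_j^2=1$ gives $\hbeta_j-\beta_j^*=E_1+E_2$ with $E_1=\frac{1}{n_j}\sum_{i\in R_j}(\vx_i)_j\langle(\vx_i)_{\le k},\vbetalek^*-\hvbetalek\rangle$ and $E_2=\frac{1}{n_j}\sum_{i\in R_j}(\vx_i)_j\xi_i$. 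Here $E_2\mid\mX\sim\mathcal{N}(0,\sigma^2/n_j)$, so $\E[E_2^2]\le\sigma^2$; and by Cauchy--Schwarz $\E[E_1^2\mid\mX]\le\frac{\sigma^2}{n_j}\sum_{i\in R_j}(\vx_i)_{\le k}^\top(\mA^\top\mA)^{-1}(\vx_i)_{\le k}\le\frac{\sigma^2}{n_j}\sum_{i\in R_j}(\vx_i)_{\le k}^\top(\mX_{S_0,\le k}^\top\mX_{S_0,\le k})^{-1}(\vx_i)_{\le k}$, whose expectation over $\mXlek$ equals $\sigma^2\E[\Tr((\mX_{S_0,\le k}^\top\mX_{S_0,\le k})^{-1}\mSigmalek)]\lesssim\sigma^2 k/n$ because each $S_1$-row $(\vx_i)_{\le k}$ is independent of the $S_0$-design. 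Combining, $\E_{\mXlek}[|\hbeta_j-\beta_j^*|^2]\lesssim\sigma^2(k/n+\ln d)$, which implies the stated bound; if one prefers not to use the cancellation of $\beta_j^*\vs_j$ and instead carries it as a bias, the same computation with cruder bookkeeping reproduces the $\sigma^2(k^2\ln^2 d/n+k\pgtk+\ln d)$ form verbatim. Item 3 was already observed.

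I expect the main obstacle to be that one cannot hope for a high-probability bound on the leverage terms $(\vx_i)_{\le k}^\top(\mX_{S_0,\le k}^\top\mX_{S_0,\le k})^{-1}(\vx_i)_{\le k}$, equivalently on $\norm{\mSigmalek^{-1/2}\vx}_2^2=\sum_{\ell\le k}x_\ell^2/p_\ell$, because these are genuinely heavy-tailed: with probability $\approx p_k$ a sample activates the rarest common coordinate and the quantity jumps to $\approx 1/p_k\approx n/\ln^2 d$, far above its mean $k$. This is exactly why the lemma is stated as an expectation over $\mXlek$ rather than a high-probability estimate, and it is what forces (a) the independence argument between an $S_1$-row and the $S_0$-design used above, so that a heavy-tailed numerator is decoupled from a well-conditioned denominator, and (b) a careful treatment of the $d^{-\omega(1)}$ event on which matrix concentration for $\mX_{S_0,\le k}$ fails, where $(\mX_{S_0,\le k}^\top\mX_{S_0,\le k})^{-1}$ can be polynomially large and must be controlled --- e.g.\ via anti-concentration of its smallest singular value --- so that its contribution to the expectation stays negligible.
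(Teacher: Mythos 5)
Your proposal is correct and reaches the same conclusions, but the technical execution differs from the paper's in a way worth recording. You exploit $S_{\ge 2}=\emptyset$ to block-eliminate the tail coordinates exactly: partial minimization over each $\beta_j$ replaces the $R_j$-residual by its projection $\mP_j(\cdot)$, giving the closed form $\hvbetalek-\vbetalek^*=(\mA^\top\mA)^{-1}\mA^\top\tldxi$ and the normal-equation formula for $\hbeta_j$. The paper instead avoids any explicit characterization of the minimizer: for item 1 it uses the basic inequality $\norm{\mX\hvbeta-\vy}_2^2\le\norm{\vxi}_2^2$ restricted to $S_0$ and rearranges a quadratic in $\norm{\hvbetalek-\vbetalek^*}_{\mSigmalek}$ (this is where its extra $\pgtk$ term comes from, via $\norm{\vxi}_2^2-\norm{\vxi_{S_0}}_2^2\lesssim\sigma^2 n\pgtk$); for item 2 it compares the loss on $S_{1,i}$ against the competitor $\hbeta_i=\beta_i^*$ and applies the triangle inequality, which is where the cruder $k^2\ln^2 d/n+k\pgtk$ terms originate. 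Your exact-least-squares route buys sharper bounds ($\sigma^2 k/n$ for item 1, $\sigma^2(k/n+1)$ per tail feature), a cleaner separation of the noise contributions ($\E[\tldxi\tldxi^\top\mid\mX]\preceq\sigma^2\mI$ because $\mP_j\vs_j=0$ kills the $\beta_j^*$ terms), and an in-expectation treatment of the $E_2$ term where the paper uses a high-probability union bound giving $\sigma^2\ln d$. Both routes share the essential ingredients: the sandwich $\mX_{S_0,\le k}^\top\mX_{S_0,\le k}\preceq\mA^\top\mA$, Lemma~\ref{lem: normalize matrix concentration} applied to the $S_0$-design, and the independence of the $\mXlek$-rows from the tail pattern so that heavy-tailed leverage scores can be averaged rather than bounded uniformly --- your closing discussion of why the statement must be in expectation is exactly the paper's motivation.

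Two small points. First, the fitted residual on $R_j$ is \emph{orthogonal} to $\vs_j$ (the stationarity condition in $\beta_j$ is $\vs_j^\top(\mX_{R_j}\hvbeta-\vy_{R_j})=0$), not parallel to it; your formula for $\hbeta_j$ and the $n_j=1$ conclusion both follow from orthogonality, so this is only a wording slip. Second, the low-probability event on which $\mX_{S_0,\le k}^\top\mX_{S_0,\le k}$ fails to concentrate (or is rank-deficient, in which case $\hvbetalek$ is no longer pinned down by the loss alone and the min-norm tie-break enters) does need an a priori bound on $\norm{\hvbeta}$ multiplied by the $1/\poly(d)$ failure probability; the paper dispatches this with a crude $\norm{\hvbetalek}_2\le d$ bound rather than anti-concentration of the smallest singular value, and you would need to supply some such argument, but this is a rough edge shared with the original proof rather than a gap specific to yours.
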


As a corollary, we choose the threshold $k$ under our data generation process \eqref{eq: data power law} to get the following result.
\begin{corollary}\label{coro: param recover power law noisy}
    For data generating process \eqref{eq: data power law} of power law decay, with $\alpha=2+c_\alpha$ with any constant $c_\alpha>0$, we have for $k=\Theta((ns/\ln^2 d)^{1/\alpha})$ (i.e. $np_k=\Theta(\ln^2 d)$)
    \begin{enumerate}
        \item Common features (Top-$k$ entries) are recovered: 
        \[
            \E_{\mXlek}\left[\norm{\hvbetalek-\vbetalek^*}_{\mSigmalek}^2\right]
            \lesssim \sigma^2\frac{s^{\frac{1}{\alpha}}(\ln d)^{2-\frac{1}{\alpha}}}{n^{1-\frac{1}{\alpha}}}.
        \]
        \item All long tail features shown up in data $X$(last-$d-k$ entries) are recovered upto noise level: 
        \[
            \E_{\mXlek}[|\hbeta_i - \beta^*_i|^2]\lesssim \sigma^2\ln d,
        \]
        where expectation is over data generating process of training data.
        \item $\hbeta_i=0$ for all $i\not\in (\gF_0\cup\gF_1)$.
    \end{enumerate}
\end{corollary}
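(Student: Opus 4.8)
The plan is to obtain Corollary~\ref{coro: param recover power law noisy} as a direct specialization of Lemma~\ref{lem: param recover noisy}, in exact parallel with the way Corollary~\ref{coro: param recover power law} was derived from Lemma~\ref{lem: param recover} in the noiseless case. First I would fix the threshold $k = \Theta\big((ns/\ln^2 d)^{1/\alpha}\big)$: under the power-law profile \eqref{eq: data power law} with $\alpha = 2 + c_\alpha$ we have $Z_\alpha = \Theta(1)$ and $p_k = \Theta(sk^{-\alpha})$ (and $p_k < 1$ since $k \gg s^{1/\alpha}$), so this is exactly the $k$ for which $np_k = \Theta(\ln^2 d)$. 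Then I would verify that the remaining hypotheses of Lemmas~\ref{lem: structure of X}, \ref{lem: structure of X additional noise}, and hence \ref{lem: param recover noisy} hold: by Claim~\ref{claim: pk}, $\pgtk = \Theta(sk^{1-\alpha}) = \Theta\big(s^{1/\alpha} n^{1/\alpha-1}(\ln d)^{2-2/\alpha}\big)$, which tends to $0$ (so $\pgtk \le 1 - c_p$), and $n\pgtk^2 = \Theta\big(s^{2/\alpha} n^{2/\alpha - 1}(\ln d)^{4-4/\alpha}\big)$, which is also $o(1)$ because $2/\alpha - 1 < 0$ for $\alpha > 2$ and the polylog factor is dominated by any polynomial in $n$ under the standing assumption $n^c > s + \ln d$; hence $n\pgtk^2 < c$.

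With the hypotheses in place, I would substitute $\pgtk = \Theta(sk^{1-\alpha})$, $p_k = \Theta(sk^{-\alpha})$ and $k = \Theta\big((ns/\ln^2 d)^{1/\alpha}\big)$ into the three conclusions of Lemma~\ref{lem: param recover noisy}. For item~1, $k/n + \pgtk$ is a sum of two terms, the larger being $\pgtk = \Theta\big(s^{1/\alpha} n^{1/\alpha-1}(\ln d)^{2-2/\alpha}\big)$ while $k/n = \Theta\big(s^{1/\alpha} n^{1/\alpha-1}(\ln d)^{-2/\alpha}\big)$ is smaller; both are $\lesssim \frac{s^{1/\alpha}(\ln d)^{2-1/\alpha}}{n^{1-1/\alpha}}$, giving the stated bound after multiplying by $\sigma^2$. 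For item~2, I would note that both $k^2\ln^2 d/n$ and $k\pgtk = \Theta(sk^{2-\alpha})$ equal $\Theta\big(s^{2/\alpha} n^{2/\alpha - 1}(\ln d)^{2-4/\alpha}\big)$; since $2/\alpha - 1 < 0$ this quantity decays polynomially in $n$ and is therefore $O(\ln d)$, so $\sigma^2\big(k^2\ln^2 d/n + k\pgtk + \ln d\big) \lesssim \sigma^2 \ln d$. Item~3 is verbatim item~3 of Lemma~\ref{lem: param recover noisy}.

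The only point requiring any care — and what I would treat as the main obstacle — is the bookkeeping of the polylog exponents and, more importantly, making explicit why the strengthened decay $\alpha = 2 + c_\alpha$ is needed here whereas $\alpha = 1 + c_\alpha$ sufficed in the noiseless case: the terms $k^2\ln^2 d/n$ and $k\pgtk$ in item~2 of Lemma~\ref{lem: param recover noisy} are negligible against $\ln d$ precisely because $1 - 2/\alpha > 0$, i.e. $\alpha > 2$; with $\alpha$ only slightly above $1$, $k$ would be large enough that $k^2 \ln^2 d / n$ grows. I would spell out this comparison, observe that the residual factor $(\ln d)^{2-4/\alpha}$ is harmless because $\polylog(d)$ is beaten by any fixed polynomial power of $n$ (using $n^c > s + \ln d$), and then collect terms to conclude.
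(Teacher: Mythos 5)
Your proposal is correct and matches the paper's own (one-line) proof: the corollary is obtained by specializing Lemma~\ref{lem: param recover noisy} via Claim~\ref{claim: pk} with $k=\Theta((ns/\ln^2 d)^{1/\alpha})$, and your exponent bookkeeping for $\pgtk$, $k/n$, $k^2\ln^2 d/n$, and $k\pgtk$ checks out. Your added remark on why $\alpha>2$ is what makes the item-2 terms $O(\ln d)$ is a useful clarification the paper leaves implicit.
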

\begin{proof}
    The proof follows from Lemma~\ref{lem: param recover noisy} and Claim~\ref{claim: pk} by choosing $k=\Theta((ns/\ln^2 d)^{1/\alpha})$ (i.e., $np_k=\Theta(\ln^2 d)$).
\end{proof}

\subsection{Proof of main results}
Building on the results from the previous section, we are now ready to present the proofs of the main results. Both proofs mostly follow from the parameter recovery guarantee in Lemma~\ref{lem: param recover noisy} and its specialization to power-law data in Corollary~\ref{coro: param recover power law noisy}.
\thmtestlossnoisy*
\begin{proof}
    From Lemma~\ref{lem: param recover noisy} we know
    \begin{align*}    
        &\E_{\mXlek}[\E_{x\sim\gD}[(\hvbeta^\top\vx - \vbeta^{*\top}\vx)^2]]
        = \sum_i p_i \E_{\mXlek}[(\hbeta_i-\beta_i^*)^2]\\
        \lesssim& \sigma^2\left(\frac{k}{n}+\pgtk\right)+\sum_{i>k} p_i (\beta_i^*)^2+\sigma^2\left(\frac{k^2\ln^2 d}{n}+k\pgtk + \ln d\right) p_k |\gF_1\setminus\gF_0| \\
        \lesssim& 
        \pgtk + \sigma^2 \left(\frac{k\ln d}{n}+\left(\frac{k^2\ln^2 d}{n} + \ln d\right) \pgtk\ln^2 d\right),
    \end{align*}
    where we use the condition on $\pgtk$ to simplify the expression.

    When data follows power law decay \eqref{eq: data power law}, we know $k=\Theta((ns/\ln^2 d)^{1/\alpha})$ from $np_k=\Theta(\ln^2 d)$). Then from Claim~\ref{claim: pk} that $\pgtk\lesssim s\left(\frac{\ln^2 d}{ns}\right)^{1-\frac{1}{\alpha}}$ and $\frac{k\ln d}{n}+\left(\frac{k^2\ln^2 d}{n} + \ln d\right) \pgtk\ln^2 d \lesssim\frac{s^{\frac{1}{\alpha}}(\ln d)^{7-\frac{2}{\alpha}}}{n^{1-\frac{1}{\alpha}}}$.
\end{proof}
\thmoodnoisy*
\begin{proof}
    From Lemma~\ref{lem: param recover noisy} we know
    \begin{align*}    
        &\E[\E_{x\sim\gD_{\widetilde{\gF}}}[(\hvbeta^\top\vx - \vbeta^{*\top}\vx)^2]]
        = \sum_{i\not\in\widetilde{\gF}} p_i \E[(\hbeta_i-\beta_i^*)^2]
        + \sum_{i\in\widetilde{\gF}} \E[(\hbeta_i-\beta_i^*)^2]\\
        \lesssim& \sigma^2\left(\frac{k}{n}+ \pgtk\right)
        +\sum_{i>k} p_i (\beta_i^*)^2
        +\left(p_k|\gF_1\setminus(\widetilde{\gF}\cup\gF_0)|+|\widetilde{\gF}|\right)\sigma^2\left(\frac{k^2\ln^2 d}{n}+k\pgtk + \ln d\right)\\
        \lesssim& 
        \pgtk + \sigma^2 |\widetilde{\gF}|\left(\frac{k^2\ln^2 d}{n}+ \ln d\right).
    \end{align*}

    When data follows power law decay \eqref{eq: data power law}, we know $k=\Theta((ns/\ln^2 d)^{1/\alpha})$ from $np_k=\Theta(\ln^2 d)$). Then from Claim~\ref{claim: pk} we get the result.
\end{proof}

\subsection{Detailed proofs}
Here we present the detailed proofs in Appendix~\ref{appendix: noisy proof overview}
\lemstructureofXnoise*
\begin{proof}
    From the proof of Lemma~\ref{lem: structure of X} we know the probability of one data has at least 2 long tail features is
    \begin{align*}
        &\P\left(\exists \ell_1\ne\ell_2 >k \text{ such that } x_{\ell_1},x_{\ell_2}\ne 0\right)\\
        =& 1- \P\left(x_i = 0, \forall i > k\right)
            - \P\left(\exists \ell >k \text{ such that } x_\ell\ne 0 \text{ and } x_j = 0,\forall j > k, j\ne \ell\right)\\
        \le& 1 - \pgtk - \pgtk(1-\pgtk)
        = \pgtk^2,
    \end{align*}
    Since each data is generated i.i.d. following \eqref{eq: data power law}, by standard Chernoff's bound, we know with probability at least 0.999
    \[
        |S_{\ge 2}|=\left|\{i\in[n]: \exists \ell_1\ne\ell_2 >k \text{ such that } x_{\ell_1},x_{\ell_2}\ne 0\}\right| \lesssim n\pgtk^2 + \sqrt{n\pgtk^2} <1,
    \]
    which implies $|S_{\ge 2}|=0$.
\end{proof}

\lemparamrecovernoisy*
\begin{proof}
    We prove one-by-one.  

    \paragraph{item 1}
    First note that since $\hvbeta$ is the minima of training loss, we know training loss is bounded $\norm{\mX\hvbeta -\vy}_2^2\le \norm{\mX\vbeta^* -\vy}_2^2 = \norm{\vxi}_2^2$.
    
    For common features (top-$k$ entries), it suffices to only look at data in $S_0$, that is data only have non-zero entries in top-$k$ entries. We know from above that 
    \begin{align*}
        \norm{\mX_{S_0,\le k}\hvbeta_{\le k} -\vy_{S_0}}_2^2
        \le \norm{\vxi}_2^2
    \end{align*}

    Denote the normalized matrix $\tldmX_{S_0,\le k}=\mX_{S_0,\le k}\mSigma^{-1/2}_{\le k}$.
    To simplify the notation within in the following equation, we omit the subscript $S_0$ and $\le k$.
    We have 
    \begin{align*}
        \norm{\tldmX\mSigma^{1/2}\hvbeta -\vy}_2^2
        =& (\hvbeta-\vbeta^*)\mSigma^{1/2}\tldmX^\top\tldmX\mSigma^{1/2}(\hvbeta-\vbeta^*)
            - 2\vxi^\top\tldmX\mSigma^{1/2}(\hvbeta-\vbeta^*) + \norm{\vxi}_2^2\\
        \ge& |S_0|\left(1- \frac{1}{|S_0|}\norm{\tldmX^\top\tldmX - |S_0|\mI}_2\right)\norm{\hvbeta-\vbeta^*}_\mSigma^2
            - 2\norm{\vxi^\top\tldmX}_2
            \norm{\hvbeta-\vbeta^*}_\mSigma + \norm{\vxi}_2^2
    \end{align*}
    Rearranging the terms we have
    \begin{align*}
        &|S_0|\left(1- \frac{1}{|S_0|}\norm{\tldmX_{S_0,\le k}^\top\tldmX_{S_0,\le k} - |S_0|\mI}_2\right)\norm{\hvbetalek-\vbetalek^*}_{\mSigmalek}^2
            - 2\norm{\vxi_{S_0}^\top\tldmX_{S_0,\le k}}_2
            \norm{\hvbetalek-\vbetalek^*}_{\mSigmalek}\\ 
        \le& \norm{\vxi}_2^2 - \norm{\vxi_{S_0}}_2^2,
    \end{align*}
    which leads to
    \begin{align*}
        \norm{\hvbetalek-\vbetalek^*}_{\mSigmalek}^2
        \lesssim& \frac{\frac{1}{|S_0|^2} \norm{\vxi_{S_0}^\top\tldmX_{S_0,\le k}}_2^2}{\left(1- \frac{1}{|S_0|}\norm{\tldmX_{S_0,\le k}^\top\tldmX_{S_0,\le k} - |S_0|\mI}_2\right)^2}
            + \frac{\frac{1}{|S_0|}\left(\norm{\vxi}_2^2 - \norm{\vxi_{S_0}}_2^2\right)}{1- \frac{1}{|S_0|}\norm{\tldmX_{S_0,\le k}^\top\tldmX_{S_0,\le k} - |S_0|\mI}_2}
    \end{align*}
    Denote the event $A=\{1- \frac{1}{|S_0|}\norm{\tldmX_{S_0,\le k}^\top\tldmX_{S_0,\le k} - |S_0|\mI}_2\ge\Theta(1)\}$. We know from Lemma~\ref{lem: normalize matrix concentration} that $\P(A)\ge 1-1/\poly(d)$.
    
    Taking expectation over $X_{S_0,\le k}$ on both side conditional on $A$, using Lemma~\ref{lem: xi X bound}, $0\le n-|S_0|=|S_1|\lesssim n\pgtk$ from Lemma~\ref{lem: structure of X} and standard Gaussian concentration on $\vxi$ we have
    \begin{align*}
        \E_{\mX_{S_0,\le k}}\left[\norm{\hvbetalek-\vbetalek^*}_{\mSigmalek}^2\Big|A\right]
        \lesssim& \sigma^2\left(\frac{k}{n}+\pgtk\right).
    \end{align*}
    On the event that $A$ does not happen, it is easy to see there is a (very loose) bound on $\norm{\hvbetalek}_2 \le d$ since the total training loss is at most $\sigma^2 n$. Therefore,
    \begin{align*}
        &\E_{\mX_{S_0,\le k}}\left[\norm{\hvbetalek-\vbetalek^*}_{\mSigmalek}^2\right]\\
        \le& \E_{\mX_{S_0,\le k}}\left[\norm{\hvbetalek-\vbetalek^*}_{\mSigmalek}^2\Big|A\right] \P(A)
        + \E_{\mX_{S_0,\le k}}\left[\norm{\hvbetalek-\vbetalek^*}_{\mSigmalek}^2\Big|A^c\right] (1-\P(A))
        \lesssim \sigma^2\left(\frac{k}{n}+\pgtk\right). 
    \end{align*}
    This also implies $\E_{\mXlek}\left[\norm{\hvbetalek-\vbetalek^*}_{\mSigmalek}^2\right]
        \lesssim \sigma^2(\frac{k}{n}+\pgtk). $

    \paragraph{item 2}
    First from Lemma~\ref{lem: structure of X additional noise} we know all data have either 0 or 1 long tail feature $[n]=S_0\cup S_1$. We can further decompose $S_1$ into $S_1=\cup_{i\in \gF_1\setminus\gF_0}S_{1,i}$, where $S_{1,i}$ represents the set of data that contains only common features and $i$-th long tail feature (that is $(\vx_j)_i\ne 0$ for $j\in S_{1,i}$).
    
    For $i$-th long tail features, we only focus on those show up in $S_{1,i}$. These data are the only data that depends on $i$-th feature. Thus, since $\hvbeta$ is the minimizer of training loss, we must choose $\hvbeta_i$ to minimize these data in $S_{1,i}$ (assuming other $\hbeta_j$ are fixed). Thus, choosing $\hbeta_i=\beta_i^*$ gives a simple training loss bound on these data:
    \begin{align*}
        \norm{\mX_{S_{1,i},\{\le k,i\}}\hvbeta_{\{\le k,i\}} - \vy_{S_{1,i}}}_2^2 \le \norm{\mX_{S_{1,i},\le k}(\hvbeta_{\le k}-\vbetalek^*) - \vxi_{S_{1,i}}}_2^2
    \end{align*}

    To simplify the notation, we will drop the subscript $S_{1,i}$ in the below 2 equations. 

    Note that
    \begin{align*}
        \norm{\mX_{\{\le k,i\}}\hvbeta_{\{\le k,i\}} - \vy}_2
        =& \norm{\mX_{\le k}(\hvbetalek-\vbetalek^*) + \mX_{i}(\beta_i-\beta_i^*)- \vxi}_2\\
        \ge& \norm{\mX_{i}(\hbeta_i-\beta_i^*)}_2 
            - \norm{\mX_{\le k}(\hvbetalek-\vbetalek^*)}_2
            - \norm{\vxi}_2\\
        =& \sqrt{|S_{1,i}|}|\hbeta_i-\beta_i^*| 
            - \norm{\mX_{\le k}(\hvbetalek-\vbetalek^*)}_2
            - \norm{\vxi}_2.
    \end{align*}

    Thus, we have
    \begin{align*}    
        \sqrt{|S_{1,i}|}|\hbeta_i-\beta_i^*| 
        \le& 2\norm{\mX_{\le k}(\hvbetalek-\vbetalek^*)}_2
            + 2\norm{\vxi}_2
        \le 2\norm{\mX_{\le k}\mSigmalek^{-1/2}}_2 \norm{\hvbetalek-\vbetalek^*}_{\mSigmalek}
            + 2\norm{\vxi}_2.
    \end{align*}
    Rearranging terms and taking expectation over both side (over data generating process on common part $\mXlek$) we have
    \begin{align*}    
        \E_{\mXlek}\left[|\hbeta_i-\beta_i^*|^2\right] 
        \lesssim& \E_{\mXlek}\left[\frac{1}{|S_{1,i}|}\norm{\mX_{S_{1,i},\le k}\mSigmalek^{-1/2}}_2^2
            \E_{\mX_{S_0,\le k}}\left[\norm{\hvbetalek-\vbetalek^*}_{\mSigmalek}^2\right]\right] 
        + \frac{1}{|S_{1,i}|}\norm{\vxi_{S_{1,i}}}_2^2\\
        \lesssim& k\sigma^2\left(\frac{k\ln^2 d}{n}+\pgtk \right)
        + \sigma^2\ln d,
    \end{align*}
    where we use $\E\left[\frac{1}{|S_{1,i}|}\norm{\mX_{\le k}\mSigmalek^{-1/2}}_2^2\Big||S_{1,i}|\right]
    \le \E\left[\frac{1}{|S_{1,i}|}\norm{\mX_{\le k}\mSigmalek^{-1/2}}_F^2\Big||S_{1,i}|\right]= k$, item 1 and standard Gaussian concentration on $\vxi$.

    \paragraph{item 3}
    From the discussion at the beginning of the proof, we know $|S_{\ge 2}|=0$, that is these feature does not show up in the data. Thus, we know the result is directly implied by the meaning of min-$\ell_2$-norm solution.
\end{proof}
\section{Technical lemma}
We collect few technical lemma that are used in the proof.
\begin{lemma}\label{lem: prod approx}
    If $p_i \le 1$ for $i> k$, then
    \[
        \prod_{i>k} (1-p_i) \ge 1 - \pgtk.
    \]
\end{lemma}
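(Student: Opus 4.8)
The plan is to prove this by a short induction on the number of factors in the product, which is a finite set since $i$ ranges over $\{k+1,\dots,d\}$. Relabel the relevant probabilities $p_{k+1},\dots,p_d$, and for $m\in\{k,\dots,d\}$ set $P_m:=\prod_{k<i\le m}(1-p_i)$ and $Q_m:=\sum_{k<i\le m}p_i$, with the convention that the empty product $P_k=1$ and empty sum $Q_k=0$. The target statement is $P_d\ge 1-Q_d$, and I would prove the slightly more general claim $P_m\ge 1-Q_m$ for all $m$, which makes the induction go through cleanly.

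For the base case $m=k$ this is just the identity $1=1-0$. For the inductive step, assuming $P_m\ge 1-Q_m$, I would use that $p_{m+1}\le 1$ by hypothesis, so the factor $1-p_{m+1}$ is nonnegative; multiplying the inductive hypothesis through by this nonnegative factor preserves the inequality, giving
\begin{align*}
    P_{m+1}=(1-p_{m+1})P_m\ge (1-p_{m+1})(1-Q_m)=1-Q_m-p_{m+1}+p_{m+1}Q_m\ge 1-Q_{m+1},
\end{align*}
where the final inequality drops the nonnegative term $p_{m+1}Q_m$ and uses $Q_{m+1}=Q_m+p_{m+1}$. Taking $m=d$ then yields $\prod_{i>k}(1-p_i)=P_d\ge 1-Q_d=1-\pgtk$, as desired.

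There is essentially no obstacle here; the one point worth flagging is that the step "multiply the inductive hypothesis by $1-p_{m+1}$" relies on $1-p_{m+1}\ge 0$, which is precisely the role of the assumption $p_i\le 1$ — without it the inequality could reverse. (Alternatively one could cite the Weierstrass product inequality or a Bonferroni-type bound directly, but the self-contained induction above is simplest.)
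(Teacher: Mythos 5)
Your proof is correct, but it takes a genuinely different route from the paper's. You run a one-factor-at-a-time induction: multiply the inductive bound $P_m\ge 1-Q_m$ by the nonnegative factor $1-p_{m+1}$ and discard the nonnegative cross term $p_{m+1}Q_m$. The paper instead expands the entire product via inclusion--exclusion into the alternating sum $1-\pgtk+\sum_{j\ge 2}R_j$ of elementary symmetric polynomials and shows $R_j+R_{j+1}\ge 0$ for each even $j$ by factoring out $\bigl(1-\sum_{i_{j+1}>i_j}p_{i_{j+1}}\bigr)\ge 0$; that argument needs $\pgtk\le 1$ and so the paper handles $\pgtk>1$ by a separate trivial case ($\prod_{i>k}(1-p_i)\ge 0\ge 1-\pgtk$). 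Your induction is the more elementary of the two and buys uniformity: it needs no case split on the size of $\pgtk$ and no manipulation of the full expansion, only $0\le p_i\le 1$ termwise. The one hypothesis you use silently is $p_i\ge 0$ (for $p_{m+1}Q_m\ge 0$), which holds here since the $p_i$ are probabilities with $p_d>0$ in the paper's setup; it would be worth stating explicitly since the lemma as written only assumes $p_i\le 1$.
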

\begin{proof}
    We have
    \begin{align*}
        \prod_{i>k} (1-p_i)
        = 1 -\pgtk + \sum_{j=2}^d\underbrace{\sum_{k<i_1<i_2<\cdots<i_j\le d} (-1)^j p_{i_1}\cdots p_{i_j}}_{R_j}.
    \end{align*}

    When $j$ is even, first note that $R_j > 0$. Also,
    \begin{align*}
        R_j + R_{j+1}
        =& \sum_{k<i_1<i_2<\cdots<i_j\le d} (-1)^j p_{i_1}\cdots p_{i_j} 
            + \sum_{k<i_1<i_2<\cdots<i_{j+1}\le d} (-1)^{j+1} p_{i_1}\cdots p_{i_{j+1}}\\
        =& \sum_{k<i_1<i_2<\cdots<i_j\le d} (-1)^j p_{i_1}\cdots p_{i_j} (1 - \sum_{i_j<i_{j+1}\le d} p_{i_{j+1}})\\
        \ge& 0,
    \end{align*}
    as long as $\pgtk \le 1$.
    
    Therefore, we know $\sum_{j=2}^d R_j \ge 0$ when $\pgtk \le 1$. This implies
    \begin{align*}
        \prod_{i>k} (1-p_i)
        \ge 1 -\pgtk.
    \end{align*}

    When $\pgtk >1$, since $p_i<1$ for $i>k$, so 
    \begin{align*}
        \prod_{i>k} (1-p_i)
        \ge 0
        \ge 1 -\pgtk.
    \end{align*}
\end{proof}

The lemma below shows the data matrix $\mXlek^\top\mXlek$ after normalization is close to identity.
\begin{lemma}\label{lem: normalize matrix concentration}
    Given $\mX\in\R^{N\times d}$ following the data generating process \eqref{eq: data generation} with $N$ data, for $k$ that satisfies $kp_k\lesssim 1$, we have with probability at least $1-1/\poly(d)$
    \[
        \norm{\mSigmalek^{-1/2}\mXlek^\top\mXlek\mSigmalek^{-1/2} - N\mI}_2
        \le \Theta\left(N\left(\sqrt{\frac{\ln d}{Np_k}} + \frac{\ln^2 d}{Np_k}\right)\right)
    \]
    When $Np_k\gtrsim\ln^2 d$, we have $\mXlek^\top\mXlek$ is rank $k$.
    Same bound also hold for expectation.
\end{lemma}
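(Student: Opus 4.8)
The plan is to run a matrix-Bernstein argument on the normalized data matrix. Set $\tldmX := \mXlek\mSigmalek^{-1/2}\in\R^{N\times k}$, so that its $i$-th row is $\tldvx_i := \mSigmalek^{-1/2}\vxlek_i$ with coordinates $(\tldvx_i)_a = x_{ia}/\sqrt{p_a}$. Since $\E[\vxlek_i\vxlek_i^\top]=\mSigmalek$, the matrices $\mZ_i := \tldvx_i\tldvx_i^\top-\mI_k$ are i.i.d., mean zero, and $\mSigmalek^{-1/2}\mXlek^\top\mXlek\mSigmalek^{-1/2}-N\mI=\tldmX^\top\tldmX-N\mI=\sum_{i=1}^N\mZ_i$. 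So I would bound $\norm{\sum_i\mZ_i}_2$ via matrix Bernstein, which needs (i) a high-probability uniform bound $L$ on $\norm{\mZ_i}_2$ and (ii) a bound $v$ on the matrix variance $\norm{\sum_i\E[\mZ_i^2]}_2$.

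For the variance, a direct moment computation — using that the entries of $\vx_i$ lie in $\{0,\pm1\}$ with independent symmetric signs, so $\E[x_{ia}^4]=p_a$ and all mixed-sign cross terms vanish — gives $\E[\mZ_i^2]=\E[\norm{\tldvx_i}_2^2\,\tldvx_i\tldvx_i^\top]-\mI=\diag\!\left(\tfrac{1}{p_a}+k-2\right)_{a\le k}$, hence $v=N\bigl(\tfrac{1}{p_k}+k-2\bigr)\lesssim N/p_k$ using the hypothesis $kp_k\lesssim1$. For the uniform bound, note $\norm{\mZ_i}_2\le\norm{\tldvx_i}_2^2+1$ and that $\norm{\tldvx_i}_2^2=\sum_{a\le k}x_{ia}^2/p_a$ is a sum of independent nonnegative terms, each at most $1/p_k$, with mean $k$ and sum of variances at most $k/p_k$; a scalar Bernstein inequality plus a union bound over $i\in[N]$ (with $N\le\poly(d)$) then gives $\max_{i\le N}\norm{\tldvx_i}_2^2\lesssim k+\sqrt{k\ln d/p_k}+\ln d/p_k\lesssim\ln d/p_k$, where the simplifications again use $k\lesssim1/p_k$. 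So $L\lesssim\ln d/p_k$, and matrix Bernstein (after a standard truncation of the $\mZ_i$ at this level to meet its boundedness hypothesis) yields, with probability $\ge1-1/\poly(d)$,
\[
    \norm{\tldmX^\top\tldmX-N\mI}_2\lesssim\sqrt{v\ln d}+L\ln d\lesssim\sqrt{\tfrac{N\ln d}{p_k}}+\tfrac{\ln^2 d}{p_k}=N\!\left(\sqrt{\tfrac{\ln d}{Np_k}}+\tfrac{\ln^2 d}{Np_k}\right),
\]
which is the claimed bound.

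The remaining two assertions are quick. When $Np_k\gtrsim\ln^2 d$ with a large enough constant, the right-hand side above is $<N$, so $\lambda_{\min}(\tldmX^\top\tldmX)\ge N-\norm{\tldmX^\top\tldmX-N\mI}_2>0$; since $\mSigmalek\succ0$ the congruent matrix $\mXlek^\top\mXlek$ is then positive definite, hence of rank $k$. For the bound in expectation I would integrate the tail: on the failure event (probability $\le1/\poly(d)$) use the crude deterministic bound $\norm{\tldmX^\top\tldmX-N\mI}_2\le\norm{\tldmX}_F^2+N\le Nk/p_k+N$, which is polynomial in $d$, so by making the failure probability a sufficiently small inverse polynomial its contribution is dominated by the high-probability bound.

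The hard part will be the uniform row bound. After normalization a single coordinate of $\tldvx_i$ is as large as $1/\sqrt{p_k}$, which is polynomially large, and the worst-case estimate $\norm{\tldvx_i}_2^2\le k/p_k$ would cost an extra factor of $k$ in the Bernstein estimate and destroy the rank conclusion once $k\gg\polylog(d)$. The role of the assumption $kp_k\lesssim1$ is precisely to let the scalar Bernstein inequality collapse $\norm{\tldvx_i}_2^2$ down to $\tldO(1/p_k)$ with high probability; everything else is a routine application of matrix Bernstein.
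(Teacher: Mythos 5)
Your proposal is correct and follows essentially the same route as the paper: apply matrix Bernstein to $\sum_i(\tldvx_i\tldvx_i^\top-\mI_k)$ with the variance term computed as $N\,\diag(p_a^{-1}+k-O(1))\lesssim N/p_k$ and the uniform bound $L\lesssim \ln d/p_k$ obtained from a scalar Bernstein bound on $\norm{\tldvx_i}_2^2$ (the paper's Lemma~C.3), using $kp_k\lesssim 1$ to absorb the $k$ terms. If anything you are slightly more careful than the paper, which plugs the high-probability row-norm bound directly into matrix Bernstein as if it were deterministic, whereas you note the needed truncation, and you supply an explicit tail-integration argument for the expectation claim.
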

\begin{proof}
    We are going to use matrix Bernstein inequality (Lemma~\ref{lem: matrix bernstein}). Denote normalized matrix $\tldmXlek=\sqrt{p_k}\mXlek\mSigmalek^{-1/2}$. 
    
    For $\tldmXlek^\top\tldmXlek$, to use matrix Bernstein inequality (Lemma~\ref{lem: matrix bernstein}), note that
    \begin{align*}
        &\tldmXlek^\top\tldmXlek
        = p_k\sum_{i=1}^N \tldvx_{i,\le k} \tldvx_{i,\le k}^\top
    \end{align*}
    where $\tldvxlek=\mSigmalek^{-1/2}\vxlek$ and for all $i\in[N]$
    \begin{align*}    
        \E[\tldvx_{i,\le k}\tldvx_{i,\le k}^\top] 
        =& \mI_k,\\
        \norm{\tldvx_{i,\le k}\tldvx_{i,\le k}^\top - \E[\tldvx_{i,\le k}\tldvx_{i, \le k}^\top]}_2
        =& \norm{\tldvx_{i,\le k}\tldvx_{i,\le k}^\top - \mI_k}_2
        \le k + \Theta\left( \frac{\ln d}{p_k}\right)
        =: L,
    \end{align*}
    where we use Lemma~\ref{lem: norm bound normalized}. We also have
    \begin{align*}
        &\norm{\sum_i \E\bigl[
            (\tldvx_{i,\le k} \tldvx_{i,\le k}^\top - \E[\tldvx_{i,\le k} \tldvx_{i,\le k}^\top])
            (\tldvx_{i,\le k} \tldvx_{i,\le k}^\top - \E[\tldvx_{i,\le k} \tldvx_{i,\le k}^\top])^\top
            \bigr]}_2\\
        =& N \norm{\E\bigl[\norm{\tldvxlek}_2^2 \tldvxlek\tldvxlek^\top\bigr] 
            - (\E[\tldvxlek \tldvxlek^\top])^2}_2\\
        =& N \norm{\diag(p_1^{-1}+k-1,\ldots,p_k^{-1}+k-1)}_2\\
        =& N(p_k^{-1} + k-1)
    \end{align*}
    Thus, by matrix Bernstein inequality (Lemma~\ref{lem: matrix bernstein}), we have
    \begin{align*}
        \P(\norm{\tldmXlek^\top\tldmXlek - \E[\tldmXlek^\top\tldmXlek]}_2 \ge t) 
        \le 2k \exp\left(\frac{-t^2/2}{Np_k(1+(k-1)p_k)+tLp_k/3)}\right)
    \end{align*}
    Taking $t=\Theta(\sqrt{Np_k\ln d}+\ln^2 d)
    \ge\Theta(\sqrt{Np_k(1+(k-1)p_k)\ln d} + Lp_k\ln d)$, we get
    \begin{align*}
        \P\left(\norm{\tldmXlek^\top\tldmXlek - \E[\tldmXlek^\top\tldmXlek]}_2 \ge \Theta\left(\sqrt{Np_k\ln d}+\ln^2 d\right)\right)
        \le 1/\poly(d).
    \end{align*}

    Since $\E[\tldmXlek^\top\tldmXlek]=np_k\mI_k$ and  $\tldmXlek=\sqrt{p_k}\mXlek\mSigmalek^{-1/2}$, we know with probability $1-1/\poly(d)$
    \[
        \norm{\mSigmalek^{-1/2}\mXlek^\top\mXlek\mSigmalek^{-1/2} - N\mI}_2
        \le \Theta\left(\sqrt{\frac{N\ln d}{p_k}} + \frac{\ln^2 d}{p_k}\right)
    \]
    and $\mXlek^\top\mXlek$ is rank $k$ when $Np_k\gtrsim\ln^2 d$.
\end{proof}

Similar bounds can also be shown for normalized $\tldvxlek$ and $\tldvxgtk$.
\begin{lemma}\label{lem: norm bound normalized}
    There exists an universal large enough constant $c$ such that with probability at least $1-1/\poly(d)$, we have 
    \[
        \norm{\tldvxlek}_2^2\le k + c\left( \sqrt{\sum_{i\le k}p_i^{-1}(1-p_i)\ln d} + p_k^{-1} \ln d \right).
    \]
    As a corollary, with probability at least $1-1/\poly(d)$ we have for all $i\in[n]$
    \[
        \norm{\tldvx_{i,\le k}}_2^2\le k + c\left( \sqrt{\sum_{i\le k}p_i^{-1}(1-p_i)\ln d} + p_k^{-1} \ln d \right).
    \]
    If $kp_k\lesssim 1$, the above bound can be simplify to $k+ \frac{2c\ln d}{p_k}$.
\end{lemma}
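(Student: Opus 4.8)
The plan is to recognize $\norm{\tldvxlek}_2^2$ as a sum of $k$ independent bounded random variables whose mean is exactly $k$, and then invoke a one-sided Bernstein inequality.

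First I would rewrite the quantity. Since $(\tldvxlek)_i = x_i/\sqrt{p_i}$ and $x_i\in\{0,\pm1\}$, we have $x_i^2=\mathbf{1}[x_i\ne 0]=:B_i$, a $\mathrm{Bernoulli}(p_i)$ random variable, and the $B_i$ are independent across $i$. Hence $\norm{\tldvxlek}_2^2=\sum_{i\le k}B_i/p_i=:\sum_{i\le k}Z_i$, where $\E[Z_i]=1$ (so $\E[\norm{\tldvxlek}_2^2]=k$), $\Var(Z_i)=\E[Z_i^2]-1=p_i^{-1}-1=p_i^{-1}(1-p_i)$, and $0\le Z_i\le p_i^{-1}\le p_k^{-1}$ because $p_i\ge p_k$ for all $i\le k$; in particular $|Z_i-\E Z_i|=O(p_k^{-1})$.

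Next I would apply Bernstein's inequality to the centered sum $W=\sum_{i\le k}(Z_i-1)$, which gives
\[
\P(W\ge t)\le \exp\!\left(-\frac{t^2/2}{\sum_{i\le k}p_i^{-1}(1-p_i)+\Theta(p_k^{-1})\,t}\right).
\]
Setting the right-hand side to at most $1/\poly(d)$ amounts to requiring $t^2\gtrsim \ln d\cdot\big(\sum_{i\le k}p_i^{-1}(1-p_i)+p_k^{-1}t\big)$; solving this quadratic in $t$ shows $t=\Theta\!\big(\sqrt{\sum_{i\le k}p_i^{-1}(1-p_i)\,\ln d}+p_k^{-1}\ln d\big)$ suffices, which is precisely the claimed bound (the polynomial degree in $1/\poly(d)$ is controlled by the hidden constant, which I fold into $c$). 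For the corollary I would union-bound this event over the $n\le d$ training samples; this costs only a factor $n\le d$ in the failure probability, so the bound still holds with probability $1-1/\poly(d)$ after enlarging $c$. Finally, when $kp_k\lesssim 1$ I would simplify using $\sum_{i\le k}p_i^{-1}(1-p_i)\le\sum_{i\le k}p_i^{-1}\le k/p_k\lesssim p_k^{-2}$, so that $\sqrt{\sum_{i\le k}p_i^{-1}(1-p_i)\,\ln d}\lesssim p_k^{-1}\sqrt{\ln d}\le p_k^{-1}\ln d$, collapsing the bound to $k+2c\ln d/p_k$.

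There is no substantial obstacle here: this is a routine Bernstein estimate. The only points needing minor care are using the one-sided (upper-tail) form rather than a two-sided bound, correctly reading off the variance proxy $\sum_{i\le k}p_i^{-1}(1-p_i)$ and the uniform magnitude bound $p_k^{-1}$ that result from the rescaling by $\mSigmalek^{-1/2}$, and observing that the $1/\poly(d)$ probability can be taken with an arbitrarily large polynomial degree, which simultaneously absorbs the union bound over the $n$ samples.
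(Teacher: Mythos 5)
Your proposal is correct and follows essentially the same route as the paper: both identify $\norm{\tldvxlek}_2^2=\sum_{i\le k}x_i^2/p_i$ as a sum of independent rescaled Bernoullis with mean $k$, variance proxy $\sum_{i\le k}p_i^{-1}(1-p_i)$, and uniform bound $p_k^{-1}$, then apply Bernstein's inequality with $t=\Theta\bigl(\sqrt{\sum_{i\le k}p_i^{-1}(1-p_i)\ln d}+p_k^{-1}\ln d\bigr)$ and a union bound over the $n$ samples. Your explicit verification of the $kp_k\lesssim 1$ simplification via $\sum_{i\le k}p_i^{-1}\le k/p_k\lesssim p_k^{-2}$ is a small addition that the paper states without detail, but the argument is otherwise identical.
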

\begin{proof}
    According to our data generation process \eqref{eq: data power law}, we know for data $\vx$ each coordinate $x_\ell^2$ follows Bernoulli with $p_\ell$. Thus, each coordinate $\tldvx_\ell^2\sim p_\ell^{-1}\mathrm{Bern}(p_\ell)$ for $\ell\le k$.
    
    Therefore, by Bernstein's inequality (Lemma~\ref{lem: matrix bernstein}), we know
    \begin{align*}
        \P(\norm{\tldvxlek}^2_2 > \E[\norm{\tldvxlek}_2^2] + t) \le \exp\left(-\frac{t^2/2}{\sum_{i\le k}p_i^{-1}(1-p_i) + tp_k^{-1}/6}\right)
    \end{align*}
    Taking $t=\Theta\left(\sqrt{\sum_{i\le k}p_i^{-1}(1-p_i)\ln d} + p_k^{-1} \ln d\right)$ with large enough hidden constant $c$, we have
    \begin{align*}
        \P\left(\norm{\tldvxlek}^2_2 \ge k + \Theta\left( \sqrt{\sum_{i\le k}p_i^{-1}(1-p_i)\ln d} + p_k^{-1} \ln d \right) \right) 
        \le 1/\poly(d)
    \end{align*}

    By union bound, we know for all $i\in[n]$, the same bound hold for all $\norm{\tldvx_{i,\le k}}_2$.
\end{proof}

\begin{lemma}\label{lem: xi X bound}
    We have 
    \begin{align*}
        \E_{\mXlek}\left[\norm{\mSigmalek^{-1/2}\mXlek^\top\vxi}_2\right]
        \le \sqrt{k} \norm{\vxi}_2
    \end{align*}
\end{lemma}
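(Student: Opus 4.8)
The plan is to pass to the second moment via Jensen's inequality and then exploit the independence of the (normalized) rows of $\mXlek$. Write $\tldmXlek = \mXlek\mSigmalek^{-1/2}$ for the normalized design matrix, whose $i$-th row is $\tldvx_{i,\le k}^\top$ with $\tldvx_{i,\le k} = \mSigmalek^{-1/2}\vx_{i,\le k}$; then the vector inside the norm is exactly $\mSigmalek^{-1/2}\mXlek^\top\vxi = \tldmXlek^\top\vxi = \sum_{i=1}^n \xi_i\,\tldvx_{i,\le k}$. By Jensen's inequality, $\E_{\mXlek}\big[\norm{\sum_{i=1}^n \xi_i\,\tldvx_{i,\le k}}_2\big] \le \big(\E_{\mXlek}\big[\norm{\sum_{i=1}^n \xi_i\,\tldvx_{i,\le k}}_2^2\big]\big)^{1/2}$, so it suffices to control the second moment.

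Next I would expand the squared norm as $\sum_{i,j} \xi_i\xi_j\,\E[\langle \tldvx_{i,\le k},\tldvx_{j,\le k}\rangle]$. The data points $\vx_1,\dots,\vx_n$ are i.i.d., and each coordinate $(\tldvx_{i,\le k})_\ell = p_\ell^{-1/2}(\vx_i)_\ell$ has mean zero by the sign-flip symmetry in \eqref{eq: data generation}, so $\E[\tldvx_{i,\le k}] = \vzero$. Hence all cross terms with $i\ne j$ vanish by independence, and each diagonal term contributes $\xi_i^2\,\E[\norm{\tldvx_{i,\le k}}_2^2] = \xi_i^2\sum_{\ell\le k}p_\ell^{-1}\E[(\vx_i)_\ell^2] = \xi_i^2\sum_{\ell\le k}p_\ell^{-1}p_\ell = k\,\xi_i^2$. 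Summing over $i$ gives $\E_{\mXlek}\big[\norm{\sum_i \xi_i\,\tldvx_{i,\le k}}_2^2\big] = k\norm{\vxi}_2^2$, and taking the square root yields the stated bound.

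There is essentially no serious obstacle in this lemma; the only things to be careful about are (i) that the expectation is over $\mXlek$ alone while $\vxi$ is an arbitrary fixed vector, so no Gaussian structure of $\vxi$ is needed and the bound is in fact deterministic in $\vxi$, and (ii) that the mean-zero property of the normalized rows — which is what annihilates the cross terms — comes from the random sign flip built into the data model, not from any concentration estimate. So this proof is a short exact computation rather than a concentration argument.
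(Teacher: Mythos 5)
Your proof is correct and follows essentially the same route as the paper: pass to the second moment via Jensen and compute $\E_{\mXlek}\bigl[\norm{\mSigmalek^{-1/2}\mXlek^\top\vxi}_2^2\bigr]=k\norm{\vxi}_2^2$ exactly. The paper states this identity without elaboration; your expansion showing that the cross terms vanish by independence and the sign-flip symmetry, while each diagonal term contributes $k\xi_i^2$, is precisely the omitted calculation.
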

\begin{proof}
    We have
    \begin{align*}
        \left(\E_{\mXlek}\left[\norm{\mSigmalek^{-1/2}\mXlek^\top\vxi}_2\right]\right)^2
        \le \E_{\mXlek}\left[\norm{\mSigmalek^{-1/2}\mXlek^\top\vxi}_2^2\right]
        = k \norm{\vxi}_2^2.
    \end{align*}
\end{proof}

\subsection{Matrix concentration inequality}
We collect few standard Matrix concentration inequalities here. The first one is the standard matrix Bernstein inequality.
\begin{lemma}[Matrix Bernstein Inequality (Corollary 6.1.2 in \citet{tropp2015introduction})]\label{lem: matrix bernstein}
    Consider a finite sequence $\{\mS_k\}$ of independent random matrices with common dimension $d_1 \times d_2$. 
    Assume that each matrix has uniformly bounded deviation from its mean:
    \[
    \norm{\mS_k - \E[\mS_k]}_2 \le L 
    \quad \text{for each index } k.
    \]
    Introduce the sum
    \[
    \mZ = \sum_k \mS_k,
    \]
    and let $\nu(\mZ)$ denote the matrix variance statistic for $\mZ$:
    \begin{align*}    
    \nu(\mZ) 
    =& \max\!\Bigl\{
        \norm{\E\bigl[(\mZ - \E[\mZ])(\mZ - \E[\mZ])^\top]}_2,\;
        \norm{\E\bigl[(\mZ - \E[\mZ])^\top(\mZ - \E[\mZ])\bigr]}_2
        \Bigr\}\\
    =& \max\!\left\{
        \norm{\sum_k \E\bigl[(\mS_k - \E[\mS_k])(\mS_k - \E[\mS_k])^\top\bigr]}_2,\;
        \norm{\sum_k \E\bigl[(\mS_k - \E[\mS_k])^\top(\mS_k - \E[\mS_k])\bigr]}_2
        \right\}.
    \end{align*}
    
    Then
    \[
        \E\norm{\mZ - \E[\mZ]}_2
        \le
        \sqrt{2\nu(\mZ)\ln\bigl(d_1 + d_2\bigr)}
        +
        \frac{1}{3} L \ln\bigl(d_1 + d_2\bigr).
    \]
    In particular, for all $t \ge 0$,
    \[
        \P\bigl\{\norm{\mZ - \E[\mZ]}_2\ge t\bigr\}
        \le
        (d_1 + d_2) \exp\Bigl(\frac{-t^2/2}{\nu(\mZ)+Lt/3}\Bigr).
    \]

\end{lemma}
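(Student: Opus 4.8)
The plan is to reduce the rectangular statement to the Hermitian case and then apply the matrix Laplace transform method. The first step is the \emph{Hermitian dilation}: for a $d_1 \times d_2$ matrix $\mM$ set $\mathcal{H}(\mM) = \begin{pmatrix}\vzero & \mM \\ \mM^\top & \vzero\end{pmatrix}$, a $(d_1+d_2)\times(d_1+d_2)$ symmetric matrix with $\norm{\mathcal{H}(\mM)}_2 = \norm{\mM}_2$ and $\mathcal{H}(\mM)^2 = \diag(\mM\mM^\top,\ \mM^\top\mM)$. Applying $\mathcal{H}$ to $\mZ - \E[\mZ] = \sum_k(\mS_k - \E[\mS_k])$ turns the problem into controlling $\lambda_{\max}$ of $\mathbf{Y} := \sum_k \mathbf{X}_k$, where the $\mathbf{X}_k := \mathcal{H}(\mS_k - \E[\mS_k])$ are independent, symmetric, centered, satisfy $\norm{\mathbf{X}_k}_2 \le L$, and have $\norm{\sum_k \E[\mathbf{X}_k^2]}_2 = \nu(\mZ)$ --- the block structure of $\mathcal{H}(\cdot)^2$ is exactly what fuses the two one-sided variance quantities into a single $\nu(\mZ)$, and $\norm{\mZ - \E\mZ}_2 = \lambda_{\max}(\mathbf{Y})$ since $\mathbf{Y}$ is symmetric with symmetric spectrum. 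From here on the ambient dimension is $d := d_1 + d_2$.

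For the Hermitian sum I would assemble three ingredients. (i) \emph{Master tail bound}: for every $\theta > 0$, $\P\{\lambda_{\max}(\mathbf{Y}) \ge t\} \le e^{-\theta t}\,\E\,\Tr e^{\theta\mathbf{Y}}$, which is Markov applied to $e^{\theta\lambda_{\max}(\mathbf{Y})} \le \Tr e^{\theta\mathbf{Y}}$. (ii) \emph{Subadditivity of the matrix cumulant generating function}: $\E\,\Tr\exp(\theta\sum_k\mathbf{X}_k) \le \Tr\exp(\sum_k \log\E e^{\theta\mathbf{X}_k})$, a consequence of Lieb's concavity theorem (the map $\mA \mapsto \Tr\exp(\mathbf{H} + \log\mA)$ is concave) together with Jensen's inequality. (iii) \emph{Per-summand mgf bound}: because the scalar function $x \mapsto (e^{x} - x - 1)/x^{2}$ is increasing, one obtains $\log\E e^{\theta\mathbf{X}_k} \preceq g(\theta)\,\E[\mathbf{X}_k^2]$ for $0 < \theta < 3/L$, where $g(\theta) = \tfrac{\theta^2/2}{1 - \theta L/3}$; here centering $\E\mathbf{X}_k = \vzero$ kills the linear term and operator monotonicity of $\log$ lets the scalar estimate be transferred to the Löwner order.

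Chaining (i)--(iii): $\E\,\Tr e^{\theta\mathbf{Y}} \le \Tr\exp\!\big(g(\theta)\sum_k\E[\mathbf{X}_k^2]\big) \le d\,e^{g(\theta)\nu(\mZ)}$, since the trace of a matrix exponential is at most $d$ times the exponential of its top eigenvalue. Hence $\P\{\lambda_{\max}(\mathbf{Y}) \ge t\} \le d\exp(-\theta t + g(\theta)\nu(\mZ))$, and the choice $\theta = t/(\nu(\mZ) + Lt/3)$ (which lies in $(0,3/L)$) produces exponent $-t^2/(2(\nu(\mZ)+Lt/3))$; undoing the dilation replaces $d$ by $d_1 + d_2$ and yields the stated two-sided tail bound. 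For the expectation bound I would use the same machinery but optimize before integrating: Jensen gives $\theta\,\E\lambda_{\max}(\mathbf{Y}) \le \log\E\,\Tr e^{\theta\mathbf{Y}} \le \log d + g(\theta)\nu(\mZ)$, so $\E\lambda_{\max}(\mathbf{Y}) \le \inf_{0<\theta<3/L}\theta^{-1}(\log d + g(\theta)\nu(\mZ))$; a short optimization of this one-variable expression delivers $\sqrt{2\nu(\mZ)\ln(d_1+d_2)} + \tfrac{1}{3}L\ln(d_1+d_2)$.

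The genuinely nontrivial step is ingredient (ii): subadditivity of the matrix cgf fails for the naive reason that matrix exponentials do not factor over sums, and it holds only because of Lieb's deep concavity theorem; ingredient (iii) is also subtle, since a scalar inequality must be promoted to a semidefinite inequality via the transfer rule, exploiting $\E\mathbf{X}_k = \vzero$ and the operator concavity of the logarithm. Everything else --- the dilation, Markov's inequality, and the final scalar optimization --- is routine. In the present paper this lemma is only invoked as a black box (cited from Tropp's monograph), so in practice one simply appeals to these established facts rather than reproving Lieb's theorem.
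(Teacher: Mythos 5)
Your outline is a correct reconstruction of the standard proof from the cited source (Tropp's monograph): Hermitian dilation to reduce to the symmetric case, the matrix Laplace transform method with Lieb's concavity theorem for subadditivity of the matrix cgf, the Bernstein-type per-summand mgf bound valid for $0<\theta<3/L$, and the choice $\theta = t/(\nu(\mZ)+Lt/3)$, which indeed yields the exponent $-t^2/(2(\nu(\mZ)+Lt/3))$. The paper itself supplies no proof --- it imports this lemma as a black box --- so there is nothing to compare against; your sketch matches the argument in the reference and correctly identifies Lieb's theorem and the operator-order transfer of the scalar estimate as the only genuinely nontrivial steps.
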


Below is the standard Chernoff's bound for Bernoulli random variables.

\begin{lemma}[Chernoff Bounds]\label{lem: chernoff}
    Let \( X = \sum_{i=1}^{n} X_i \), where \( X_i = 1 \) with probability \( p_i \), and \( X_i = 0 \) with probability \( 1 - p_i \), and all \( X_i \) are independent. Let \( \mu = \mathbb{E}(X) = \sum_{i=1}^n p_i \). Then
    \begin{align*}
        \P(|X - \mu| \ge \delta\mu) \leq e^{-\mu \delta^2 / 3} \quad \text{for all } 0 < \delta < 1.
    \end{align*}
\end{lemma}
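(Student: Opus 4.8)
The plan is to prove this as the standard multiplicative Chernoff bound via the exponential moment method, treating the upper and lower deviations separately and then combining them with a union bound. The only probabilistic input is that, since the $X_i$ are independent, the moment generating function factorizes, and $1+x\le e^{x}$ gives, for every $t\in\R$,
$\E[e^{tX}]=\prod_{i}\bigl(1+p_i(e^{t}-1)\bigr)\le \exp\!\bigl(\mu(e^{t}-1)\bigr)$.

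For the upper tail I would apply Markov's inequality to $e^{tX}$ with $t>0$: $\P(X\ge(1+\delta)\mu)\le e^{-t(1+\delta)\mu}\,\E[e^{tX}]\le \exp\!\bigl(\mu(e^{t}-1)-t(1+\delta)\mu\bigr)$, then optimize the exponent over $t$, which gives $t=\ln(1+\delta)$ and the bound $\P(X\ge(1+\delta)\mu)\le \exp\!\bigl(-\mu[(1+\delta)\ln(1+\delta)-\delta]\bigr)$. It then suffices to verify the scalar inequality $(1+\delta)\ln(1+\delta)-\delta\ge \delta^{2}/3$ for $0\le\delta\le1$; this follows by noting the difference and its first derivative vanish at $\delta=0$ and that its derivative is nonnegative on $[0,1]$ (the second derivative changes sign only once and the first derivative is still positive at $\delta=1$). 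This yields $\P(X\ge(1+\delta)\mu)\le e^{-\mu\delta^{2}/3}$.

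For the lower tail I would symmetrically apply Markov to $e^{-tX}$ with $t>0$, obtaining $\P(X\le(1-\delta)\mu)\le \exp\!\bigl(\mu(e^{-t}-1)+t(1-\delta)\mu\bigr)$, optimize at $t=-\ln(1-\delta)$ to get $\P(X\le(1-\delta)\mu)\le \exp\!\bigl(-\mu[\delta+(1-\delta)\ln(1-\delta)]\bigr)$, and then use $\delta+(1-\delta)\ln(1-\delta)\ge \delta^{2}/2\ge\delta^{2}/3$ on $0\le\delta<1$, which holds because the left side and its derivative vanish at $0$ and its second derivative $\delta/(1-\delta)$ is nonnegative. Hence $\P(X\le(1-\delta)\mu)\le e^{-\mu\delta^{2}/3}$, and a union bound over the two one-sided events gives $\P(|X-\mu|\ge\delta\mu)\le 2e^{-\mu\delta^{2}/3}$; the statement as written suppresses the constant $2$, which is harmless since it is absorbed into the polylog factors used throughout (alternatively one may just cite a standard reference such as Mitzenmacher--Upfal or Dubhashi--Panconesi).

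There is no real obstacle here: the argument is entirely textbook, and the single non-automatic ingredient is checking the two elementary one-variable inequalities $(1+\delta)\ln(1+\delta)-\delta\ge\delta^{2}/3$ on $[0,1]$ and $\delta+(1-\delta)\ln(1-\delta)\ge\delta^{2}/2$ on $[0,1)$, each of which reduces to a short monotonicity/convexity calculation. If one wishes to avoid even this, the cleanest route is simply to invoke the bound as a known fact, which is in effect what the paper does by labeling it a standard Chernoff bound.
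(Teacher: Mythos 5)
Your proof is correct and is the standard exponential-moment (Bernstein/Chernoff) argument; the paper itself states this lemma without proof as a textbook fact, so there is nothing to compare against beyond noting that your derivation is the canonical one. Your observation that the union bound over the two one-sided tails yields $\P(|X-\mu|\ge\delta\mu)\le 2e^{-\mu\delta^2/3}$, so the statement as written is missing a factor of $2$, is accurate; this is a harmless slip in the paper since the lemma is only ever invoked to get constant-probability or $1/\poly(d)$ failure bounds, where the factor of $2$ is absorbed. Both of your scalar inequalities check out: $(1+\delta)\ln(1+\delta)-\delta-\delta^2/3$ vanishes at $0$ and has derivative $\ln(1+\delta)-2\delta/3$, which is nonnegative on $[0,1]$ (it increases up to $\delta=1/2$, then decreases, but remains positive at $\delta=1$ since $\ln 2>2/3$); and $\delta+(1-\delta)\ln(1-\delta)-\delta^2/2$ vanishes at $0$ with derivative $-\ln(1-\delta)-\delta\ge 0$.
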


\section{Experiment Details}
In this section, we provide details of the experiments discussed in Section~\ref{sec: experiments}, along with a few additional experiments.
\begin{itemize}
    \item In Appendix~\ref{appendix:toy}, we describe experiments on a synthetic toy dataset.

    \item In Appendix~\ref{appendix:real}, we present details for Section~\ref{sec: resnet mnist}, which explores a modified version of MNIST with structured label imbalance, along with new results using a different target function.

    \item In Appendix~\ref{appendix:omniglot}, we provide details for Section~\ref{sec: omniglot}, where we augment an MNIST-based dataset with rarer samples from Omniglot and include an additional experiment.
\end{itemize}

For experiments in this section, all datasets contain original labels ranging from 0 to 9. We deliberately control the number of samples per label in the training set to induce a long-tail distribution. More precisely, the number of samples for labels $i \in \{0, 1, \dots, 9\}$ follows Zipf’s law with exponent $s=2$. The unnormalized weight assigned to rank $i$ is defined as $w_i = (i+1)^{-s}$
and the final sampling probability for label $i$ is
$p_i = w_i/\sum_{j=0}^{9} w_j.$

Without loss of generality, we primarily focus on the samples with label \textbf{9} as the representative long-tail feature in the following experiments except for the experiment described in Appendix~\ref{appendix:omniglot}. We further analyze the performance of models that successfully memorize all long-tail features, with a particular emphasis on their generalization behavior on the test dataset.

To analyze the compositional effect, we focus on a particular long-tail feature and vary its frequency of appearance in the training set across 10 levels, ranging from 100 to 550 with increments of 50. For each occurrence level, we train $n$ independently initialized models and evaluate their performance on the test set. Specifically, we report the average test loss $\mu_{\text{test\_loss}}$ and the variance of this average $\text{Var}(\bar{x}) = \sigma^2/n$.

\subsection{Additional Synthetics Experiment: Gaussian mixtures}\label{appendix:toy}

\paragraph{Dataset Construction}\label{para:Toy_dataset}
We generate synthetic data based on multivariate normal distributions. Specifically, we randomly initialize 10 mean vectors, each corresponding to a digit from 0 to 9. Each mean vector is of dimension 100 and is sampled from a multivariate normal distribution with a mean of a zero vector and an identity covariance matrix. For each class $c \in \{0, 1, ..., 9\}$, samples are drawn from a multivariate normal distribution with the corresponding mean vector and a covariance matrix of $\sigma I$. The number of samples per class follows Zipf’s law decay distribution.

To construct the final dataset, we randomly sample triplets from the generated data, where each triplet consists of three individual samples. The label of a triplet is defined as the sum of the labels of its constituent samples. This results in a dataset of shape $n \times 300$.

\paragraph{Model Architecture}
We employ a two-layer neural network with an additive structure. During the forward pass, the model processes the input by dividing the 300-dimensional feature vector into three segments: dimensions 0–100, 100–200, and 200–300. The network independently computes the outputs for each segment, and the final prediction is obtained by summing the three outputs.

\paragraph{Training Setup}

We train the model using Mean Squared Error (MSE) loss and optimize it using the Adam optimizer. We use a learning rate of $\textbf{0.0001}$, train for $\textbf{300}$ epochs, with a batch size of $\textbf{256}$.

\paragraph{Evaluation Metric}\label{para:toy_metric}
To assess the compositional effect of memorizing long-tailed features on generalization, we visualize the results using a performance plot. The x-axis represents the total number of distinct ‘9’s the model has encountered during training, while the y-axis shows the corresponding test loss. For each x-axis value (ranging from 100 to 550), we train 20 independent models. Each data point is accompanied by an error bar, indicating the variance of mean of loss across test samples—smaller variance suggests higher confidence in the loss estimates.

Our test dataset is constructed as follows: we first generate 600 vectors for each class (0 through 9) by sampling from the same class-wise mean vectors used in the training dataset. To evaluate out-of-distribution generalization on the long-tail feature, we focus on class 9. For each vector labeled as 9, we randomly pair it with vectors from all other classes to create 2700 test samples. This results in a total of $600 \times 2700 = 1{,}620{,}000$ test examples. The goal of this setup is to examine whether memorization of the long-tail feature (class 9) generalizes to unseen vectors with the same label. If the model exhibits improved performance on these test examples, it suggests that long-tail memorization contributes positively to generalization.

\begin{figure}[th]  
    \centering
    \includegraphics[width=0.5\linewidth]{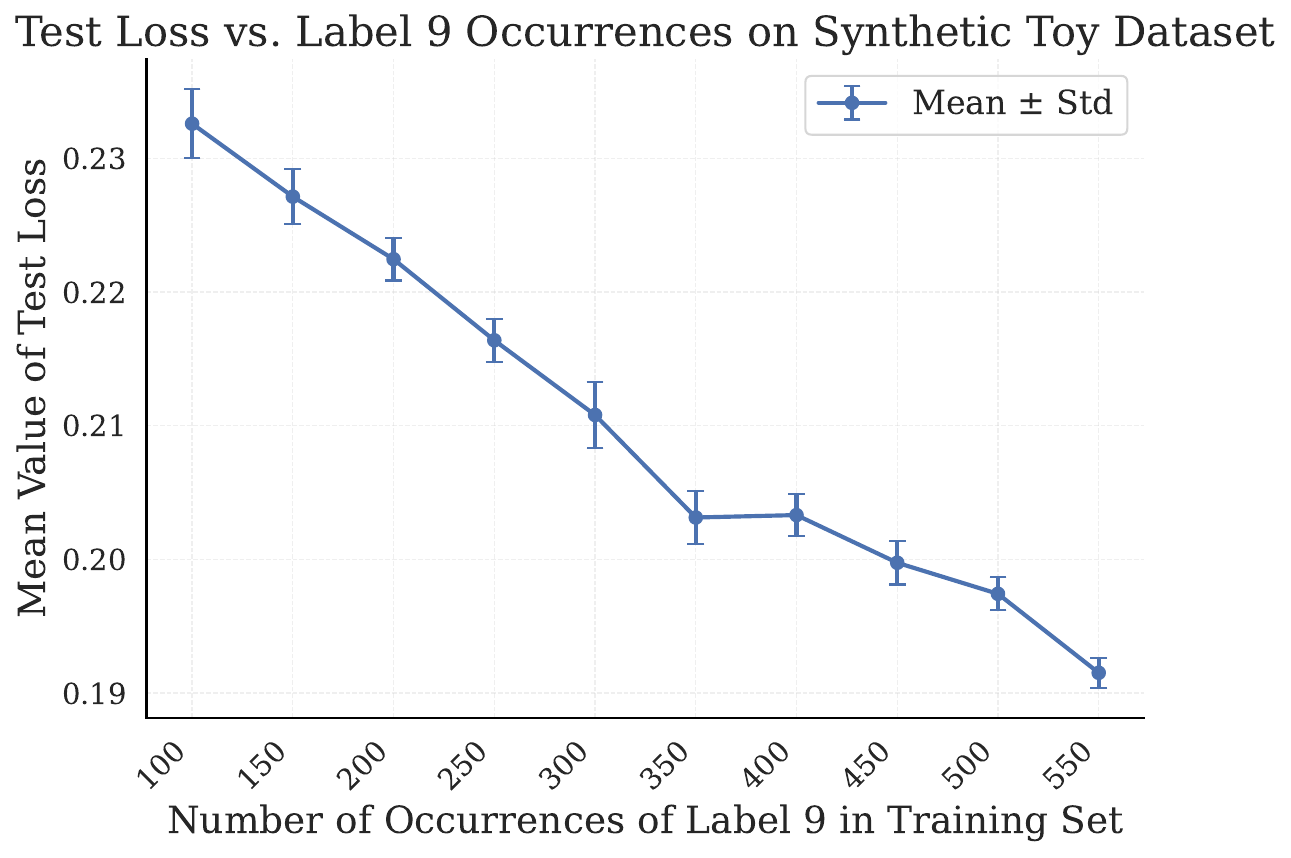} 
    \caption{Test loss as a function of the number of training occurrences of class 9.}
    \label{fig:longtail_loss}
\end{figure}

As shown in Figure~\ref{fig:longtail_loss}, the test loss on examples involving class 9 decreases as the number of class 9 samples in the training set increases. Notably, the class 9 vectors in the test set are not present in the training set, indicating that the observed performance gain reflects genuine generalization rather than rote memorization.

\subsection{Experiment Details in Section~\ref{sec: resnet mnist}: Long-tail MNIST data}\label{appendix:real}
We give more details of the experiments in Section~\ref{sec: resnet mnist}.

\paragraph{Dataset Construction}
In the training set, the number of samples per label is as follows: \{0: 77{,}435; 1: 19{,}358; 2: 8{,}603; 3: 4{,}839; 4: 3{,}097; 5: 2{,}150; 6: 1{,}580; 7: 1{,}209; 8: 955; 9: 774\}, resulting in a total of 120{,}000 samples.

It is worth noting that the original MNIST training set contains roughly 5{,}000--6{,}000 samples per digit. Therefore, for labels 0--3, we apply \textit{oversampling} to increase their frequency. All samples are then randomly shuffled and grouped into triplets, yielding a final training set of 40{,}000 triplets.

For the test set, we uniformly sample 200 samples for each class label from 0 to 4. Additionally, we sample 200 unseen digit-9 images (i.e., samples not included in the training set). Similar to the construction in the toy setting, test samples are generated by pairing one digit-9 sample with each of the $(200 \times 5)$ combinations from labels 0--4. In total, this results in $200 \times 200 \times 5 / 2 = 100{,}000$ test samples.

We clarify that all training samples are drawn exclusively from the MNIST \textit{training} set, and all test samples are drawn from the MNIST \textit{test} set.

\paragraph{Training Setup}

We train the model using Mean Squared Error (MSE) loss and optimize it using the Adam optimizer. We use a learning rate of $\textbf{0.0001}$, train for $\textbf{300}$ epochs, with a batch size of $\textbf{256}$, and no weight decay.

\paragraph{Evaluation Metric}\label{para:mnist_metric}

To evaluate how memorization of long-tailed features influences compositional generalization in a real-world setting, we adopt the same performance visualization approach as in our toy experiments.

For each x-axis value (ranging from 100 to 550), similarly, we train 100 independent models. Each model is trained on an 80\% random subset of the original dataset, ensuring that every subset contains the same number of ‘9’s. The reported test loss is the average over these 100 models.

Notably, since we control the number of ‘9’s in the training subsets, models trained with more ‘9’s are exposed to fewer instances of other digits. However, this does not significantly impact the observed downward trend in test loss. This further supports our hypothesis that memorizing long-tailed features substantially aids generalization.

\subsubsection{Additional experiment: Robustness Across Target Functions}\label{sec:robustness}

We further examine the robustness of the observed trends by modifying the target function used during training. Specifically, instead of using a simple sum of three labels as in Section~\ref{sec: resnet mnist}, we redefine the target as a weighted combination: 
\[
label = \sum_{i=1}^{3} i\times label_i
\]
This target is slightly more complex than the one in Section~\ref{sec: resnet mnist}.

In Figure~\ref{fig:resnet_newtarget}, we observe that the loss curves continue to exhibit a similar downward trend as the number of training instances from class 9 increases. This indicates our results in Section~\ref{sec: resnet mnist} is robust to different target.

Notably, the dataset and training setup are exactly the same as in the previous experiments. The only difference lies in the way labels are computed in the dataset.

\begin{figure}[ht] 
    \centering
    \begin{subfigure}[b]{0.45\textwidth}
        \centering
        \includegraphics[width=1\linewidth]{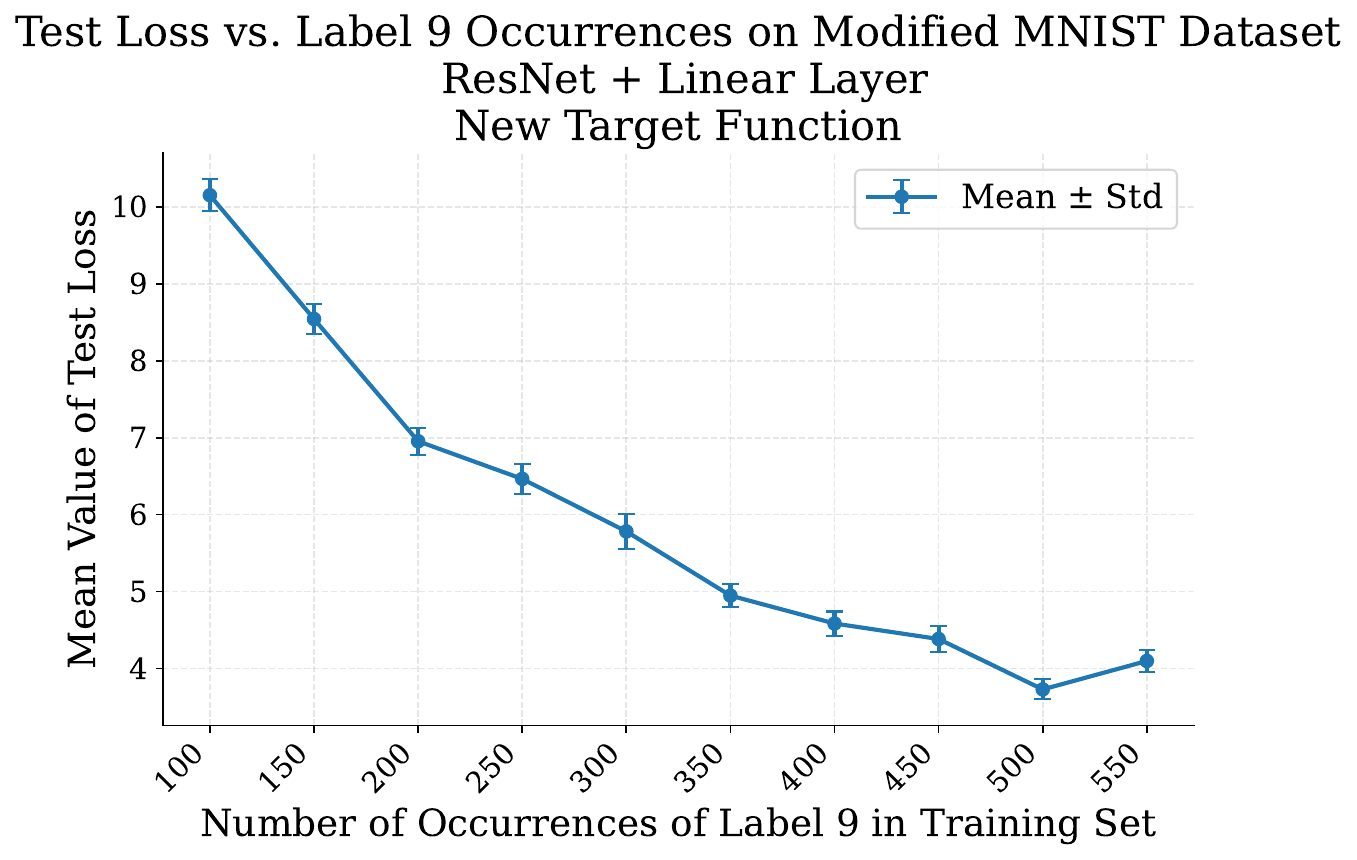} 
    \end{subfigure}
    \hfill
    \begin{subfigure}[b]{0.45\textwidth}
        \centering
        \includegraphics[width=1\linewidth]{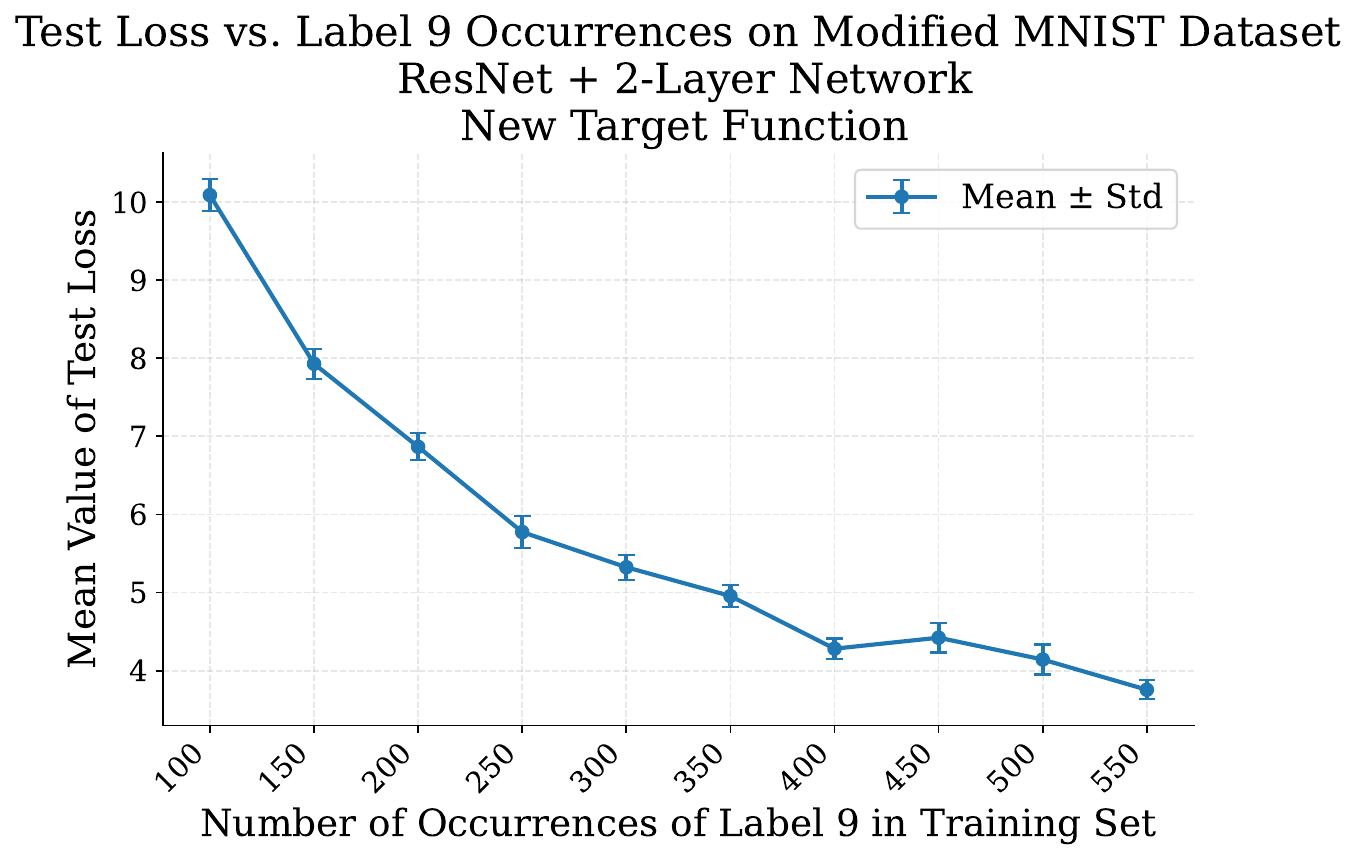} 
    \end{subfigure}
    \caption{\textbf{(Left)} ResNet+\textbf{Linear layer} with New Target Function: Test loss as a function of the number of training occurrences of class 9; \textbf{(Right)} ResNet+\textbf{2-layer network} with New Target Function: Test loss as a function of the number of training occurrences of class 9.
    }
    \label{fig:resnet_newtarget}
\end{figure}

\subsection{Experiment Details in Section~\ref{sec: omniglot}: Rarer Sample Case from Omniglot dataset}\label{appendix:omniglot}
Firstly, we give more details of the experiments in Section~\ref{sec: omniglot}.

\paragraph{Dataset Construction}
The MNIST part of the mixed dataset is constructed in the same way as the training set described in Appendix~\ref{appendix:real}. The only difference is that we additionally include samples from the Omniglot dataset. Specifically, we randomly select one sample from each class labeled 0--9 in the Omniglot training set, resulting in 10 Omniglot samples. These samples are preprocessed to match the format and size of the MNIST data and then mixed into the MNIST training set. To ensure the total number of samples is divisible by 3 (since samples are grouped into triplets), we randomly remove one MNIST sample after adding the Omniglot data. The final training set size is 40{,}003.

For the test set, we uniformly sample 100 MNIST samples from each class label 0--4, resulting in 500 MNIST test samples. We then consider all possible pairwise combinations among the 10 Omniglot samples, generating 45 unique Omniglot pairs. Each of these pairs is combined with the MNIST samples to form test triplets, resulting in a total of $45 \times 500 = 22{,}500$ test samples.

\paragraph{Training Setup}

We also train the model using Mean Squared Error (MSE) loss and optimize it using the Adam optimizer. We use a learning rate of $\textbf{0.0001}$, train for $\textbf{300}$ epochs, with a batch size of $\textbf{256}$.

\subsubsection{Additional experiment: More Omniglot data per class}
Similar as Section~\ref{sec: omniglot}, we also design experiments where Omniglot samples appear more frequently, and the test sample contains only one instance from Omniglot. Specifically, we introduce a variant of the 3-digit synthetic dataset by injecting distractor samples from the Omniglot dataset. To construct the training set, we randomly select 10 samples for each digit label (0–9) from Omniglot (totaling 100 samples) and mix them into the original training data. For the test set, we sample the same 10 Omniglot samples in training dataset per digit label (0–9). Additionally, we sample 200 test digits each from MNIST labels 0–4.

Each final test instance is formed by combining one digit from Omniglot and two digits from MNIST, resulting in test samples that include both known and out-of-distribution (OOD) components. In total, this produces $100 \times 5 \times 100 = 50,000$ test samples.

 Table~\ref{tab:memorization_transposed} show that the model still performs well even on test data that are not limited to long-tail feature combinations.

\begin{table}[ht]
\centering
\caption{Stronger Memorization Behavior of ResNet Variants Under Different Weight Decay Settings}
\label{tab:memorization_transposed}
\renewcommand{\arraystretch}{1.3}
\setlength{\tabcolsep}{10pt}

\begin{tabular}{lcc}
\toprule
\textbf{Model} & \textbf{Memorization (WD = 0)} & \textbf{No Memorization (WD = 0.5)} \\
\midrule
Sum          
    & \makecell{Test: 0.0697 \\ Train: 0.0002} 
    & \makecell{Test: 0.8266 \\ Train: 0.0538} \\
    \midrule
Linear    
    & \makecell{Test: 0.1171 \\ Train: 0.0006} 
    & \makecell{Test: 0.9571 \\ Train: 0.0715} \\
    \midrule
2-Layer 
    & \makecell{Test: 0.0650 \\ Train: 0.0013} 
    & \makecell{Test: 0.6579 \\ Train: 0.0196} \\
\bottomrule
\end{tabular}
\end{table}

\subsection{Compute Requirements}
The computational resources used for all our experiments are as follows: 10 GTX 2080 Ti GPUs with 60 GB of RAM.
For the Experiments on Linear Model, completing all runs takes approximately 3 hours.
For the Simple Task with Composition experiment, training a single model takes about 1 hour.
For the Effect of Memorization experiment, training a single model also takes around 1 hour.

\end{document}